%% file: lukasphd.tex
\documentclass[pdftex,final]{pittetd}
\usewithpatch{graphicx}
\usepackage{amsmath,amsthm}
\InputIfFileExists{amsmatch.pit}{%
            \ClassInfo{pittetd}{Patch for amsmatch loaded}}{%
            \ClassInfo{pittetd}{No patch found for amsmatch}}
\InputIfFileExists{amsthm.pit}{%
            \ClassInfo{pittetd}{Patch for amsthm loaded}}{%
            \ClassInfo{pittetd}{No patch found for amsthm}}
\usepackage{booktabs}
\usepackage{loe}
\usepackage{cite}
\usepackage{algorithm}
\usepackage{dsfont}
\usepackage{amsfonts}%
\usepackage{amssymb}%

\usepackage[T1]{fontenc}

\makeatletter 
\let\my@@font@warning\@font@warning 
\let\@font@warning\@font@info 
\makeatother 
\usepackage{fouriernc} 
\makeatletter 
\let\@font@warning\my@@font@warning 
\makeatother

\usepackage{breakcites}
\title[Michal Valko: PhD. Thesis]{Adaptive Graph-Based Algorithms for Conditional Anomaly Detection and Semi-Supervised Learning}
\author{\textbf{Michal Valko}}
\degree{MSc. Comenius University, Bratislava, 2005}

\date{August 1$^{\rm st}$ 2011}
\year{2011}                         
\keywords{Machine Learning, Anomaly Detection, Semi-Supervised Learning, Conditional Anomaly Detection, Online Learning, Graph-Based Learning, Adaptive Learning, Manifold Learning, Healthcare Informatics, Harmonic Solution, Backbone Graph, Random Walks}
\subject{PhD.  Thesis}
\school{Arts and Sciences}
\input{macros}

\begin{document}

\maketitle
%
\committeemember{\textbf{Milos Hauskrecht, PhD}, Associate Professor, Computer Science }
\committeemember{\textbf{G. Elisabeta Marai, PhD},  Assistant Professor, Computer Science }
\committeemember{\textbf{Diane Litman, PhD}, Professor, Computer Science }
\committeemember{\textbf{John Lafferty, PhD}, Professor, Machine Learning  (Carnegie Mellon University)}
\school{Computer Science Department}
\makecommittee
\copyrightpage       

\begin{abstract}
\input{abstract}

\end{abstract}

\tableofcontents
\listoftables                      
\listoffigures                     
\listofalgorithmsMICHEL
\listofequations

\preface
\input{ack}

%

\include{chap_intro}
\include{chap_bck_related}

\include{chapter_ssl}
\include{chapter_cad}

\include{chap_theory}

\include{chap_experiments}
\include{chap_future}

%
%

\bibliographystyle{apalike}
\bibliography{miki} 

\end{document}

%% file: macros.tex
\newcommand{\Lbd}{L^{\rm bd}}
\newcommand{\dmax}{{d_{\max}}}

\renewcommand{\qed}{\rule{5pt}{5pt}}
\newcommand{\commentout}[1]{}

\newtheorem{lemma}{Lemma}
\newtheorem{proposition}{Proposition}

\newtheorem{definition}{Definition}

\renewcommand{\qed}{\rule{5pt}{5pt}}

\newcommand{\bc}{{\bf c}}

\newcommand{\bh}{{\bf h}}

\newcommand{\bv}{{\bf v}}

\newcommand{\bw}{{\bf w}}

\newcommand{\bx}{{\bf x}}

\newcommand{\by}{{\bf y}}

\newcommand{\bell}{\ell}

\newcommand{\cH}{\mathcal{H}}
\newcommand{\cL}{\mathcal{L}}

\newcommand{\eps}{\varepsilon}

\newcommand{\realset}{\mathbb{R}}

\newcommand{\abs}[1]{\left|#1\right|}

\newcommand{\E}[2]{\mathbb{E}_{#1} \! \left[#2\right]}

\newcommand{\I}[1]{\mathds{1} \! \left\{#1\right\}}

\newcommand{\normw}[2]{\left\|#1\right\|_{#2}}

\newcommand{\set}[1]{\left\{#1\right\}}
\newcommand{\sgn}{\mathrm{sgn}}

\newcommand{\transpose}{^\mathsf{\scriptscriptstyle T}}

\newcommand{\hfs}{\ell^\ast}
\newcommand{\hfsoq}{\ell^{\rm q}}
\newcommand{\hfso}{\ell^{\rm o}}

\newcommand{\W}{W}
\newcommand{\Woq}{W^{\rm q}}
\newcommand{\compactWoq}{\tilde{W}^{\rm q}}

\newcommand{\Wo}{W^{\rm o}}
\newcommand{\Lo}{L^{\rm o}}
\newcommand{\Loq}{L^{\rm q}}
\newcommand{\Do}{D^{\rm o}}
\newcommand{\Doq}{D^{\rm q}}
\newcommand{\Ko}{K^{\rm o}}
\newcommand{\Koq}{K^{\rm q}}
\newcommand{\Zo}{Z^{\rm o}}
\newcommand{\Zoq}{Z^{\rm q}}
\newcommand{\co}{c^{\rm o}}
\newcommand{\coq}{c^{\rm q}}
\newcommand{\ello}{\ell^{\rm o}}
\newcommand{\elloq}{\ell^{\rm q}}
\newcommand{\Rmultiplier}{m}

\newcommand{\eat}[1]{}

\newcommand{\vecy}{\mathbf{y}}

\def\argmin{{\rm argmin}}

\def\E{{\rm E}}

\long\def\comment#1{}

\renewcommand{\O}{\ensuremath{\mathcal{O}}}

\newcommand{\M}{\mathcal{M}}

\newcommand{\pertK}{\ensuremath{\tilde{K}}}

\renewcommand{\ng}{k}
\renewcommand{\pertK}{\Koq}

\newcommand{\naive}{na\"{\i}ve{ }}

%% file: abstract.tex
\noindent
We develop graph-based methods for semi-supervised learning based on label propagation on a data similarity graph. When data is abundant or arrive in a stream, the problems of computation and data storage arise for any graph-based method. We propose a fast approximate online algorithm that solves for the harmonic solution on an approximate graph. We show, both empirically and theoretically, that good behavior can be achieved by collapsing nearby points into a set of local representative points that minimize distortion. Moreover, we regularize the harmonic solution to achieve better stability properties.   

We also present graph-based methods for detecting conditional anomalies and apply them to the identification of unusual clinical actions in hospitals.  Our hypothesis is that patient-management actions that are unusual with respect to the past patients may be due to errors and that it is worthwhile to raise an alert if such a condition is encountered. Conditional anomaly detection extends standard unconditional anomaly framework but also faces new problems known as fringe and isolated points. We devise novel nonparametric graph-based methods to tackle these problems. Our methods rely on graph connectivity analysis and soft harmonic solution. Finally, we conduct an extensive human evaluation study of our conditional anomaly methods by 15 experts in critical care.

%% file: ack.tex

I spent 6 splendid years in Pittsburgh. I want to thank the people who made my journey to PhD possible and enjoyable. First of all, I would like to sincerely thank my advisor Milos Hauskrecht, who made me excited about Machine Learning. With Milos, all my work had a purpose and I was happy  to work on important problems that people care about.  Therefore, the thought of quitting my PhD (so common among my peers) never crossed my mind. Milos stood by me literally from day one, when he came to pick me up at the airport. 

This thesis would not be about graph-based algorithms if not for Brano Kveton, who made me interested in them. He was my mentor during my internships at Intel Research. I am grateful for his friendship and encouragement that made me surpass the expectations I had for myself. Brano is an amazing researcher and he collaborated on many of the semi-supervised learning methods presented here. I would also like to thank my thesis committee, Liz Marai, Diane Litman  and in particular John Lafferty, whose research with Jerry Zhu on harmonic functions  has laid the foundation for this dissertation.

I was privileged to work with Greg Cooper, who should be a role model for any scientist. He has always impressed me with his professionalism by constantly having a big picture in mind. Greg was a part of a larger interdisciplinary group that I had a great opportunity to work in, ranging from biomedical informaticians to clinicians and pharmacists: Roger Day, who recruited me to the Bayesian club while he played tuba in 7 different bands;  Shyam Visweswaran; Amy Seybert; Gilles Clermont; Wendy Chapman; and many others.  I would like to thank our post-doc Hamed Valizadegan and Saeed Amizadeh, with whom I worked during my last year, and also other members of Milos' machine learning lab: Rich Pelikan, Iyad Batal, Shuguang Wang, Quang Nguyen, and Dave Krebs.

During my studies, I had the chance to do two internships with Intel Research, which taught me a lot about research in the industry. Besides Brano, I would like to thank my research collaborators Ling Huang, Ali Rahimi, Daniel Ting from Berkeley, my colleges Georgios Theocharous, Kent Lyons, Jennifer Healey, John Mark Agosta, Nick Carter, and many other researches and interns.

I also want to thank my teachers from the Machine Learning Department at Carnegie Mellon University, where I learned so much about the field, namely John Lafferty, Larry Wasserman, Carlos Guestrin, Geoff Gordon, Eric Xing, and Gary Miller. I am also grateful to my teachers from Pitt and from Comenius University in Bratislava. But my love for math started with my high school teacher, Martin Macko, who taught me mathematics for 8 years in the best way I can imagine. It is because of him that I decided to do science and research. My mom had a role in that decision, too, and it is probably not a coincidence that she went to grad school for cybernetics. 

I am grateful to our administrative staff, who helped to smooth my life at Pitt: Kathy O'Connor, Kathleen Allport, Wendy Bergstein, Loretta Shabatura, Nancy Kreuzer, and Keena Walker. These ladies are running this place and create a feeling like home. Also, I want to thank Phillipa Carter from the Graduate Studies who will remember me as somebody that would stalk her everywhere and would always want to skip the chain of command. 

Besides my studies, my life in Pittsburgh would not be so great if it was not my friends who made my stay here so enjoyable.  I need to mention Tomas Singliar and Zuzana Jurigova, Christopher Hughes, Mihai and Diana Rotaru, Roxana and Vlad Gheorgiou for marvelous times we spent together. I made great friends at Pitt, with whom I would like to keep in touch including Panickos Nephytou, Kiyeon Lee, Ryan Moore, Alexandre Ferreira, Yaw Gyamfi, Samah Mazraani, Weijia Li, Peter Djalaliev, Lory al Moakar, Shenoda Guirguis, Tatiana Ilina,  and also many friends at CMU such as Miro Dudik, Mike Gamalinda, Polo Chau, Kiki Levanti, Michael Papamichael, Jose Gonzales-Brenes, Lucia Castellanos, Mladen Kolar, Stano Funiak, and others.

My time in Pittsburgh was fabulous also due to Pitt Men's Glee Club, where I sang Tenor 2  at many concerts, including the ones on our tours to Italy, Croatia, and Texas. I need to start with Richard Earl Teaster, the director of the group,  who was one of the first people I met upon my arrival to Pittsburgh and over 6 years became one of my best friends. Besides that he has been my private voice teacher for many of those years and raised me from somebody with almost no experience in singing to a singer in this exquisite ensemble. The glee club became a huge part of me, and my life often consisted of research, friends and glee club. Moreover, within the group I made great friends, such as  Mike Pollock, Matt (and Kelly) Keeny, Dexter Gulick, Chad Slyman, Geoff Arnold, Paul Schillenger, Evan Pavloff, Joel Arter, Adam Sloan, Greg Hill, Phil Slane, Tyler Kirland, Ryan McGinnis, Ben Dichter, Josh Niznik, Matt Recker, Matt Cahalan, James Montgomery, Nick Czarnek, Erick Markley, Jared Wilson, Victor Bench, John Moriarty, Mark Ellenberger, Charlie Eichman, Evan McCullough, Evan Williams,  and dozens of others with whom I shared the world of music, including my music teachers: Don Fellows, John Goldsmith, Claudia Pinza, Jim Ferla, Sister Maria Ozah, and Ben Harris.

For the most of my time at Pitt, I lived in a big house a few minutes away from my department. Since we often housed visiting students and scholars, during the five years I probably had more than 50 housemates.  I will always remember the parental caring of Kevin and Elizabeth Leichbach, dancing robotic toys of David Palmer, delicious food of Azzurra Missinato and Kikumi Ozaki, Sera Wang, Sonia Wu, Michael Wasilewski, and many others who lived at 307 Halket Street. Through Kevin and Elizabeth to have I met many nice people, such as  Kirsten Ream, Kate Shaughnessy, Kelly Reed, George and Becky Mazagieros, and Bob Fratto. I made many friends at the Pittsburgh branch of Campus Crusade for Christ. The ones that stand out are Matt Budavich, TJ Laird, Zach Fogel, Joe Rathbun, Justin Cooper, and Evelyn Yarzebinski, who made sure that this dissertation is actually written in English.

I would also like to thank my friends in Slovakia, who were often welcoming me at home during my summer and winter breaks and spent time with me on hikes, travels and music festivals. Notably, I would to thank Michal Rjasko for the long years of great friendship.  
Finally, I am indebted to my family, in particular to my mom Maria, my dad Michal, and my sister Zuzka, who provided me with everything I needed and supported me throughout all my studies. I also thank my extended family, especially my only living grandparent, Anna, who has been praying for me every single day.  Her prayers were definitely heard as God gave me the strength and grace I needed.

%% file: chap_intro.tex
\chapter{Introduction}

\section{Motivation}
If we want people to enjoy the benefits of machine learning,  we should provide them with algorithms that do not require
much training time before they can be useful.  Therefore, we will investigate the algorithms that need only minimal feedback from the users. For example, in semi-supervised learning we assume that only very few examples from the data are labeled
and we try to use the unlabeled examples to learn something about the structure of the data. 
In the area of conditional anomaly detection and in particular in medicine, a traditional
approach is to ask experts to create a set of rules that would raise an alert if an adverse event is encountered. 
Since a manual creation of rules is very time consuming, we would rather like to 
learn what the adverse event might be from the collection of the past data.

In this dissertation, we will take advantage of using a similarity graph as the data representation.
Similarity graphs help us model the relationship between the examples.   
However, graph-based algorithms, such as label propagation, do not scale well beyond several thousand examples.  
We will address this problem by data quantization, where unlike other approaches 
($k$-means, subsampling) we consider the quality of the inference. 
Moreover, we investigate an online learning formulation of semi-supervised learning, which is suitable for adaptive machine
learning systems when the data arrive in a stream. 

Furthermore, we extend graph-based learning to conditional anomaly detection problem and apply it to clinical scenarios.
Traditionally, anomaly detection techniques identify unusual patterns in data. In clinical settings, these may concern identification of unusual patients, unusual patient--state outcomes, or unusual patient-management decisions.  

Our ability to detect unusual events in clinical data may have a tremendous impact on  health care and its quality.  First, the identification of an action that differs from an expected or usual  pattern of care can aid in detection and prevention of the potential medical errors. According to the HealthGrades study (Wall Street Journal on July 27, 2004), medical errors account for 200,000 preventable deaths a year. Second, the identification of anomalous patient responses can help us to identify new promising treatments.  

\begin{figure}
  \centering
  \includegraphics[width=0.7\columnwidth, viewport=50 183 700 500]{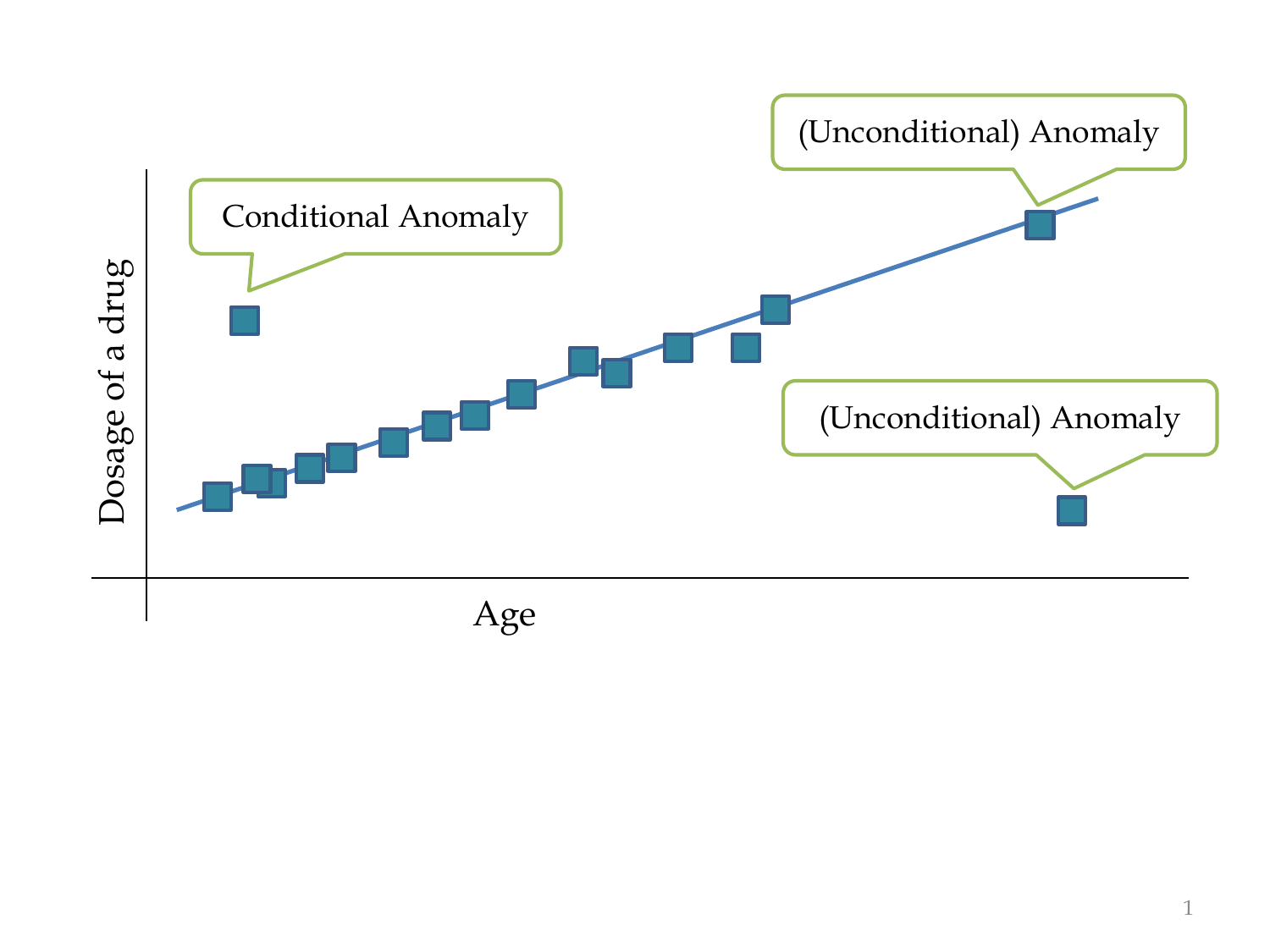}  
  \caption{Conditional vs. unconditional anomalies}
	\label{fig:conditional_unconditional}
\end{figure}

Typical anomaly detection methods used in data analysis are unconditional (with respect to the context) and look for outliers with respect to all data attributes. In the medical domain these methods would identify unusual patients, that is, patients suffering from a less frequent disease or patients with unusual collection of symptoms.  Unfortunately, this does not fit the nature of the problem we want to solve in error detection: the identification of unusual patient management decisions with respect to past patients who suffer from the same or similar condition.  To address this, we are developing a \emph{qualitatively new conditional anomaly detection framework} where the decision event is judged anomalous with respect to the patient's symptoms, state, and demographics.  

The conditional anomaly detection is the problem of  detecting unusual values for a subset of variables given the values of the remaining variables.
Figure \ref{fig:conditional_unconditional} illustrates the concept of conditional anomaly:
Assume that the dosage of a drug is a linear function of the age. 
Now imagine that we have a young patient that was given a higher dosage of a drug (Figure~\ref{fig:conditional_unconditional}, top left).
The amount of dosage is not unusual at all. Indeed, we have other patients with the same or similar dosage.
What is unusual is the dosage with respect to his age;
the patients that have similar ages were given lower dosages. 
We can say that this dosage was \emph{conditionally anomalous} given a patient's age.


\begin{figure}
		\begin{center}
		\includegraphics[width=0.8\columnwidth, clip, viewport=30 300 622 536]{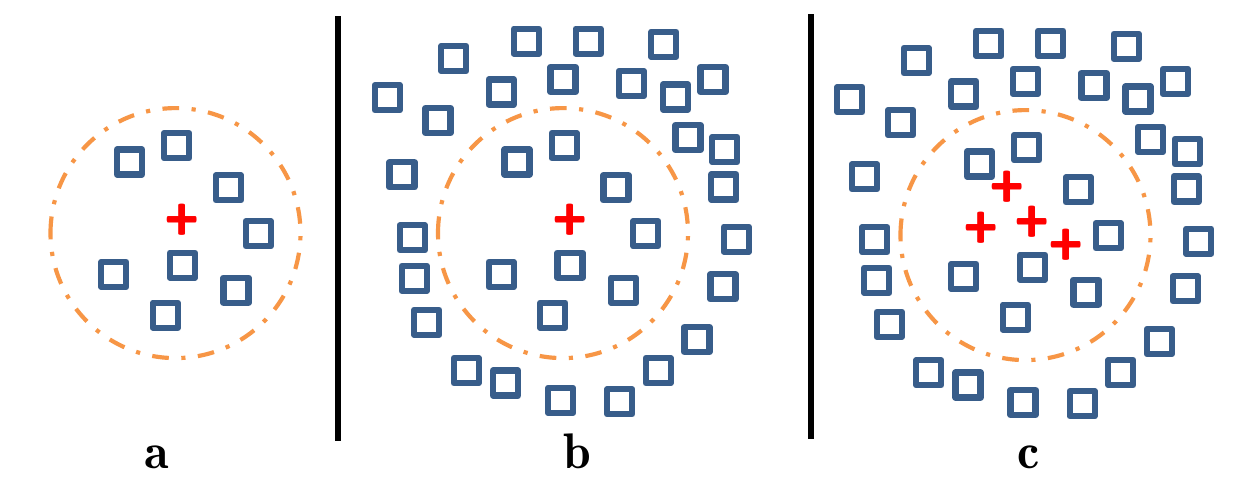}
		\caption{Disadvantages of nearest neighbor approach for conditional anomaly detection}
				\label{fig:wknn_had}
				\end{center}
				\end{figure}

Throughout this dissertation, we build on label propagation on a data similarity graph, 
which exploits the manifold assumption \cite{chapelle2006semi-supervised}. 
Unlike local neighborhood methods based on the nearest neighbors, it respects the structure of the manifold and lets us account for more complex interactions in the data. In other words, while the metric may provide a reasonable local similarity measure, it is frequently inadequate as a measure of global similarity \cite{szummer2001partially}. 
Figure~\ref{fig:wknn_had} illustrates a potential benefit of label propagation, where the goal 
is to detect that the positive (+) example has an anomalous label conditioned on its placement. 
The positive (+) label in \ref{fig:wknn_had}\textbf{b} is more anomalous than the one in \ref{fig:wknn_had}\textbf{a}, but 
nearest neighbor (NN) would consider them equal, because in only considers the points within the displayed circle. 
Moreover, the NN approach would find clustered (+) anomalies in \ref{fig:wknn_had}\textbf{c} normal because it ignores the data beyond the nearest neighbors.

\section{Thesis Statement and Main Contributions}

Although very popular, label propagation on a data similarity graph does not scale well beyond several thousand of examples,
due to the following reasons:

\begin{enumerate}

\item The computation of the similarity matrix and the label propagation are $\Omega(n^2)$ where $n$ is the number of examples. 
Label propagation itself requires the computation of the $n\times n$ matrix inverse or the solution of the system of $n$ linear equations.
\item Current methods that reduce the size of the graph to form an approximate back-bone graph do not link
the construction of this graph to the final inference task.
\item  Despite the usefulness of the online semi-supervised learning paradigm for practical
adaptive algorithms, there is not much success in applying this paradigm to realistic problems,
especially when data arrive at a high rate.
\end{enumerate}

\noindent Next, the problem of conditional anomaly detection could be approached  by 

\begin{enumerate}
	\item extending one-class (unconditional) anomaly methods (Section~\ref{sec:ClassOutlierApproach})
	\item classification and claiming misclassified examples as conditionally  anomalous (Section~\ref{sec:DiscriminativeApproach})
\end{enumerate}

\noindent
Both of these approaches suffer from the problems of isolated and fringe points described in Section~\ref{sec:ConditonalAnomalyDetection}.
%
In this dissertation  we develop the methodology to address these problems.
We take a graph-based approach, because it is non-parametric, incorporates the manifold assumption, and
can also easily take advantage of unlabeled data. We present the following main contributions:

\begin{itemize}
\item We show how to combine max-margin and semi-supervised learning to \textbf{max-margin graph cuts} semi-supervised learning (Section~		\ref{sec:MMGC}).
	\item We show how to compute label propagation on a graph and the centroids of a backbone graph \textbf{jointly}. (Section~\ref{sec:LargeScaleSemiSupervisedLearningWith})
	\item We propose the \textbf{online harmonic function} solution and show how to compute its approximation efficiently (Section~\ref{sec:OnlineLearningWithQuantizedGraphs}).	
	\item We prove \textbf{performance bounds} for our online algorithm in a semi-supervised setting on quantized graphs (Section~\ref{sec:AnalysisOfOnlineSSLOnQuantizedGraphs}).
	  \item We introduce  non-parametric \textbf{graph-based} methods and show how they can \emph{handle unconditional outliers} (Section~\ref{sec:Regularization}).
	\item We show how a \textbf{soft harmonic solution} on data similarity graphs can be used for conditional anomaly detection (Section~\ref{sec:ConditionalAnomalyDetectionWithSoftHarmonicFunctions}).

\end{itemize}

\noindent	In addition, we test the conditional anomaly detection methods by comparing them to the evaluations conducted with 
a panel of physicians and show the benefits of our methods (Section~\ref{sec:PilotStudyIn2009}).
Based on the aforementioned contributions, we claim the following:

\begin{center}
\textbf{\large \sffamily	Our graph-based methods can perform online semi-supervised
learning with a constant per-step update and provable performance guarantees. 
Moreover, they can detect conditional anomalies and filter unconditional anomalies.}
\end{center}

\section{Organization of the Dissertation}

\begin{itemize}
	\item In Chapter \ref{sec:BackgroundAndRelatedWork}, we outline the related work in 
anomaly detection (Section \ref{sec:RelatedWorkInAnomalyDetection}),
semi-supervised learning (Section \ref{sec:RelSemiSupervisedLearning}),
and graph quantization (Section \ref{sec:RelatedWorkInOnlineQuantization}).
\item Chapter~\ref{sec:SemiSupervisedLearning} presents new approaches for semi-supervised learning  and the online semi-supervised learning
	(Section~\ref{sec:OnlineLearningWithQuantizedGraphs}).
  \item Chapter \ref{sec:ConditonalAnomalyDetection} presents novel methods for 
	conditional anomaly detection. 
	\item Chapter~\ref{sec:TheoreticalAnalysis} presents the theoretical analysis of the methods from Chapter~\ref{sec:SemiSupervisedLearning}
	and Chapter~\ref{sec:ConditonalAnomalyDetection}.	
	In particular, it presents the analysis of max-margin graph cuts (Section~\ref{sec:TheoryMaxMarginGraphCuts}) and 
the analysis of the online semi-supervised learning on quantized graphs (Section~\ref{sec:AnalysisOfOnlineSSLOnQuantizedGraphs}).
\item Chapter \ref{sec:Experiments} presents the experimental results 
on various synthetic and real-world datasets,
notably the face recognition video datasets and the medical datasets from University of Pittsburgh Medical Center.
\end{itemize}

\noindent Parts of this dissertation have previously appeared in 
\cite{hauskrecht2007evidence-based,hauskrecht2010conditional,valko2008conditional,valko2008distance,valko2010feature,valko2010online,valko2011conditional,kveton2010semi-supervised,kveton2010online}.

\bigskip

%% file: chap_bck_related.tex
\chapter{Related Work}
\label{sec:BackgroundAndRelatedWork}

In this chapter we review the relevant work on graph quantization, semi-supervised learning, and anomaly detection.

\input{sec_related_quantization.tex}
\input{sec_related_ssl.tex}

\input{sec_related_anomaly.tex}

%% file: sec_related_quantization.tex
\section{Related work in Graph Quantization}
\label{sec:RelatedWorkInOnlineQuantization}

Given $n$ data points and a typical graph construction method, 
the exact computation of the harmonic solution has 
space and time complexity of $\Omega(n^2)$ in general 
due to the construction of an $n \times n$ similarity matrix.
Furthermore, exact computation requires 
an inverse operation on an $n \times n$ similarity matrix
which takes $O(n^3)$ in most practical implementations\footnote{The complexity can be further improved to $\O(n_u^{2.376})$ by using 
the Coppersmith-Winograd algorithm.}.
For applications with large data size (e.g., exceeding thousands), 
the exact computation or even storage of the harmonic solution becomes infeasible, 
and problems with $n$ in the millions are entirely out of reach.

An influential line of work in the related area of graph partitioning 
approaches the computation problem by reducing the size of the graph, collapsing 
vertices and edges, partitioning the smaller graph, and then uncoarsening 
to construct a partition for the original graph~\cite{hendrickson1995multilevel,karypis1999fast}. 
Our work is similar in spirit but provides a theoretical analysis for a particular kind of coarsening and
uncoarsening methodology.

Our aim is to find an effective \emph{data preprocessing} technique 
that reduces the size of the data and coarsens the graph
\cite{madigan2002likelihood-based,mitra2002density-based}.
\noindent 
There are two types of approaches widely used 
in practice for data preprocessing: 

\begin{enumerate}
	\item 
 data quantization based methods, which aim to replace the 
original data set with a small number of high quality 
`representative' points that capture relevant 
structure~\cite{goldberg2008online,yan2009fast};
\item
 Nystr\"{o}m method based methods, which aim to explore 
low-rank matrix approximations to speed up the matrix 
operations~\cite{fowlkes2004spectral}).
\end{enumerate}
While it is useful to define such preprocessors,
it is not satisfactory to simply reduce the size of similarity 
matrix to speed up the matrix calculations.
so that the related matrix operation can be performed in a 
desired time frame.

 What is  needed is an explicit connection between the amount of data 
reduction that is achieved by a preprocessor and the subsequent 
effect on the classification error. 
Some widely used data preprocessing 
approaches are based on data quantization, which replaces the 
original data set with a small number of high quality 
centroids that capture relevant 
structure~\cite{goldberg2008online,yan2009fast}.

Such approaches are often heuristic 
and do not quantify the relationship between the 
noise induced by the quantization and the final prediction risk.
An alternative approach to the computation problem is the 
\emph{Nystr\"{o}m method}, a low rank matrix approximation method that allows faster 
computation of the inverse. This method has been widely adopted, 
particularly in the context of approximations for SVMs \cite{drineas2005nystr$o$m,williams2001using,fine2001efficient} 
and spectral clustering \cite{fowlkes2004spectral}.

However, since the Nystr\"{o}m method uses interactions between
subsampled points and \emph{all} other data points, storage of all
points is required and thus, it becomes unsuitable for 
infinitely streamed data.
To our best knowledge, we are not aware of any 
online version of Nystr\"{o}m method
that could process an unbounded amount of streamed data.
Additionally, in an offline setting, the approaches based on the Nystr\"{o}m method have inferior performance to the quantization-based methods, if both of them are given the same time budget for computation. This was shown in an early work on the spectral clustering \cite{yan2009fast}. 

Using incremental $k$-centers \cite{charikar1997incremental}
which has provable worst case bound on the distortion, we quantify
the error introduced by quantization. Moreover, using regularization we 
show that the solution is stable, which gives the desired generalization
bounds. 

An interesting method is introduced in \cite{aggarwal2003framework}, which 
addresses context drift, or \emph{evolution} in the data streams. Clusters can 
emerge and die based on approximated recency.
But again this method is a heuristic and comes with no guarantees on the quality of the quantization. 

%% file: sec_related_ssl.tex
\section{Related Work in Semi-Supervised Learning (SSL)}
\label{sec:RelSemiSupervisedLearning}

\cite{zhu2003semi-supervised} extend their previous work \cite{zhu2003semi-supervised} to Gaussian processes
by no longer assuming that soft labels are fixed to the observed data. Instead they assume 
the data generation process $\bf x \to \bf y \to \bf t$, where  $\bf y \to \bf t$ is a noisy label generation 
with process modeled by a sigmoid. The posterior is not Gaussian and the authors use Laplace 
approximation to compute $p(\bf y_L, \bf y_U | \bf t_L)$. 
They discuss using different kernels for the learning of graph weights, such as the $\tanh$-weighted graph,
and optimize it either by maximizing the likelihood of labeled data or maximizing the alignment to labeled data. 

\cite{fergus2009semi-supervised} use the convergence of the eigenvectors of the normalized Laplacian
to eigenfunctions of weighted Laplace-Beltrami operators to scale graph-based SSL
to millions of examples. Assuming that the underlying distribution has a product form
(which is a reasonable assumption after a PCA projection), they estimated the density
using histograms for each dimension independently.  Therefore, they only needed to solve
$d$ generalized eigenvector problems on the backbone graph, where $d$ is 
the dimension of the data. Moreover, they only used 
the $k$ smallest eigenvectors and subsequently needed to solve only one $k \times k$
least squares problem.

\subsection{Semi-Supervised Max-Margin Learning}
\label{sec:SemiSupervisedMMLearning}

Most of the existing work on semi-supervised max-margin learning can be viewed as manifold regularization of SVMs \cite{belkin2006manifold} or semi-supervised SVMs with the hat loss on unlabeled data \cite{bennett1999semi-supervised}. The two approaches are reviewed in the rest of this section. Let $l$ and $u$ be the sets of labeled and unlabeled data respectively. Assume that  $f$ is a function from some \emph{reproducing kernel Hilbert space (RKHS)} $\cH_K$, and $\normw{\cdot}{K}$ is the norm that measures the complexity of $f$.

\subsubsection{Semi-supervised SVMs}
\label{sec:S3VM}
Semi-supervised support vector machines with the \emph{hat loss} $\widehat{V}(f, \bx) = \max\{1 - \abs{f(\bx)}, 0\}$ on unlabeled data \cite{bennett1999semi-supervised}:
\begin{align}
  \min_f \ \sum_{i \in l} V(f, \bx_i, y_i) +
  \gamma \normw{f}{K}^2 +
  \gamma_u \sum_{i \in u} \widehat{V}(f, \bx_i)
  \label{eq:S3VM}
\end{align}
compute max-margin decision boundaries that avoid dense regions of data. The hat loss makes the optimization problem non-convex. As a result, it is hard to solve the problem optimally and most of the work in this field has focused on approximations. A comprehensive review of these methods was done by \cite{zhu2008semi-supervised}.

In comparison to semi-supervised SVMs, learning of max-margin graph cuts (\ref{eq:MMGC}) is a convex problem. The convexity is achieved by having a two-stage learning algorithm. First, we infer labels of unlabeled examples using the regularized harmonic function solution, and then we minimize the corresponding convex losses.

\subsubsection{Manifold regularization of SVMs}
\label{sec:MR}

Manifold regularization of SVMs \cite{belkin2006manifold}:
\begin{align}
  \min_{f \in \cH_K} \ \sum_{i \in l} V(f, \bx_i, y_i) +
  \gamma \normw{f}{K}^2 +
  \gamma_u {\bf f}\transpose L {\bf f},
  \label{eq:MR}
\end{align}
where ${\bf f} = (f(\bx_1), \dots, f(\bx_n))$, computes max-margin decision boundaries that are smooth in the feature space. The smoothness is achieved by the minimization of the regularization term ${\bf f}\transpose L {\bf f}$. Intuitively, when two examples are close on a manifold, the minimization of ${\bf f}\transpose L {\bf f}$ leads to assigning the same label to both examples.

\subsection{Online Semi-Supervised Learning}
\label{sec:OnlineSemiSupervisedLearning}

The online learning formulation of SSL is suitable for 
\emph{adaptive} machine learning systems. In this setting, a few labeled examples are provided in advance 
and set the initial bias of the system while unlabeled examples are gathered
online and update the bias continuously.
In the online setting, learning is viewed as a repeated game against a potentially
adversarial nature. At each step $t$ of this game, we observe an example
$\bx_t$, and then \mbox{predict its} label $\hat{y}_t$. The challenge of
the game is that after it started we do not observe the true label $y_t$. Thus, if we want
to adapt to changes in the environment, we have to rely on indirect forms
of feedback, such as the structure of data.

Despite the usefulness of this paradigm for practical 
adaptive algorithms~\cite{grabner2008semi-supervised,goldberg2008online}, 
there is not much success in applying 
this paradigm to realistic problems,
especially when data arrive at a high rate such as in video applications.
\cite{grabner2008semi-supervised} applies online 
semi-supervised boosting to object tracking,
but uses a heuristic method to greedily label the unlabeled examples.
This method learns a binary classifier, where one of the
classes explicitly models outliers. In comparison, our approach is
multi-class and allows for implicit modeling of outliers. 
The two algorithms are compared empirically in Section~\ref{sec:OnlineQuantizedSSLExperiments}. 
%
\cite{goldberg2008online} develop an online version of manifold regularization of SVMs.
 Their method learns max-margin
decision boundaries, which are additionally regularized by the manifold.
Unfortunately, the approach was never
applied to a naturally online learning problem, such as adaptive face recognition.
Moreover, while the method is sound in principle, no theoretical guarantees are provided.

\cite{goldberg2011oasis:} combine semi-supervised learning and active learning
in a unified framework. Unlike our work which builds on manifold assumption, they
exploit cluster (or gap) assumption, \cite{chapelle2006semi-supervised}. 
The authors present a Bayesian model for this learning setting and
use a sequential Monte Carlo approximation for efficient online Bayesian update.

%% file: sec_related_anomaly.tex
\section{Related Work in Anomaly Detection (AD)}
\label{sec:RelatedWorkInAnomalyDetection}

\subsection{Unconditional Anomaly Detection}
\label{sec:UncoditionalAnomalyDetection}

In this section we review previous approaches for traditional anomaly detection.
Traditional anomaly detection looks for examples that deviate from the rest of the data 
if they are not expected from some underlying model.
A comprehensive review of many anomaly detection approaches can be found in \cite{markou2003novelty} and \cite{markou2003noveltya}.

\cite{scholkopf1999estimating} proposed the one-class SVM, that only needs positive (or non-anomalous) examples to learn the margin.
The idea is that the space origin (zero) is treated as the only example of  the `negative' class. 
In that way, the learning essentially estimates the support of the distribution.
The data that do not fall into this support have negative projections and can be considered anomalous.

\cite{eskin2000anomaly} assumes that the  number of anomalies is significantly lower than the number of normal cases. The author 
defines a distribution for the data as a mixture of majority ($M$) and anomalous distribution($A$): $D = (1-\lambda)M  + \lambda A$. 
He then iteratively partitions the dataset into the majority set $M_t$ and the anomalous set $A_t$. At the beginning $A_0 = \emptyset, M_0 = D$.
At each step $t$, it is determined whether the case $x_t$ is an anomaly. $x_t$ is considered anomalous if its displacement to the 
anomaly set ($M_t = M_{t-1} \setminus \{x\}$ and $A_t = A_{t-1} \cup \{x\}$) increases the log-likelihood $LL_{t-1}$ of the dataset by a predefined threshold $c$. 
If $LL_t - LL_{t-1} \le c$, $x_t$ remains marked as a normal case ($M_t = M_{t-1}$ and $A_t = A_{t-1}$). 
At the end, we get the final partition of $D$ into a normal set and an anomalous set.
	
The curse of high dimensionality is of concern in \cite{aggarwal2001outlier}. The authors search for
the abnormal lower dimensional projections by dividing  each attribute into the equi-depth (the same range of $f$ cases) ranges. 
Assuming statistical independence, each $k$-dimensional sub--cube in this
grid should contain the fraction of $f^k$ of total cases. The authors then search for $k$-dimensional
sub-cubes, where the presence of points is significantly lower than expected. As the brute force search
for projections is computationally infeasible, the authors use genetic algorithms to perform the search. 

In \cite{breunig2000lof:}, the authors expand $k$-distance (distance to the $k$ nearest neighbor) to get the so--called 
reachability distance for the object $O$ with respect to $p$ as $\mathrm{reach\_dist}(O,p) = \max(k\_\mathrm{distance}(p), \mathrm{dist}(O,p))$.
Using this smoothed distance, they define the \emph{local outlier factor} (LOF), which expresses the degree 
of the considered object being an outlier with respect to its neighborhood. LOF depends on $MinPts$, the number
of nearest points to define a local neighborhood.  Although this is data-dependent, the authors propose to calculate
the maximum LOF for $MinPts$ within a reasonable range (which was $30$--$50$ in their experiments) and threshold.
The bigger the LOF the more anomalous the object is. The authors give bounds for LOF and prove they are tight for important cases. For example, LOF is close to one for objects within the clusters. A useful property of LOFs is that it works well with cluster of different densities.

\cite{lazarevic2005feature} applies a bagging approach to improve the performance of local 
(nearest neighbor) anomaly detectors.
In every round of the algorithm a subset of features is selected and a local anomaly detector 
(such as LOF \cite{breunig2000lof:}) is applied.  Every round produces a scoring of all data, which is at the end
merged to get a final score using either breadth-first or cumulative-sum approach.

\cite{syed2010unsupervised} use a minimum enclosing ball approach to detect anomalies
in  clinical data similar to the data that we use in this work. The authors learn a minimum volume 
hypersphere that encloses the data for all patients.  The anomaly score is defined
as the distance from the center. They showed that this unsupervised approach 
performed similarly to the supervised approaches with prelabeled examples (Section~\ref{sec:ApproachesWithPrelabeledAnomalies}).


\cite{akoglu2010oddball:} performs anomaly detection on weighted graphs when
nodes do not follow discovered power laws between the number of neighbors and
the properties of the local neighborhood subgraph (total number of edges, total weight, and the 
principal eigenvalue of the weighted adjacency graph). The outlier score is 
defined as a distance to the fitting line. To account for the points
that fit the line but are far away from all other examples, the authors
combine their methods with a density based method, such as LOF \cite{breunig2000lof:}.

\cite{he2007graph-based} is a semi-supervised method that propagates the labels until a heuristic stopping criterion is reached.
Moreover, it uses unlabeled data to better estimate the prior in the case that the empirical distribution 
is skewed from the true distribution. 

\cite{moonesignhe2006outlier} use random walks to detect outliers.
They build their similarity matrix either by cosine similarity or by a number
of shared neighbors after thresholded cosine similarity. Anomalous 
nodes are identified as those with low connectivity. Connectivity 
is calculated using the Markov chain with the similarity as a 
transition matrix. Starting from the uniform connectivity assigned at
step 0, connectivity is spread according to the similarity matrix 
until convergence.

\subsubsection{Approaches with prelabeled anomalies} 
\label{sec:ApproachesWithPrelabeledAnomalies}

\cite{chawla2003smoteboost:} combine a boosting scheme with SMOTE (Synthetic Minority Over-sampling TEchnique). 
They do that in every iteration of smoothing. For continuous data, SMOTE generates a new sample by sampling a data point 
and one of its $k$ nearest neighbors and taking a random point on segment between them in the space. For discrete data, a new
point is created as a majority vote of the $k$ nearest neighbors for each feature. The authors show improvement with this method over just 
smoothing, just SMOTE and applying SMOTE once before the boosting for a minority class. The SMOTEboost approach generally improves recall but does not cause significant degradation in precision, thus improving the F-measure.

\cite{ma2003online} use support vector regression to learn the underlying temporal model (time event is modeled as a linear regression function of the previous events). A surprise is defined as the value outside the tolerance range.  Given the fixed length of the event, a probability of 
number of surprises actually happing is calculated.  When that is too small, an anomaly is declared. 

\subsubsection{Rare category detection}

\cite{pelleg2005active} aim to detect rare category which presumably correspond to the interesting anomalies in a pool-based
active learning framework. After a human expert labels some examples, the Gaussian mixture is fit to the data. Different hinting 
heuristics are then used to propose the new examples to be labeled by the expert. The authors propose \emph{interleave} heuristics which takes 
one example per mixture a time with low fit probability, not taking to account any mixture weight. This heuristic appears to be superior to the low-likelihood one (suggesting examples with the overall low fit probability) and ambiguous one (suggesting examples with uncertain class membership). 

\cite{he2008nearest-neighbor-based} attempt to detect rare categories in the data, assuming that examples from the 
rare category are self-similar, tightly grouped, and we have some knowledge about the class priors. 
The nearest neighbor based statistic is used to actively sample points corresponding to points with the maximum change in the local density. 

\subsection{Conditional Anomaly Detection (CAD)}
\label{sec:ConditionalAnomalyDetection}

We start with a short summary of our work. In \cite{hauskrecht2007evidence-based},
we introduce the concept of the conditional anomaly detection (CAD) and show its potential
for the medical records. For each case, we take its nearest neighbors and learn a Bayesian 
belief network (BBN) or a \naive Bayes model (NB) from them. The cases with low class-conditional probabilities
were deemed anomalous.  We discovered that while for BBN it was better 
to use all the cases for learning, for a more restricted NB a small neighborhood was beneficial.
The main problem with learning the structure of BBN is that it does not scale beyond a couple dozen features.
In \cite{valko2008distance}, we show the benefit of distance metric learning for the selection of closest cases.
We also use the softmax model \cite{mccullagh1989generalized} to calculate the class-conditional probability
of a probabilistic one nearest neighbor (similar to \cite{goldberger2004neighbourhood}) for this purpose.
In \cite{valko2008conditional}, we introduce a new anomaly measure based on the distance from 
the hyperplane learned by SVM \cite{vapnik1995nature} and perform the initial experiments
on the PCP (Section~\ref{sec:MARS}) dataset. 
We later conduct an extensive human evaluation study with a panel of 15 physicians in
 \cite{hauskrecht2010conditional}. Aside from our work which will be reviewed in more detail in later chapters, we also describe other early work along these lines.

\cite{valizadegan2007kernel} use the kernel based weighted nearest neighbor approach to jointly
estimate the probabilities of the examples being mislabeled. The joint estimation is posed as an
optimization problem and solved with Newton methods. A regularization is needed to 
avoid one of the classes deemed to be completely mislabeled.

In \cite{song2007conditional}, a user defines a partitioning of the features into two groups: the \textbf{indicator} features --- those that can be directly indicative of an anomaly and the \textbf{environmental} features, which cannot, but can influence the indicator features. The indicator ($y$) and the environmental ($x$) variables are modeled separately both as the mixtures of multivariate Gaussians ($y \sim U$ and $x \sim V$). A mapping function is defined between those mixtures as the probability of choosing a Gaussian for an indicator variable given an environmental one $p(V_j|U_i)$. The authors assume the following generative process for a datapoint $\langle x,y \rangle$: If $x$ is a sample from $U_i$ then a die is tossed, according to  $p(V_j|U_i)$, to determine which Gaussian from $V$ will produce $y$ and subsequently $y$ is produced. Since it is not known, which $U_i$ the $x$ was sampled from, the likelihood of $f_{CAD}(y|\Theta,x)$ is computed as a weighted sum over Gaussians $U_i$. The model is learned via EM, either directly --- optimizing all parameters at once (named as DIRECT), optimizing first parameters for Gaussians and then for the mapping function (FULL), or optimizing the indicator Gaussians, the environmental Gaussians and the mapping function separately (SPLIT).

The work on cross-outlier detection \cite{papadimitriou2003cross-outlier} is  also related to CAD.
Papadimitriou and Faloutsos~\cite{papadimitriou2003cross-outlier}
defined the notion of the cross-outliers as examples that seem normal when considering the distribution of examples from the assigned class, but are abnormal when considering the samples from the other class. For each sample $(\bx,y)$, they compute two statistics based on the similarity of $\bx$ to its neighborhood from the samples belonging to class $y$ and samples not belonging to class $y$. An example is considered anomalous if the first statistic is significantly smaller than the second statistic. Unfortunately, the method 
is not very robust to fringe points (Figure~\ref{fig:fringe}) \cite{papadimitriou2003cross-outlier}.

%

In his dissertation, \cite{das2009detecting} aims to detect several kinds of individual 
and group anomalies. The approaches relevant to this work are 
\textit{conditional} and \textit{marginal} methods for 
individual record anomalies, ignoring rare values. 
For the data $t$ and the subsets of attributes ($A,B,C$) he computes the ratios of the form 
$\frac{P(A,B)}{P(A)P(B)}$ for the marginal method and 
$\frac{P(A,B|C)}{P(A|C)P(B|C)}$ for the conditional method. 
The goal is to find unusual occurrences of the attribute values.
The records that have low ratios are considered anomalous.
The normalization of the joint probabilities by 
the marginal provabilities takes care of rare records, because
those also have small marginals. The dissertation describes several 
speedups to compute the ratios for exponentially many subgroups 
to allow the methods to scale up.

%% file: chapter_ssl.tex
\chapter{Semi-Supervised Learning}
\label{sec:SemiSupervisedLearning}

Semi-supervised learning (SSL) is a field of machine learning that studies learning from both labeled and unlabeled examples. This learning paradigm is suitable for real-world problems, where data is often abundant but the resources to label them are limited. As a result, many semi-supervised learning algorithms have been proposed in the past few years \cite{zhu2008semi-supervised}. The closest to this work are semi-supervised support vector machines (S3VMs) \cite{bennett1999semi-supervised}, manifold regularization of support vector machines (SVMs) \cite{belkin2006manifold}, and harmonic function solutions on data adjacency graphs \cite{zhu2003semi-supervised}. 

SSL is very closely related to transductive inference (Chapters 24 and 25 in~\cite{chapelle2006semi-supervised}).
In both approaches we have access to the unlabeled examples that we can take advantage of.  
Traditionally in SSL, we want to use the unlabeled data to learn a function $f$ that can be later used to classify previously unseen examples. 
We present one such approach where we combine the harmonic solution on a data similarity graph with a max-margin inference in Section~\ref{sec:MMGC}.
In other scenarios, we may not need to learn such a function. In this case, we can focus just on classifying the unlabeled examples at hand. 
Even then, the prediction on unseen examples is still possible using out of sample extension methods \cite{bengio2004out-of-sample}.

In this dissertation we study graph-based methods for SSL, because they can model complex interactions between the examples.
However, graph-based methods (such as label propagation) do not scale beyond several thousand examples unless we use parallel architectures.
One of the solutions is to reduce the number of nodes and create a representative back-bone graph. 
Typically, one can downsample the data or use some quantization technique (such as $k$-means) 
to come up with a smaller graph. Yet these approaches do not consider the quality
of semi-supervised learning inference for this backbone graph. In Section~\ref{sec:LargeScaleSemiSupervisedLearningWith} we introduce 
a new objective function that lets us incorporate the quality of inferences into the
construction of the backbone graph.

Furthermore, in Section~\ref{sec:OnlineLearningWithQuantizedGraphs} we investigate an online learning formulation of SSL,  which is suitable for  \emph{adaptive} machine learning systems.  In this setting, a few labeled examples are provided in advance  and set the initial bias of the system, while unlabeled examples are gathered online and update the bias continuously. In the online setting, learning is viewed as a repeated game against a potentially adversarial nature. At each step $t$ of this game, we observe an example $\bx_t$ and then \mbox{predict its} label $\hat{y}_t$. The challenge of the game is that after it started we do not observe the true label $y_t$. Thus, if we want to adapt to changes in the environment, we have to rely on indirect forms of feedback, such as the structure of data. When data arrive in a stream, the dual problems of computation and data storage arise for any SSL method. We therefore propose a fast approximate online SSL algorithm that solves for the harmonic solution on an approximate graph. 

For all our methods, we introduce the regularized harmonic solution (Section~\ref{sec:reg HFS}) to achieve better stability properties.
With such regularization, we can control the confidence of labeling unlabeled examples and discount the outliers in the data.
In the following, we start with some needed background in graph theory and then continue with the just mentioned approaches for SSL.

\input{sec_background.tex}

\section{Max-Margin Graph Cuts}
\label{sec:MMGC}

In this part we present our algorithm that combines the harmonic solution with max-margin learning to learn a classifier $f$ from some \emph{reproducing kernel Hilbert space (RKHS)}.  In the scenarios, where we do not want to store all the examples in the dataset and perform the inference transductively when the new data arrive, we may prefer to learn such $f$ instead.
Our semi-supervised learning algorithm involves two steps. First, we obtain the regularized harmonic function solution $\bell^\ast$ \eqref{eq:closed-form reg HFS}. The solution is computed from the system of linear equations $(L_{uu} + \gamma_g I) \bell_u = W_{ul} \bell_l$. This system of linear equations is sparse when the data adjacency graph $W$ is sparse. Second, we learn a max-margin discriminator, which is conditioned on the labels induced by the harmonic solution. The optimization problem is given by:
\begin{align}
  \min_{f \in \cH_K} & \quad \sum_{i : \abs{\ell_i^\ast} \geq \eps}
  \!\!\! V(f, \bx_i, \sgn(\ell_i^\ast)) + \gamma \normw{f}{K}^2
  \label{eq:MMGC} \\
  \textrm{s.t.} & \quad
  \bell^\ast = \arg\min_{\bell \in \realset^n}
  \bell\transpose (L + \gamma_g I) \bell \nonumber \\
  & \quad \textrm{s.t.} \ \ell_i = y_i \textrm{ for all } i \in l;
  \nonumber
\end{align}
where $V(f, \bx, y) = \max\{1 - y f(\bx), 0\}$ denotes the \emph{hinge loss}, $\cH_K$, and $\normw{\cdot}{K}$ is the norm that measures the complexity of $f$.
	
The training examples $\{\bx_i\}$ in our problem are selected based on our confidence into their labels. When the labels are highly \emph{uncertain}, which means that $\abs{\ell_i^\ast} < \eps$ for some small $\eps \geq 0$, the examples are excluded from learning. Note that as the regularizer $\gamma_g$ increases, the values $\abs{\ell_i^\ast}$ decrease towards 0 (Figure \ref{fig:HFS}), and the $\eps$ thresholding allows for smooth interpolations between supervised learning on labeled examples and semi-supervised learning on all data. The trade-off between the regularization of $f$ and the minimization of hinge losses $V(f, \bx_i, \sgn(\ell_i^\ast))$ is controlled by the parameter $\gamma$.

Due to the representer theorem \cite{wahba1999support}, the optimal solution $f^\ast$ to our problem has a special form:
\begin{equation*}
  f^\ast(\bx) =
  \sum_{i : \abs{\ell_i^\ast} \geq \eps} \alpha_i^\ast k(\bx_i, \bx),
\end{equation*}
where $k(\cdot, \cdot)$ is a Mercer kernel associated with the RKHS $\cH_K$. Therefore, we can apply the kernel trick and optimize rich classes of discriminators in a finite-dimensional space of ${\bf \alpha} = (\alpha_1, \dots, \alpha_n)$. Finally, note that when $\gamma_g = \infty$, our solution $f^\ast$ corresponds to supervised learning with SVMs.

In some aspects, manifold regularization (Section~\ref{sec:MR}) is similar to max-margin graph cuts. In particular, note that its objective (\ref{eq:MR}) is similar to the regularized harmonic function solution (\ref{eq:reg HFS}). Both objectives involve regularization by a manifold, ${\bf f}\transpose L {\bf f}$ and $\bell\transpose L \bell$, regularization in the space of learned parameters, $\normw{f}{K}^2$ and $\bell\transpose I \bell$, and some labeling constraints $V(f, \bx_i, y_i)$ and $\ell_i = y_i$. Since max-margin graph cuts are learned conditionally on the harmonic function solution, the problems (\ref{eq:MMGC}) and (\ref{eq:MR}) may sometimes have similar solutions. A necessary condition is that the regularization terms in both objectives are weighted in the same proportions, for instance, by setting $\gamma_g = \gamma / \gamma_u$. We adopt this setting when manifold regularization of SVMs is compared to max-margin graph cuts in Section \ref{sec:MaxMarginGraphCutsExperiments}.

\section{Joint Quantization and Label Propagation}
\label{sec:LargeScaleSemiSupervisedLearningWith}

Graph-based semi-supervised learning methods do not scale well to large data sets, mainly because their inference procedures require the computation of the inverse of an $n\times n$ matrix, where $n$ is the size of the underlying graph that is equal to the size of the dataset. A typical solution to address this problem is to downsize the graph to a smaller \emph{backbone} graph and perform the inference on this reduced representation. The key challenge is to decide on what elements should be included in the backbone graph. Typical solutions include sub-sampling, clustering, or a Nystr\"om approximation. However, these techniques do not consider the quality of semi-supervised learning inferences for this backbone graph. We introduce a new objective function that lets us incorporate the quality of inferences into the construction of the backbone graph. 

To reduce the computational complexity of \eqref{eq:soft HFS}, we replace all $n$ nodes of the
similarity graph $G$ with a set $C = [{\bf c}_1, ... {\bf c}_m, ..., {\bf c}_{m+k}]\transpose$ of $(m+k)\ll n$ representative nodes to create a backbone graph $\tilde{G}$. Notice that $ {\bf c}_i={\bf x}_i$ for $i=1,...,m$. We want to find $\tilde{G}$ such that it is a good representation of $G$ in constructing the manifold. Let us assume for a moment that we do know the best set of examples $\tilde{G}$. Then, Equation~\eqref{eq:soft HFS} becomes:

\begin{equation}[Quantized unconstrained regularization]
  \bell^\star = \argmin_{\bell \in \realset^n} \
  (\bell - \by)\transpose F^C (\bell - \by) + \bell\transpose L^C \bell.
	\label{eq:quantized ssl}
\end{equation}

\noindent In general, $C \in \realset^{(m+k)\times d}$ can be obtained by fixing the first $m$ labeled examples and choosing $k$ unlabeled points by subsampling the dataset, clustering or other means of quantization. As mentioned earlier, the common approach is to select the set $C$ first and only then perform the inference~\eqref{eq:quantized ssl}. In this work, we will perform both the quantization and the inference jointly by adding the quantization penalty
of the elastic nets to the objective function in~\eqref{eq:quantized ssl}. As we will see in Section~\ref{sec:AnalysisOfJointQuantizationAndLabelPropagation}, this simple joint approach will produce interesting properties. The new objective function is:
\begin{equation}[Joint quantization and soft harmonic solution]
  [\bell^\star, \{c_j\}_{j=m+1}^{m+k}]  =   \argmin_{\bell \in \realset^n, \{c_j\}_{j=m+1}^{m+k}} \
   (\bell - \by)\transpose  F^C (\bell - \by) + \bell\transpose L^C \bell \
	\gamma_q \left( \frac{(m+k)^2}{n} \sum_{x_i \in K_j} ||c_j - x_i||^2\right) 
	\label{eq:quantized kmeans}
\end{equation}	

\noindent where $K_j$ is the set of examples for which $c_j$ is the nearest centroid and $\gamma_q$ is a cost parameter for the quantization penalty. We emphasize that we automatically consider all labeled examples as a fixed part of $C$ and the optimization to learn the representing centroids are affected by the position of labeled examples. As we will see in Section~\ref{sec:AnalysisOfJointQuantizationAndLabelPropagation}, the above objective function has an interesting property: when optimized to find the centroids, it learns the principle manifold.

Adding the quantization penalty makes the objective function non-convex and hence difficult to optimize. To minimize \eqref{eq:quantized kmeans}, we propose to use an alternating optimization approach~\cite{bezdek2002some}, where we alternate between
1) \emph{label propagation} --- inferring labels $l$ on $\tilde{G}$, and
2) \emph{quantization} --- selecting the set $C$ for $\tilde{G}$.
Starting with random seeds of unlabeled examples (or the output of $k$-means algorithm) as the initial centroids, we iterate the following steps.

\subsection{Label Propagation}
\label{sec:LabelPropagation}
\noindent  Once $C$ is fixed, the labels can be computed by solving the following convex optimization problem:

\begin{equation*}
  \bell^\star = \argmin_{\bell \in \realset^n} \
  (\bell - \by)\transpose F^C (\bell - \by) + \bell\transpose L^C \bell	
	\label{eq:label propagation step}
\end{equation*}

\noindent This problem has a closed form solution: $\bell^\star  = ((F^C)^{-1}L^C+I)^{-1}\by$ (Section~\ref{sec:reg HFS}).

\subsection{Quantization}
\label{sec:Quantization}
\noindent To learn the centroids $C$ when $\bell$ is fixed, first notice that:
	\[
\bell\transpose L^C \bell=\sum_{i, j}\left(\frac{l_i}{n_i} - \frac{l_j}{n_j}\right)^2 W_{ij}^C
\]

\noindent where $(n_i=1)_{i=1}^{m+k}$ for unnormalized graph Laplacian $L=D^C-W^C$~\cite{luxburg2007tutorial} and $(n_i=\sqrt{d_i})_{i=1}^{m+k}$ for normalized graph Laplacian $L = I-D^{-1/2}WD^{-1/2}$~\cite{zhou2004learning}. Considering that $(\bell - \by)\transpose F^C (\bell - \by)$ in (\ref{eq:quantized kmeans}) is not dependent on $C$, we have the following optimization problem to learn $C$ if we use the widely used Gaussian kernel\footnote{It is straightforward to apply similar derivation for other similarity functions.} as the similarity function $W$:

\begin{equation}[Quantization step]
	\{c_j\}_{j=m+1}^{m+k} = \ 
	\argmin_{\{c_j\}_{j=m+1}^{m+k}} \
	\sum_{i, j}\left(\frac{l_i}{n_i} - \frac{l_j}{n_j}\right)^2 \exp\left(\frac{-||c_j-c_i||^2}{2\sigma^2}\right) \ 
	+ \gamma_q \left( \frac{(m+k)^2}{n}\sum_{i \in K_j} \
	||c_j - x_i||^2\right)
	\label{eq:quantization step}
\end{equation}

\noindent  To learn the centers by optimizing (\ref{eq:quantization step}), we first approximate the exponential function using Taylor expansion\footnote{Notice we could also use the convexity of the exponential function to obtain an upper bound and have a more rigorous derivation. However, the results are very similar and we omit the details to simplify the description.}:


\begin{equation*}
\exp\left(\frac{-||c_j-c_i||^2}{2\sigma^2}\right) \approx 1 - \frac{||c_j-c_i||^2}{2\sigma^2},
\end{equation*}

\noindent This results in the following optimization problem:

\begin{equation}[Approximate quantization step]
  \{c_j\}_{j=m+1}^{m+k} =  \argmin_{\{c_j\}_{j=m+1}^{m+k}}
	\frac{-1}{(m+k)^2}\sum_{i , j}\left( \frac{(l_i - l_j)^2}{2\sigma^2} \right)  ||c_j-c_i||^2 + \
	\frac{\gamma_q}{n}  \sum_{i \in K_j} ||c_j - x_i||^2 	
	\label{eq:quantization step2}
\end{equation}


\noindent Taking derivatives of \eqref{eq:quantization step2} with respect to $(c_j)_{j=m+1}^{m+k}$ and
setting them to zero, we obtain  the following system of $k$ linear equations for $j=m+1,\dots,m+k$ :

\begin{equation}[Solution for the quantization step]
	 \sum_{i} c_i  \frac{(l_i-l_j)^2}{(m+k)^2\sigma^2}  + \
	c_j \left(  2\gamma_q\frac{|K_j|}{n} - \sum_{i} \frac{(l_i-l_j)^2}{(m+k)^2\sigma^2}  \right) \
	=  \frac{2\gamma_q}{n}\sum_{i\in K_j}x_i,
\label{eq:sle for centroids}	
\end{equation}
where $|K_j|$ is the number of examples assigned to the center $c_j$. In order to optimize the system of linear equations in~\eqref{eq:sle for centroids}, we iterate between optimizing the centroids and the assignment of the examples to the centroids, a strategy similar to $k$-means.

\noindent Notice that the labeled examples $c_1,..,c_m$ affect learning the centroids by 

\begin{enumerate}
	\item absorbing some of the unlabeled examples that are close to the labeled examples and do not need an unlabeled examples representative. In other words, in the quantization process, we remove those examples that are very close to the labeled examples;
	\item controlling the position of unlabeled centers through the first term in (\ref{eq:sle for centroids}). 
\end{enumerate}

\noindent Algorithm~\ref{fig:QSSLPM} outlines the \emph{elastic-joint} algorithm.

\begin{algorithm}  
\small{
  \begin{tabbing}
    \hspace{0.1in} \= \hspace{0.1in} \= \hspace{0.1in} \= \hspace{0.1in} \= \kill
    {\bf Inputs:} \\		
    \> examples $\{\bx_i\}_{i=1}^n$ \\
		\> labels $l$, such that $l_i = \pm 1$ for labeled and $l_i = 0$ for unlabeled examples\\
		\> $k$: number of centroids  and size of the $\tilde{G}$ \\
		\> regularizer $\gamma_q$ for the quantization \\
    {\bf Algorithm (elastic-joint):} \\
		\> randomly initialize the set of $k$ centroids $C$ \\
		\> {\bf do until} convergence \\
		\>\> infer labels on the graph: \\
		\>\>\> build a quantized data similarity graph $\tilde{G}$ on $C$\\
		\>\>\> compute $L^C$ as the graph Laplacian of $\tilde{G}$\\
    \>\>\> $\bell^\star =   \argmin_{\bell \in \realset^n} (\bell - \by)\transpose F^C (\bell - \by) + \bell\transpose L^C \bell $  \\
    \>\> perform quantization \\
		\>\>\> calculate $C$ by solving the following system of linear equations for $j=m+1,\dots,m+k$:
		\\ 
		\>\>\> $\sum_{i} c_i\frac{(l_i-l_j)^2}{(m+k)^2\sigma^2} + \
	c_j \left(  2\gamma_q\frac{|K_j|}{n} - \sum_{i} \frac{(l_i-l_j)^2}{(m+k)^2\sigma^2}  \right) \
	=  \frac{2\gamma_q}{n}\sum_{i\in K_j}x_i$ \\
    {\bf Outputs:} \\
    \> predictions $\hat{y} = |\bell^\star|$
  \end{tabbing}
	}
  \caption{Quantized semi-supervised learning with principal manifolds}
  \label{fig:QSSLPM}
\end{algorithm}

\subsection{Final Inference Scheme for Unlabeled Examples}
\label{sec:appox}
After solving the objective function in~\eqref{eq:quantized kmeans} using Algorithm~\ref{fig:QSSLPM}, we need to infer the labels of unlabeled examples from the labels of the centroids. The common approach in the literature~\cite{chapelle2006semi-supervised,delalleau2005efficient} is to use the weighted $k$-NN. The label of any new example $x$ (including the unlabeled examples) is  computed as follows:
\begin{align}
\hat{y}=\frac{\sum_{i=1}^{m+k}W'(x,c_i)\bell_i}{\sum_{i=1}^{m+k}W'(x,c_i)}
\end{align}
where $W'$ is a symmetric edge weighting function, such as Gaussian kernel~\cite{chapelle2006semi-supervised}. We only use 1-NN  for the inference, as we found that it produces the best results for the proposed method and the baselines.

\subsection{Time Complexity}
\label{sec:complexity}

\begin{figure}	
\begin{center}
\includegraphics[width=0.5\columnwidth,clip]{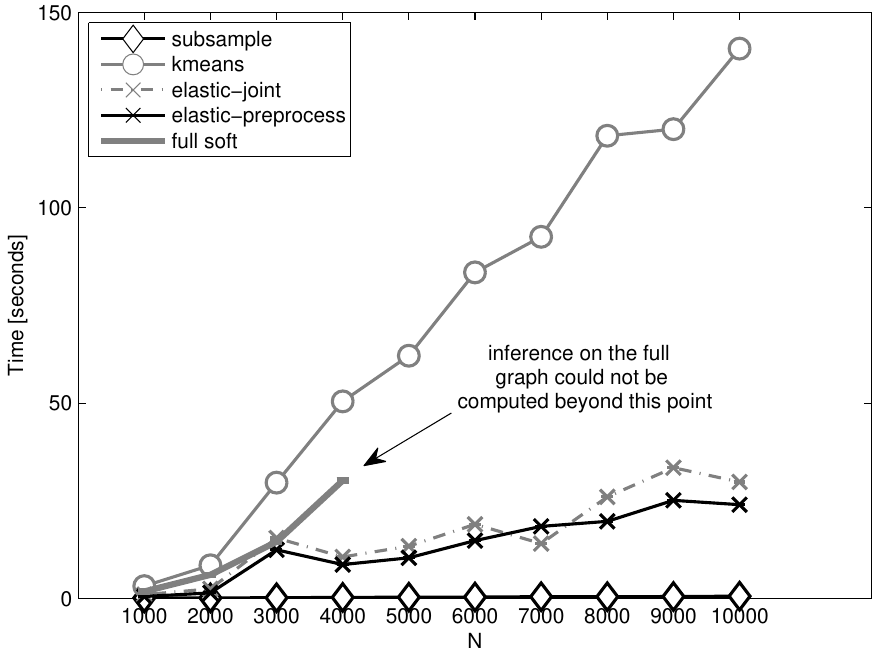}
\caption{Running time for different methods on the SecStr dataset}%
\label{fig:time} 
\end{center} 
\end{figure}

Suppose Algorithm~\ref{fig:QSSLPM} takes $T$ iterations to converge. Each iteration has two optimization steps: 1) running SSL, and 2) constructing the backbone graph. Each run of the SSL algorithm needs the computation of the inverse of a matrix of size $n+k$ that takes $O\big((m+k)^3\big)$. Each run of the backbone graph construction iterates between two steps 1) assigning examples to centroids, and 2) solving the system of linear equations~\eqref{eq:quantization step2}. The second step is the major step and takes $O\big(k^3\big)$ which results in $O\big(tk^3\big)$ time complexity for $t$ iterations.  Since $(m+k)^3\geq tk^3$ for even a small number of labeled examples, the complexity of the proposed method is $O\big(T(m+k)^3\big)$. In our experiments, we found that $T$ is usually very small; i.e. less than $10$. Figure~\ref{fig:time} shows the running time of different methods on 
SecStr dataset \cite{chapelle2006semi-supervised} by changing the total number of unlabeled examples from $1000$ to $10000$. 
Different quantization approaches are described in Section~\ref{sec:JointQuantizationAndHarmonicSolutionExperiments}. We fixed the number of labeled examples to $m=10$, the number of centroids to $k = 100$, and varied the number of sampled points $N$ from the original 83679 examples. This plot clearly shows that the proposed method scales very well with the large number of examples. Also note that we used the $k$-means function in MATLAB, which seems extremely slow.

\section{Online Semi-Supervised Learning With Quantized Graphs}
\label{sec:OnlineLearningWithQuantizedGraphs}

The regularized HS (Section \ref{sec:reg HFS}) is
an offline learning algorithm. This algorithm can be made na\"{i}vely online
by taking each new example, connecting it to its neighbors,
and recomputing the HS.
Unfortunately, this na\"ive implementation has computational complexity
$O(t^3)$ at step $t$, and computation becomes infeasible
as more examples are added to the graph.

To address the problem, we use \emph{data quantization} \cite{gray1998quantization}
and substitute the vertices in the graph with a smaller set of $k$ distinct centroids. 
The resulting $t \times t$
similarity matrix $\W$ 
has many identical rows/columns. We will show 
that the \emph{exact} HS using $\W$ may be reconstructed from
a much smaller $k \times k$ matrix $\compactWoq$, where $\compactWoq_{ij}$
contains the similarity between the $i^{th}$ and $j^{th}$ centroids,
and a vector $\bv$ of length $k$,
where $v_i$ denotes the number of points collapsed into the $i^{th}$ centroid. 
To show this, we introduce the matrix $\Woq = V \compactWoq V$
where $V$ is a diagonal matrix containing the counts in $\bv$ on the diagonal.


\begin{proposition}
\label{prop:compact HFS} 
The harmonic
solution \eqref{eq:reg HFS} using $\W$ can be computed compactly as
\begin{align*}
  \hfsoq =
  (\Loq_{uu} + \gamma_g V)^{-1} \Woq_{ul} \bell_l,
\end{align*}
where $\Loq$ is the Laplacian of $\Woq$.
\end{proposition}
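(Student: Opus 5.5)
The plan is to write the quantized $t \times t$ similarity matrix $\W$ in factored form and then show that the regularized harmonic solution \eqref{eq:reg HFS} is constant on each group of collapsed points, so the $t$-dimensional linear system reduces exactly to a $k$-dimensional one.

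\textbf{Setup.} Let $R \in \{0,1\}^{t \times k}$ be the assignment matrix: $R_{ai} = 1$ iff point $a$ is collapsed into centroid $i$. Then $\W = R\, \compactWoq R\transpose$, $R\transpose R = V$, and $R\transpose \W R = V \compactWoq V = \Woq$. I will assume, as in the construction, that each labeled example keeps its own centroid, so that $R$ is block diagonal with blocks $R_l = I$ and $R_u$.

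\textbf{Step 1: the solution is constant on groups.} The regularized harmonic solution is the unique solution of
\begin{align*}
(L_{uu} + \gamma_g I)\,\bell_u = \W_{ul}\,\bell_l .
\end{align*}
Uniqueness holds because $L_{uu} + \gamma_g I$ is positive definite for $\gamma_g > 0$, and it also follows from the strict convexity of the objective. Two unlabeled points $a, b$ assigned to the same centroid have identical rows and columns in $\W$ (up to the diagonal) and hence the same degree. Swapping their coordinates therefore maps solutions to solutions, and by uniqueness $\ell_a = \ell_b$. Hence $\bell_u = R_u \hfsoq$ for some $\hfsoq \in \realset^{k_u}$.

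\textbf{Step 2: reduce the system.} Substitute $\bell_u = R_u \hfsoq$ and left-multiply the system by $R_u\transpose$, i.e., sum the equations within each group. The terms map as follows:
\begin{itemize}
\item $R_u\transpose \W_{uu} R_u = \Woq_{uu}$;
\item $R_u\transpose R_u = V_u$, so the regularization term becomes $\gamma_g V$;
\item $R_u\transpose \W_{ul} = \Woq_{ul}$, using $R_l = I$.
\end{itemize}
The key computation is the degree term. An unlabeled point in group $i$ has degree $d_a = \sum_j \compactWoq_{ij} v_j$. Summing over the $v_i$ points of the group gives
\begin{align*}
\bigl(R_u\transpose D_{uu} R_u\bigr)_{ii} = v_i \sum_j \compactWoq_{ij} v_j = \sum_j \Woq_{ij},
\end{align*}
which is exactly the degree $\Doq_{ii}$ of $\Woq$. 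Therefore $R_u\transpose L_{uu} R_u = \Loq_{uu}$, and we obtain
\begin{align*}
(\Loq_{uu} + \gamma_g V)\,\hfsoq = \Woq_{ul}\,\bell_l .
\end{align*}
The matrix $\Loq_{uu} + \gamma_g V$ is positive definite because $V$ has positive diagonal, so this system has a unique solution, namely the claimed formula.

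\textbf{Step 3: close the argument.} The reduced system is only a necessary condition, so the order of the argument matters. Step 1 shows the true solution has the form $R_u \hfsoq$, and Step 2 shows such an $\hfsoq$ must solve the reduced system. Uniqueness of the reduced solution then identifies it with the true solution restricted to centroids.

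\textbf{Expected obstacle.} The main bookkeeping difficulty is the diagonal. Points collapsed together are at distance zero, so $\W$ contains within-group edges and possibly self-loops $\compactWoq_{ii}$. I will check that self-loop weights cancel in $L = D - \W$, so including or excluding them changes neither $L$ nor $\Loq$. I will also check that the degree computation above counts the within-group edges consistently on both sides; this is the step most likely to require care.
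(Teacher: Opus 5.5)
Your proposal is correct and follows the same route as the paper. Your Step 1 is the paper's symmetry argument that collapsed vertices receive equal values, and your Step 2 (summing the equations within each group via $R_u\transpose$) is the algebraic form of the paper's electric-circuit merging step, where identical vertices are combined so that parallel conductances add to give $\Woq = V\compactWoq V$ and their sinks add to give $\gamma_g V$; your version just makes the uniqueness, degree bookkeeping, and self-loop cancellation explicit.
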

\noindent {\bf Proof:} Our proof uses the electric circuit
interpretation of a random walk \cite{zhu2003semi-supervised}. More
specifically, we show that $\W$ and $\Woq$ represent identical
electric circuits and, therefore, their harmonic solutions are the
same.

In the electric circuit formulation of $\W$, the edges of the graph
are resistors with the conductance $w_{ij}$. If two vertices $i$ and
$j$ are identical, then by symmetry, the HS must assign the same value
to both vertices, and we may replace them with a single vertex. Furthermore,
 they correspond to the ends of resistors in
parallel. The total conductance of two resistors in
parallel is equal to the sum of their conductances. Therefore, the two
resistors can be replaced by a single resistor with the conductance of
the sum. A repetitive application of this rule gives $\Woq = V
\compactWoq V$, which yields the same HS as $\W$.
In Section \ref{sec:reg HFS}, we showed that the regularized HS can be
interpreted as having an extra sink in a graph. Therefore, when two
vertices $i$ and $j$ are merged, we also need to sum up their sinks. A
repetitive application of this rule yields the term $\gamma_g V$ in
our closed-form solution. \qed

We note that Proposition \ref{prop:compact HFS} may be applied
whenever the similarity matrix has identical rows/columns, not just
when quantization is applied. However, when the data points are 
quantized into a fixed number
of centroids $\ng$, it shows that we may compute the HS for the $t^{th}$ point
in $O(\ng^3)$ time. 
Since the time complexity of computation on the quantized graph
is independent of $t$, it gives a suitable algorithm for online learning.
%

We now describe how to perform incremental quantization with provably 
nearly-optimal distortion.


\begin{algorithm}  
  \small{
	\centering
  \begin{tabbing}
    \hspace{0.1in} \= \hspace{0.1in} \= \hspace{0.1in} \= \hspace{0.1in} \= \kill
    {\bf Inputs:} \\
    \> an unlabeled example $\bx_t$ \\
    \> a set of centroids $C_{t - 1}$ \\
    \> vertex multiplicities $\bv_{t - 1}$\\
    {\bf Algorithm:} \\
    \> if $(\abs{C_{t - 1}} = \ng + 1)$ \\
    \>\> $R \gets m R$ \\
    \>\> greedily repartition $C_{t - 1}$ into $C_t$ such that: \\
    \>\>\> no two vertices in $C_t$ are closer than $R$ \\
    \>\>\> for any $\bc_i \in C_{t - 1}$ exists $\bc_j \in C_t$ such that $d(\bc_i, \bc_j) < R$ \\
    \>\> update $\bv_t$ to reflect the new partitioning \\
    \> else \\
    \>\> $C_t \gets C_{t - 1}$ \\
    \>\> $\bv_t \gets \bv_{t - 1}$ \\
    \> if $\bx_t$ is closer than $R$ to any $\bc_i \in C_t$ \\
    \>\> $\bv_t(i) \gets \bv_t(i) + 1$ \\
    \> else \\
    \>\> $\bv_t(\abs{C_t} + 1) \gets 1$ \\
		\>\> $C_t(\abs{C_t} + 1) \gets\bx_t$ \\		
    \> build a similarity matrix $\compactWoq_t$ over the vertices $C_t$ \\
    \> build a matrix $V_t$ whose diagonal elements are $\bv_t$ \\
    \> $\Woq_t = V_t \compactWoq_t V_t$ \\
    \> compute the Laplacian $L^{\rm q}$ of the graph $\Woq_t$ \\
    \> infer labels on the graph: \\
    \>\> $\hfsoq[t]  \gets
		\arg\!\min_{\bell} 
		\bell\transpose (\Loq + \gamma_g V_t) \bell$ \\
    \>\> $\textrm{s.t.} \ \ell_i = y_i$
    for all labeled examples up to time $t$ \\
    \> make a prediction $\hat{y}_t = \sgn(\hfsoq_t[t])$ \\
    {\bf Outputs:} \\
    \> a prediction $\hat{y}_t$ \\
    \> a set of centroids $C_t$ \\
    \> vertex multiplicities $\bv_t$ 
  \end{tabbing}	
  }
  \vspace{-0.07in}
  \caption{Online quantized harmonic solution}
  \label{fig:online quantized HFS}
\end{algorithm}

\subsection{Incremental k-centers}
\label{sec:algorithm}

We make use of the \emph{doubling algorithm} for incremental $k$-center
clustering~\cite{charikar1997incremental} which assigns points
to centroids in a near optimal way. In particular, it is
a $(1+\epsilon)$-approximation 
with cost measured by the maximum quantization error over all points.
In Section \ref{quantization_bound}, we show
that under reasonable assumptions, the quantization error
goes to zero as the number of centroids increases.


The algorithm
of~\cite{charikar1997incremental} 
maintains a set of centroids $C_t = \set{\bc_1, \bc_2,
 \dots}$ such that the distance between any two vertices in $C_t$ is
at least $R$ and $\abs{C_t} \leq \ng$ at the end of each iteration. 
For each new \mbox{point $\bx_t$}, if its
distance to some $\bc_i \in C_t$ is less than $R$, the point is
assigned to $\bc_i$. Otherwise,
 the distance of $\bx_t$ to $\bc_i \in
C_t$ is at least $R$ and $\bx_t$ is added to the set of centroids
$C_t$. If adding $\bx_t$ to $C_t$ results in
$\abs{C_t} \! > \! \ng$, the scalar $R$ is
doubled and $C_t$ is greedily repartitioned such that no two vertices
in $C_t$ are closer than $R$. The doubling of $R$ also ensures 
that $\abs{C_t} \! < \! \ng$.

Pseudocode of our algorithm is given in Algorithm~\ref{fig:online quantized HFS}.
We make a small modification to the original quantization algorithm
in that, instead of doubling $R$, we multiply it with some $\Rmultiplier > 1$.
This still yields a $(1+\epsilon)$-approximation algorithm
as it still obeys the invariants given in Lemma 3.4 in 
~\cite{charikar1997incremental}.
We also 
maintain a vector of multiplicities ${\bf v}$,
which contains the number of vertices that each centroid represents.
At each time step, the HS is calculated using the updated quantized
graph, and a prediction is made.

The incremental $\ng$-centers algorithm also has the advantage that
it provides a variable $R$, which may be used to bound the maximum
quantization error.
In particular, at any point in time $t$, the distance
of any example from its centroid is at most
$R\Rmultiplier/(\Rmultiplier-1)$.
To see this, consider the following: 
As the new data arrive we keep increasing $R$ by multiplying it by some $\Rmultiplier>1$. 
But for any point at any time, the centroid assigned to a vertex is 
at most $R$ apart from the previously assigned centroid, which is 
at most $R/\Rmultiplier$ apart from the centroid assigned before, etc. 
Summing up, at any time, any point is at most
\begin{displaymath}
R + \frac{R}{\Rmultiplier} + \frac{R}{\Rmultiplier^2} + \cdots = R\left(1 + \frac{1}{\Rmultiplier} + \frac{1}{\Rmultiplier^2} + \cdots \right) = 
\frac{R\Rmultiplier}{\Rmultiplier-1}	
\end{displaymath}
apart from its assigned centroid, where $R$ is the most recent one.

\section{Parallel Multi-Manifold Learning}
\label{sec:ParallelMultiManifoldLearningMethods}

Most of the SSL methods that exploit the manifold assumption (such as graph-based SSL methods)
assume that the data lie on a single manifold. A more plausible 
setting, however, is that the data lie on a mixture of manifolds \cite{goldberg2009multi-manifold}.
For example, in digit recognition each digit lies on its
own manifold in the feature space \cite{goldberg2009multi-manifold}.

In this work, we use the multi-manifold idea from a different perspective.
We assume no or little interaction between the manifolds and learn 
the manifolds \emph{in parallel} to achieve a speedup in computation.
The speedup is accomplished in two ways:

\begin{enumerate}
	\item Assuming independence between the manifolds, we can solve several smaller problems instead.
	For example, in the ideal case (Section~\ref{sec:ParallelMultiManifoldLearning}), the similarity matrix
	will consist of $b$ block-diagonal blocks of the equal size. Therefore, to approximate 
	the harmonic solution (HS) 
	on a graph with $n$ nodes which takes $\O(n^3)$ time, we can instead solve $b$ HS problems 
	on $b$ graph with $n/b$ nodes, each taking only $\O((n/b)^3)$ and achieve a polynomial speedup.
		
	\item Using multi-core and/or multi-processor architectures, we can solve the smaller problems in 
	parallel and achieve additional, potentially linear speedup, up to the number of cores.
\end{enumerate}

\noindent Assuming the independence of the manifolds may come with a cost in accuracy
when the manifolds are not independent.
We study this theoretically in Section~\ref{sec:ParallelMultiManifoldLearning}
and empirically in Section~\ref{sec:ParellelSSL}, where we measure the 
trade-off between the computational speedup and decrease in prediction accuracy. 

%% file: sec_background.tex
\section{Graphs as Data Models}
\label{sec:GraphAsADataModel}

Many of the methods presented here are based on a graph representation of the data.
Having some data, we create a undirected weighted graph $G = (V,E)$ with set of
vertices $V$ and set of edges $E$, associating every data
point with a graph vertex.  Next, we define a 
non-negative weight function $V \times V \to \realset$ such that $w_{ij} = w_{ji}$.
In the case that $\{i,j\} \notin E(G)$, $w_{ij} = 0$. 
Let the similarity matrix $W = \{w_{ij}\}$ denote a matrix of all edge weights which encode
how similar the vertices are to each other. 
We define degree $d_i$ of the vertex $i$ as the sum of all edges coinciding with $i$:

\[d_i = \sum_j w_{ij} \]

\noindent and the diagonal matrix $D$ with $D_{ii} = d_i$.
Let volume ${\rm vol}(G)$ of graph $G$ be the sum of  all its weights:

\[{\rm vol}(G) = {\rm vol}(W) = \sum_i d_i = \sum_{i,j} w_{ij} \]

\noindent  Now, let us define an unnormalized graph Laplacian $L$ as

\[L(G) = L(W) = D - W\]

\noindent  and the symmetric normalized graph Laplacian as 

\[L_{\rm sym}(G) = L_{\rm sym}(W) = D^{-\frac12}LD^{-\frac12} = I - D^{-\frac12}WD^{-\frac12}.\]

\noindent It can be easily shown that for any $\bh \in \realset^n$:

\begin{equation*}
\bh \transpose L \bh = \frac12 \sum_{ij} w_{ij}(h_i - h_j)^2.
\end{equation*}

\subsection{Stationary Distribution of a Random Walk}
\label{sec:RandomWalkComputation}

Here we describe a way to compute a stationary distribution of 
a (non-absorbing) random walk on the data similarity graph in a closed form.
 Let us 
define the random walk as follows: In every step of a random walk, we jump from a node to its neighbors, proportionally  to their mutual weight:
$$P(\bx_i \to \bx_{j})  = \frac{W_{ij}}{\sum_{j'}W_{ij'}}$$
Let $D$ be the diagonal matrix with the sum of weights $W$ on the
diagonal: $D_{ii} = \sum_{j'}W_{ij'}$ for all $i$. The transition matrix of
this random walk is $P=D^{-1}W$. The approximation we use here is that we estimate the class conditional probability with
the proportion of the time that the random walk spends in the evaluated
example \cite{lee2010spectral}. We can calculate this proportion from the
stationary distribution of this random walk \cite{chung1997spectral}.  Let $s$
be a row vector of the stationary distribution of a random walk with the transition matrix $P$.
For a stationary distribution $s$, it has to hold that $sP = s$. 
Note
that $\mathbf{1}D =
\mathbf{1}W$,  where ${\mathbf 1}$ is all one row vector. It is easy to verify
that
\begin{equation}[Closed form solution for a stationary distribution of the random walk.]
s = \frac{{\mathbf 1}W}{\mathrm{vol}(W)}
\label{eq:rw_closed}
\end{equation}
\noindent satisfies the definition: 
\begin{align*}
s P  &=  \frac{{\mathbf 1}WP}{\mathrm{vol}(W)}= \frac{{\mathbf
1}DP}{\mathrm{vol}(W)} = \frac{{\mathbf 1}DD^{-1}W}{\mathrm{vol}(W)} =
\frac{{\mathbf 1}W}{\mathrm{vol}(W)}= s
\end{align*}
The equation \eqref{eq:rw_closed} enables us to compute the
stationary distribution in a closed form.

%
%
%
%

\section{Regularized Harmonic Function}
\label{sec:reg HFS}
In this section, we build on the harmonic solution
\cite{zhu2003semi-supervised}. Moreover, we show how to regularize it
such that it can interpolate between semi-supervised learning (SSL) on labeled
examples and SSL on all data.
A standard approach to SSL on graphs is to minimize
the quadratic objective function
\begin{align}
  \min_{\bell \in \realset^n} & \quad
  \bell\transpose L\bell \label{eq:HFS} \\
  \textrm{s.t.} & \quad
  \ell_i = y_i \textrm{ for all } i \in l; \nonumber
\end{align}
where $\bell$ denotes the vector of predictions. 
Using the notation from Section~\ref{sec:GraphAsADataModel}, this problem has a closed-form 
solution:
\begin{align*}
  \bell_u = (D_{uu} - W_{uu})^{-1} W_{ul} \bell_l,
\end{align*}

\noindent which satisfies the \emph{harmonic property} $\ell_i = \frac{1}{d_i}
\sum_{j \sim i} w_{ij} \ell_j$ ({$i \sim j$ denotes that $i$ neigbors~$j$}), and therefore is commonly known as the
\emph{harmonic solution}.

\noindent Since the solution can be also computed as:
\begin{align*}
  \bell_u = (I - P_{uu})^{-1} P_{ul} \bell_l,
\end{align*}
it can be viewed as a product of a random walk on the graph $W$ with the transition matrix $P = D^{-1} W$. The probability of moving between two arbitrary vertices $i$ and $j$ is $w_{ij} / d_i$, and the walk terminates when the reached vertex is labeled. 
Therefore, the harmonic solution is a form of \emph{label propagation} on the data similarity graph.
Each element of the solution is given by:
\begin{align*}
  \ell_i
  \ = & \ \ (I - P_{uu})_{iu}^{-1} P_{ul} \bell_l \nonumber \\
  \ = & \ \
  \underbrace{\sum_{j: y_j = 1} (I - P_{uu})_{iu}^{-1}
  P_{uj}}_{p_i^{(+1)}} -
  \underbrace{\sum_{j: y_j = -1} (I - P_{uu})_{iu}^{-1}
  P_{uj}}_{p_i^{(-1)}} \nonumber \\
  \ = & \ \ p_i^{(+1)} - p_i^{(-1)},
\end{align*}
where $p_i^{+1}$ and $p_i^{-1}$ are probabilities by which the walk starting from the vertex $i$ ends at vertices with labels $+1$ and $-1$, respectively. Therefore, when $\ell_i$ is rewritten as $\abs{\ell_i} \sgn(\ell_i)$, $\abs{\ell_i}$ can be interpreted as a \emph{confidence} in assigning the label $\sgn(\ell_i)$ to the vertex $i$. The maximum value of $\abs{\ell_i}$ is 1, and it is achieved when either $p_i^{+1} = 1$ or $p_i^{-1} = 1$. The closer the confidence $\abs{\ell_i}$ is to 0, the closer the probabilities $p_i^{+1}$ and $p_i^{-1}$ are to 0.5, and the more \emph{uncertain} the label $\sgn(\ell_i)$ is.

We propose to control the confidence of labeling by regularizing the Laplacian $L$ as $L + \gamma_g I$, where $\gamma_g$ is a
scalar and $I$ is the identity matrix. Similarly to 
(\ref{eq:HFS}), the corresponding problem
\begin{align}
  \min_{\bell \in \realset^n} & \quad
  \bell\transpose (L + \gamma_g I) \bell \label{eq:reg HFS} \\
  \textrm{s.t.} & \quad
  \ell_i = y_i \textrm{ for all } i \in l; \nonumber
\end{align}
can be computed in a closed form
\begin{align}
  \bell_u = (L_{uu} + \gamma_g I)^{-1} W_{ul} \bell_l.
  \label{eq:closed-form reg HFS}
\end{align}
and we will refer to it as \emph{regularized} HS.
It can be also interpreted as a random walk on the graph $W$ with an extra
sink. At every step, a walk at node $x_i$ may terminate at the sink with probability
$\gamma_g / (d_i + \gamma_g)$ where $d_i$ is the degree of the current node 
in the walk . Therefore, the scalar $\gamma_g$
essentially controls how the `confidence' $\abs{\ell_i}$ of labeling
unlabeled vertices decreases with the number of hops from labeled
vertices. The proposed regularization will essentially drive the confidence of distant vertices to zero.

\subsection{Soft Harmonic Solution}
\label{sec:SoftHarmonicSolution}

A related problem to \eqref{eq:HFS} is when the constraints representing the fit to the data are enforced in a soft manner
\cite{cortes2008stability}.  In such a case, we are able to bound the generalization 
error of the solution (Section~\ref{sec:SoftHarmonicSolutionCAD}). 
Moreover, the soft harmonic solution can be used for label propagation in case of noise labels (Section~\ref{sec:ConditionalAnomalyDetectionWithSoftHarmonicFunctions}).
One way of enforcing the fit constraints in a soft manner is by solving a following problem:

\begin{equation}[Soft harmonic solution]
\label{eq:soft HFS}
  \bell^{\star} = \min_{\bell \in \realset^n}  (\bell - \by)\transpose C (\bell - \by) + \bell\transpose K \bell,
\end{equation}

\noindent where $K = L + \gamma_g I$ is the regularized Laplacian of the similarity graph, $C$ is a diagonal
matrix such that $C_{ii} \! = \! c_l$ for all labeled examples, and
$C_{ii} = c_u$ otherwise, and $\by$ is a vector of pseudo-targets such
that $y_i$ is the label of the $i$-th example when the example is labeled,
and $y_i = 0$ otherwise. The appealing property of \eqref{eq:soft HFS} is that its solution can be computed in closed form as follows~\cite{cortes2008stability}:

\begin{equation}[Closed form for the soft harmonic solution]
\bell^\star  = (C^{-1}K+I)^{-1}\by
\label{eq:closed form for soft hfs}
\end{equation}

\noindent We will use soft harmonic solution \eqref{eq:soft HFS} particularly 
in the theoretical analysis in Chapter~\ref{sec:TheoreticalAnalysis}.

Several examples of how $\gamma_g$ affects the regularized solution are shown in Figure \ref{fig:HFS}.
Figure \ref{fig:HFS}\textbf{a} shows an example of a simple data adjacency graph. The vertices of the graph are depicted as dots. 
The bigger dots in the middle are labeled vertices. The edges of the graph are shown as dotted lines and weighted as $w_{ij} = \exp[- \normw{\bx_i - \bx_j}{2}^2 / 2]$. Figure \ref{fig:HFS}\textbf{b.} shows three regularized harmonic function solutions on the data adjacency graph from Figure \ref{fig:HFS}\textbf{a}. The plots are cubic interpolations of the solutions. The dark (or pink and blue) colors denote parts of the feature space $\bx$ where $\ell_i > 0$ and $\ell_i < 0$, respectively. The light (or yellow) color marks regions where the confidence $\abs{\ell_i}$ is less than 0.05. When $\gamma_g = 0$, the solution turns into the ordinary harmonic function solution. When $\gamma_g \! = \! \infty$, the confidence of labeling unlabeled vertices \mbox{decreases to zero.} Finally, note that our regularization corresponds to increasing all eigenvalues of the Laplacian $L$ by $\gamma_g$ \cite{smola2003kernels}. In Section  \ref{sec:TheoryMaxMarginGraphCuts} we use this property to bound the generalization error of our solutions.

\begin{figure}
  \centering
	\includegraphics[width=\columnwidth]{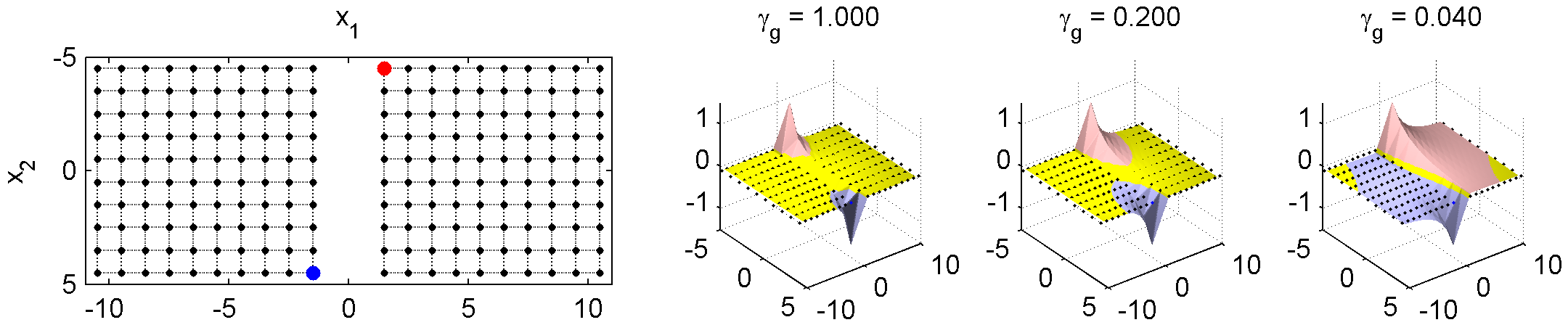}
  \vspace{0.1in} \\
  \hspace{-0.6in} \textbf{(a)} \hspace{2.55in} \textbf{(b)}
  \caption{\textbf{a.} Similarity graph \textbf{b.} Three regularized harmonic solutions}
  \label{fig:HFS}
\end{figure}

%% file: chapter_cad.tex
\chapter{Conditonal Anomaly Detection}
\label{sec:ConditonalAnomalyDetection}

\section{Introduction to Conditonal Anomaly Detection}
\label{sec:CADs}

Anomaly detection is the task of finding unusual elements in a set of observations. 
Most existing anomaly detection methods in data analysis are unconditional and look for outliers with respect to all data attributes
\cite{breunig2000lof:,akoglu2010oddball:,markou2003novelty,markou2003noveltya,chandola2009anomaly}. \emph{Conditional anomaly detection }(CAD) \cite{chandola2009anomaly} is the problem of  detecting unusual values for a subset of variables given the values of the remaining variables. In other words, one set of variables defines the context in which the other set is examined for anomalous values. 

CAD can be extremely useful for detecting unusual behaviors, outcomes, or unusual attribute pairings in many domains \cite{das2008anomaly}.  
Examples of such problems are the detection of unusual actions or outcomes in medicine \cite{hauskrecht2007evidence-based}, investments \cite{rubin2005auctioning}, law \cite{aktolga2010detecting}, social networks \cite{heard2010bayesian}, 
politics \cite{kolar2010estimating}, and other fields \cite{das2008anomaly}. In all these domains, the outcome strongly depends on the context (patient conditions, economy and market, case circumstances, etc.), hence the outcome is unusual only if it is compared to the examples with the same context. 

In this work, we study a special case of CAD that tries to identify the unusual values for just one target variable given the values of the remaining variables (attributes). The target variable is assumed to take on a finite set of values, which we also refer to as labels, because of its similarity to the classification problems. Therefore, we refer to conditional anomalies as mislabelings
\cite{valizadegan2007kernel} or cross-outliers \cite{papadimitriou2003cross-outlier}.
Our objective is to develop robust conditional anomaly methods that work well 
for high-dimensional datasets and let us capture various non-linearities in the underlying space.
This work is motivated primarily by clinical and biomedical datasets and applications. These datasets are highly heterogeneous, and may include hundreds of lab results of different nature, medications, and procedures performed during hospital stay.
In general, the distributions are multi-modal, reflecting many different patients' conditions \cite{hauskrecht2010conditional}.

\section{Definition of Conditional Anomaly}
\label{sec:DefinitionsOfConditionalAnomalyDetection}

In general, the concept of (conditional) anomaly in data in the existing literature is somewhat ambiguous and several definitions have been proposed in the past \cite{markou2003novelty,markou2003noveltya}. Typically, an example is considered anomalous when it is not expected from some underlying model.
A number of anomaly detection methods have been developed for this purpose (Section \ref{sec:UncoditionalAnomalyDetection}).
The conditional anomaly detection (CAD) problem (Section \ref{sec:ConditionalAnomalyDetection})
is different, but equally useful in practice. It seeks to detect  unusual values for a subset of variables $\mathcal{Y}$ given the values for the 
remaining variables $\mathcal{X}$.  Since in this dissertation we focus on CAD in one variable, we provide the definition for this case only.

Intuitively, we can define a conditional anomaly as follows: Given a set of $n$ past observed examples $(\bx_i,y_i)_{i=1}^{n}$ (with possible label noise), 
a \emph{conditional anomaly} is any instance $i$ among recent $m$ examples $(\bx_i,y_i)_{i=n+1}^{n+m}$  for which $y_i$ is unusual.
In this statement, we assume that the past observed examples $(\bx_i,y_i)_{i=1}^{n}$ are given. 
We do not assume that their labels are perfect; they may also be subject to the label noise. 

Let us motivate a formal definition of conditional anomaly by assuming that the $y_i$
is a continuous variable and has a standard normal distribution:

	\[
	   y_i|{\bf x_i}\ \sim\ \mathcal{N}(0,\,1). \,	
\]

\noindent As the standard normal distribution is a unimodal distribution with zero mean, the most anomalous values are the ones with the largest absolute value. 
Assuming a random sample of the size $n$, $Y^{(n)} = {y_1, y_2, \dots y_n}$, the extreme values for this distribution 
correspond to the first and the $n$-th order statistic. The expected $n$-th order statistic
for the standard normal can be approximated as $n \to \infty$ as \cite{crame1999mathematical}:

\begin{equation}[Expected value of the n-th order statistic for the standard normal]
Y^{(n)}_{(n)} \approx \sqrt{2 \ln n}
\label{eq:nth_n01}
\end{equation}

\noindent Therefore, the more samples we have, the larger extreme values we are likely to see
and the less we should be surprised by them.
This motivates our definition, which assumes some probabilistic model of data 
(not necessarily normal) and depends on the sample size $n$:

\begin{definition}
Given any probabilistic model $P$ and a random sample $(\bx_i,y_i)_{i=1}^{n}$, a {\bf conditional anomaly} of the $c$-level in the value $y_i$ given ${\bf x}_i$ is any instance $i$, such that $P(y_i|\bx_i) = O(e^{-cn})$.
\label{def:cad}
\end{definition}

It is not common that we would have access to such a model or that we 
would be able to estimate the class conditional probabilities reliably (especially in high dimensions).
Therefore, in practice we may need to assess the anomalies otherwise (e.g.,\,using human experts).

Not knowing the underlying model, which generates the (attribute, label) pairs, may lead to two major complications  illustrated in Figure~\ref{fig:fringe}.
First, a given instance may be far from the past observed data points (e.g. patient cases). Because of the lack of the support for alternative responses, it is difficult to assess the anomalousness of these instances.  We refer to these instances as \emph{isolated points}. Second, the examples on the boundary of the class distribution support may look anomalous due to their low likelihood. These boundary examples are also known as \emph{fringe points} \cite{papadimitriou2003cross-outlier}.
We aim to avoid both of those when we look for conditional anomalies. 

\section{Relationship to Mislabeling Detection}
\label{sec:RelationshipToMislabelingDetection}

The work on CAD,  when the target variable is restricted to discrete
values only,  is closely related to the problem of mislabeling detection \cite{brodley1999identifying}. The objective in this 
line of work is to 
1) to make a yes/no decision on whether the examples are mislabeled, and 2) to improve the classification accuracy by removing the mislabeled examples. \cite{Jiang-2004-Mislabeled} use an ensemble of
neural nets to remove suspicious samples to create a $k$-NN classifier. \cite{Sanchez-2003-Mislabeled} introduce several $k$-NN based approaches including \textit{Depuration}, \textit{Nearest Centroid Neighborhood} (NCN), and \textit{Iterative} $k$-NCN.
\cite{brodley1999identifying} tried different approaches to remove the mislabeled samples, including single and
ensemble classifiers. Bagging and boosting are applied in~\cite{Verbaeten-2003-MisLabeled} to detect and remove mislabeled examples.  \cite{valizadegan2007kernel} introduce an objective function based on the weighted $k$-NN approach to identify the mislabeled examples and solve it with the Newton method. 

While the objective of mislabeling detection research is to improve the classification 
accuracy by removing or correcting mislabeled examples, the objective of CAD is different:   
CAD is interested in ranking examples according to the severity of conditional anomalies in the data.
This is the main reason our evaluations of CAD in Chapter~\ref{sec:Experiments} measure the rankings of the cases being anomalous, not the improved classification accuracy when we remove them. Nevertheless, we do compare (Section~\ref{sec:EvaluationOfAnomalyDetection}) to the methods typically used in mislabeling detection.

\smallskip

There are various solutions to implement the conditional anomaly detection. We continue by outlining two
baseline approaches.   

\section{Class-outlier Approach}
\label{sec:ClassOutlierApproach}

The simplest approach is to use one of
the unconditional anomaly detection methods: For every possible class
value $y$ we learn a separate anomaly detection model $M_y$ using only the
values of $\bx$ attributes in the data. An example  $(\bx_i,y_i)$ is anomalous if $\bx_i$ is anomalous in $M_{y=y_i}$. We refer to this approach as
the {\em class-outlier} approach. The anomaly detection model $M_y$ can be
implemented with any unconditional anomaly detection model, such as the one-class SVM
\cite{scholkopf1999estimating}, local outlier factor \cite{breunig2000lof:} and
many others \cite{chandola2009anomaly,markou2003novelty,markou2003noveltya}.


The class-outlier approach comes with some
limitations. Such an approach detects anomalies with respect to its class label
and ignores the examples from the other classes. This  may not work well
for those examples for which $\bx$ is away from all of the classes and hence $\bx$ is an anomaly itself. To illustrate this, let us
assume we have two classes ($-1$ and $+1$) and an example $(\bx, -1)$, such
that $\bx$ is an anomaly in $M_{y=-1}$. The class-outlier approach compares this
example to all examples with the same label ($-1$) and declares it to be an
anomaly. However, the problem is when $\bx$ is also an anomaly with respect to
$M_{y=+1}$.  In such a case it is unclear whether $y$ should be $-1$ or $+1$
and hence the conclusion stating that $(\bx, -1)$ is a conditional anomaly may be incorrect.  

The other problem with class-outlier approach is that those methods often declare \emph{fringe} points (Figure~\ref{fig:fringe}) as
anomalies. Fringe points \cite{papadimitriou2003cross-outlier} are points on the outer boundary of a distribution support 
for a specific class. 


	   \begin{figure}
	   \begin{center}
		\includegraphics[width=\columnwidth]{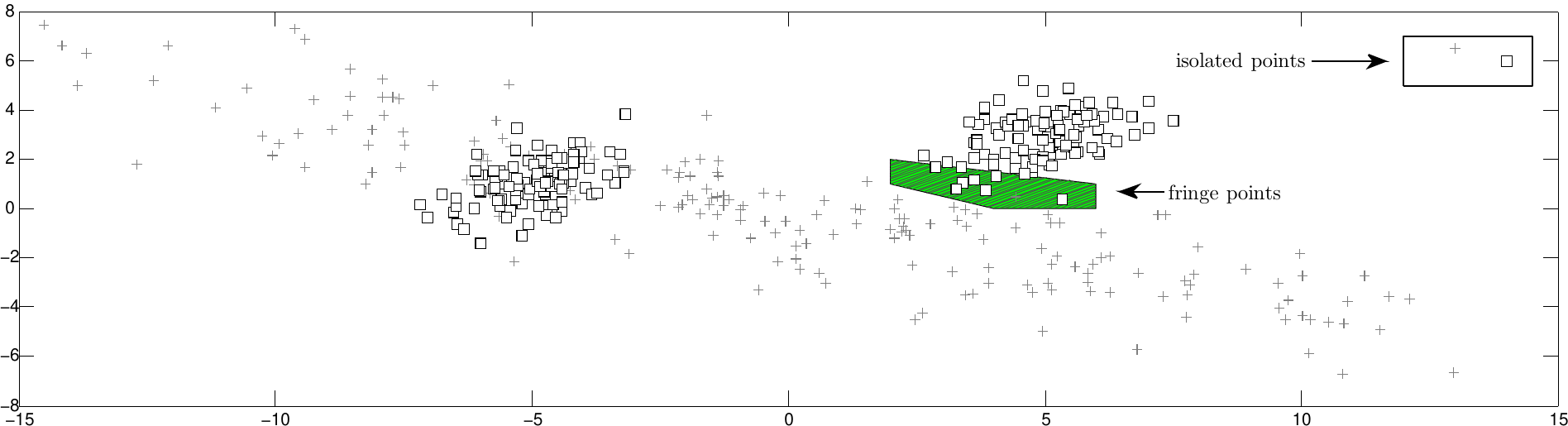}
	   \caption{Challenges for CAD: 1) {\bf fringe} and 2) {\bf isolated} points}%
		\label{fig:fringe}%
		\end{center}
		\end{figure}
			
\section{Discriminative Approach}
\label{sec:DiscriminativeApproach}

Another approach to detect conditional
anomalies is to estimate the posterior $P(y_i|\bx_i)$ for the observed example
$(\bx_i,y_i)$  and to use the posterior to measure how anomalous the data example
is \cite{song2007conditional,hauskrecht2007evidence-based,hauskrecht2010conditional,valko2008conditional}.
According to Definition~\ref{def:cad}, an example is conditionally anomalous
if the probability of the opposite label for this example is high.
Various classification machine learning models can be used to estimate the
posterior from the past data.  For example, one can use the logistic regression
model or generative probabilistic models  such as probabilistic graphical models
that come with an immediate probabilistic interpretation.  However, the output of
other classification models, such as SVM, can be modified and transformed to
produce a probabilistic output. For example, for the non-parametric Parzen window,  the
posterior probability can be estimated by summing the kernel weights for all 
examples with the same class label and by normalizing it with the sum of weights for all examples.
\begin{equation*}
P(y = y_i|\bx_i) = \frac{\sum_{y_i=y_j} K(\bx_j,\bx_i)}{\sum_{j} K(\bx_j,\bx_i)}
\end{equation*}
We will assume $y \in \{-1,+1\}$ from now on, but the
generalization to the multi-class case is straightforward.  
Without loss of generality, we assume that the testing example $\bx_i$ has $y_i
= +1$.  We want to compute $P(y_i \neq +1 | \bx_i)$ to see whether this quantity is not too high,
which would mean that $y_i$ is conditionally anomalous given $\bx_i$. Using Bayes
theorem we get:
\begin{equation}[Posterior probability estimation from a random walk]	
P(y \neq +1| \bx_i) =
\frac{P(\bx_i|y = -1 )P(y = -1)}
{\sum_{c\in \{-1,+1\}} P(\bx_i|y =c)P(y=c)}
\label{eq:cad_with_rw}
\end{equation}
Since we model both prior and class-conditional density, this is a generative model.
In the following we present a new discriminative method that uses random walks on the 
data similarity graph. We then modify it to address the problem of isolated and fringe points.

\subsection{CAD with Random Walks}
\label{sec:ConnectivityAD}

The following method is an example of the discriminative approach.
Let $(\bx_i,y_i)$ be the new example that we want to evaluate and
$P(\bx_i|y=+1)$ and $P(\bx_i|y=-1)$ be the probabilities we want to compute for \eqref{eq:cad_with_rw}.
In this part we show how we can estimate $P(\bx_i|y)$ from the similarity
graphs constructed separately for each class. A similarity graph for a set
of examples is built by assigning each example to a node in the graph.
The edges between the nodes and their weights represent the similarities between
the examples.

\begin{figure}
\begin{center}
\includegraphics[width=0.4\columnwidth]{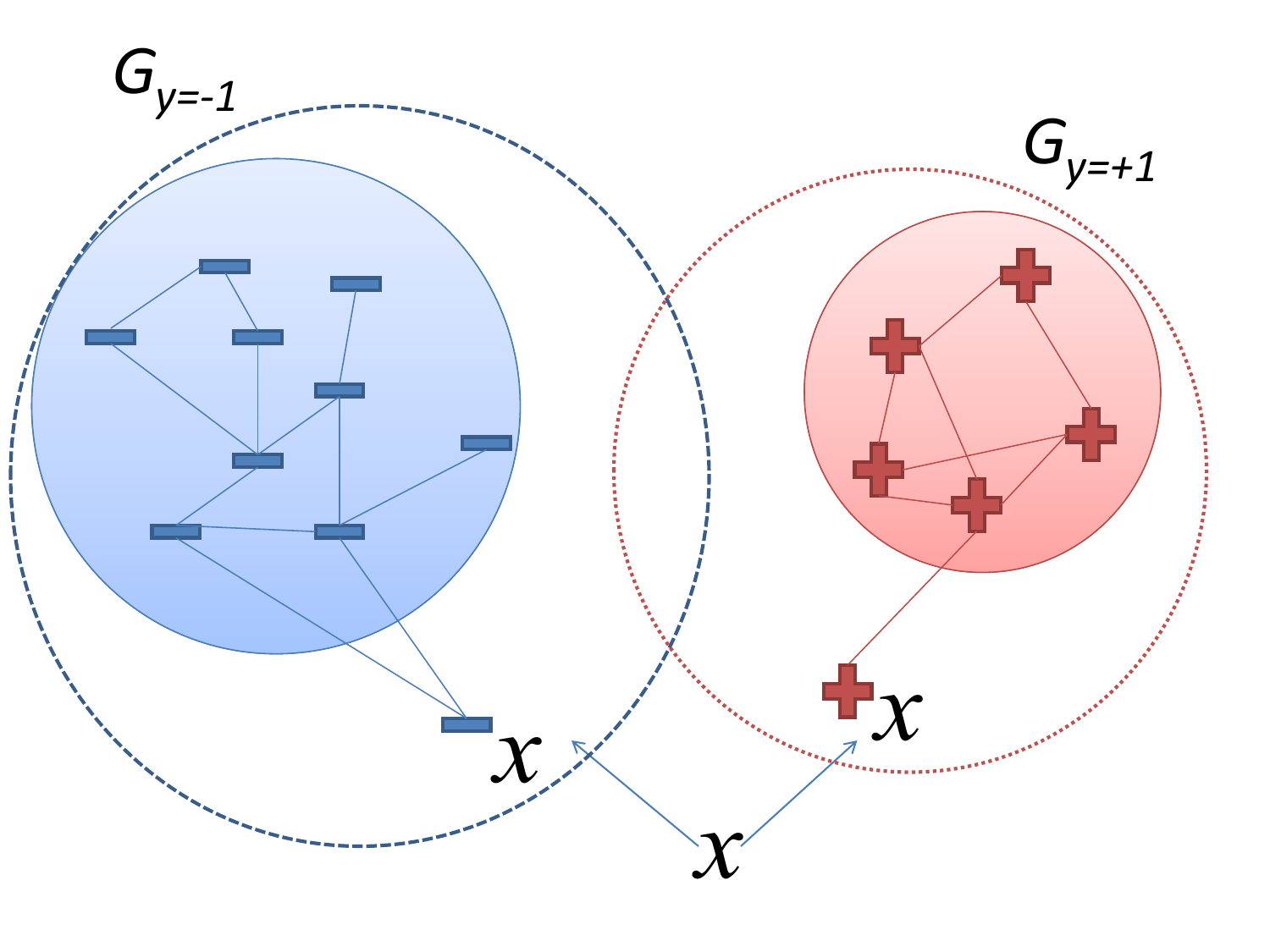}
\caption{Estimating class-conditional probabilities from two similarity graphs}%
\label{fig:2graphs}%
\end{center}
\end{figure}

To explain our method, let us consider (again, without the loss of generality)
the problem of estimating  $P(\bx_i|y=+1)$.  First, we take all $\bx_i$ from
the training set such that $y_i = +1$ and form a
similarity graph using these examples. We then add the new example $\bx_i$
pretending that its label is $y=+1$. 
Let $G_{y=+1}$ be the graph we get (Figure \ref{fig:2graphs}). In the following
we describe how we can use the stationary distribution of a random walk on
$G_{y=+1}$ and use the local connectivity 
as an \emph{approximation} for a density estimate \cite{lee2010spectral} for
$P(\bx_i|y=+1)$. We do the same for $P(\bx_i|y=-1)$ and plug the both estimates
into (\ref{eq:cad_with_rw}) to get an estimate for $P(y \ne +1 | \bx)$.

The equation \eqref{eq:rw_closed} enables us to compute the
stationary distribution in a closed form and ultimately allows us to compute
\eqref{eq:cad_with_rw} efficiently. Once we have the
stationary distribution $s$ of the random walk on $G_{y=+1}$ we approximate $P(\bx_i
| y = +1)$ with $s_i$.

\section{Regularized Discriminative Approach}
\label{sec:Regularization}

We now describe how to avoid detecting the fringe and isolated points using
regularization. Again, our approach considers both classes and $y$ becomes an anomaly if its posterior
probability given $\bx$ is small.
We stress again that in this work we are not interested in
isolated or fringe points.
Let us consider the case of isolated points. 
Imagine the scenario that we get such an anomaly $(\bx_a,y_a)$.
If we take the approach we just described, $\bx_a$ will be far from $G_{y=c}$
for all $c$. Intuitively, the posterior (\ref{eq:cad_with_rw})  compares the
weighted likelihoods of $\bx_a$ given the class, where weight is the class
prior. If these likelihoods are estimated from the training data (and possibly from a small sample size), the estimates of $P(\bx_a|y=-1)$ and $P(\bx_a|y=+1)$
may become unreliable. Consequently, the relative difference between these likelihoods
can strongly favor one class. Our model would then become overly confident in that $\bx_a$ belongs to that class.
We illustrate this behavior in Section~\ref{sec:CADOnCoreDatasetWithFringePoints}.

To alleviate these problems, we propose a new discriminative approach that
penalizes instances of $\bx$ that are anomalies themselves. We do it by
regularizing the model as follows:
\begin{equation}[Regularization of the discriminative approach for CAD]
P(y \neq +1|\bx) =
\frac{P(\bx|y = -1)P(y = -1)}
{\lambda + {\sum_{c\in \{-1,+1\}} P(\bx|y =c)P(y=c)}}
\label{eq:cad_with_rw_reg}
\end{equation}

\noindent Intuitively,  $\lambda$ is a placeholder for  the `everything else' class.
We point out that this is different from the Laplace correction, which is used to
smooth out probability estimates derived from the empirical counts\footnote{$P(y
= k) = (N_k + \lambda) / (\sum_k N_k + K \lambda)$, where $K$  is the number of classes and $N_k$ are the corresponding counts.}. First, 
this regularization is applied directly to Bayes theorem and not to a
probability estimate. Second, this regularization only changes the
denominator of the Bayes theorem and effectively creates the aforementioned
`everything else' class. The $\lambda$ is data dependent and can be set for example by cross-validation.

\subsection{Regularized Random Walk CAD}
\label{sec:ConnectivityGAD}

We will refer to the recently proposed algorithm as the $\lambda$-regularized random walk
CAD algorithm ($\lambda$-RWCAD). 
Algorithm~\ref{alg:reggad} displays the pseudo-code of the $\lambda$-RWCAD algorithm.
 Notice
that $\mathrm{vol}(W^{+})$ and $\mathrm{vol}(W^{-})$ are constants and can be
precomputed.
One of the benefits of the $\lambda$-RWCAD  algorithm is that it
does not require us to store the whole $n \times n$ similarity matrix.
Moreover, the method requires only a nearest neighbor type of a computation,
and therefore it has $O(n^2)$ time and $O(n)$ space
requirements. For sparse representations of the graph, the time is reduced to
$O(|E|)$, where $|E|$ is the number of edges in the graph. 
On the other hand, many other graph-based algorithms require quadratic space, and their
time complexity is related to the computation of the inverse of $n\times n$
matrix which is $\Omega(n^2)$ and $O(n^{2.807})$ in most practical implementations\footnote{The complexity
can be improved to $O(n^{2.376})$ by using the Coppersmith-Winograd algorithm.}.
Finally, modeling the data distribution with a graph can be extended to online
learning \cite{kivinen2002online}. Unlike the label propagation methods that
require us to store the whole $O(n^2)$ weight matrix
($O(|E|)$ when it is sparse) for the future computations, our method
requires only a summary statistic for each vertex, which is $O(n)$.

\begin{algorithm}
\small{
\begin{tabbing}
\hspace{0.1in} \= \hspace{0.1in} \= \hspace{0.1in} \= \hspace{0.1in} \= \kill
{\bf Inputs:} \\
\> new example $(\bx_e,y_e)$ \\
\> similarity metric $K(\cdot,\cdot)$ \\
\> $\mathrm{vol}(W^{+}) = \sum_{y_i = y_j = +1}W_{ij}$ \\  
\> $\mathrm{vol}(W^{-}) = \sum_{y_i = y_j = -1}W_{ij}$ \\
\> regularization coefficient $\lambda$	\\ {\bf Algorithm:} \\
\> $W^{+}_{i\bx_e} = K(\bx_i,\bx_e), \forall i$ positive \\
\> $W^{-}_{i\bx_e} = K(\bx_i,\bx_e), \forall i$ negative \\
\> $P(\bx_e | y = +1)= \sum_i W^{+}_{i\bx_e} / 
\left(\mathrm{vol}\left(W^{+}\right) + 2\times \sum_i W^{+}_{i\bx_e}\right) $ \\ 
\> $P(\bx_e | y = -1)= \sum_i W^{-}_{i\bx_e} / \left(
\mathrm{vol}\left(W^{-}\right) + 2\times \sum_i W^{-}_{i\bx_e}\right) $ \\ 
\> $ P(y \neq y_e|\bx_e) = \frac{P(\bx_e|y \neq y_e)P(y \neq y_e)} {\lambda + {\sum_{c\in \{-1,+1\}} P(\bx_e|y =c)P(y=c)}}$ \\
{\bf Outputs:} \\
\>  $P(y \ne y_e|\bx_e)$
\end{tabbing}
}
\caption{RWCAD that calculates the anomaly score}
\label{alg:reggad}
\end{algorithm}

\subsection{Conditional Anomaly Detection with Soft Harmonic Functions}
\label{sec:ConditionalAnomalyDetectionWithSoftHarmonicFunctions}

In this section we show how to solve the CAD problem  using label propagation on the data similarity graph
and how to compute the anomaly score.
In particular, we will build on the harmonic solution approach (Section~\ref{sec:SoftHarmonicSolution})
and adopt it for CAD in the following ways:
	1) show how to compute the confidence of mislabeling, %
	2) add a regularizer to address the problem of isolated and fringe points, %
	3) use soft constraints to account for a fully labeled setting, and %
	4) describe a compact computation of the solution from a quantized backbone graph. %

The label propagation method described in Section~\ref{sec:SoftHarmonicSolution}
can be applied to CAD by considering all observed data as labeled examples with no unlabeled examples. The setting for matrix $C$ is dependent on the quality of the past observed data. If the labels of the past observed data (or any example from the recent sample) are guaranteed to be correct, we set the corresponding diagonal elements of $C$ to a large value to make their labels fixed. Notice that specific domain techniques can be utilized to make sure that the collected examples from the past observed data have correct labels. We assume that we do not have the access to such prior knowledge and therefore, the observed data are also subject to label noise.

We now propose a way to compute the anomaly score from~\eqref{eq:closed form for soft hfs}.
The output $\bell^\star$ of  \eqref{eq:soft HFS} for the example $i$ can be rewritten as:
\begin{equation}[Confidence from the soft label]
\ell_i^\star = |\ell_i^\star| \times \mathrm{sgn}(\ell_i^\star)
\label{eq:absform}
\end{equation}
\noindent SSL methods use $\mathrm{sgn}(\ell^\star_i)$ in  \eqref{eq:absform}
as the predicted label for $i$. For an unlabeled example, when the value of $\ell_i$ is close to $\pm 1$,
then the labeling information that was propagated to it is more consistent.
Typically, that means that the example is close to the labeled examples of the respective class.
The key observation, which we exploit here, is
that we can interpret $|\ell^\star_i|$ as the confidence in the label.
Our situation differs from SSL, as all our examples are labeled and we aim to assess the confidence of \emph{already labeled} example.
Therefore, we define the \emph{anomaly score} as the
absolute difference between the actual label $y_i$ and the inferred soft label $\ell_i$:

\begin{equation}[Anomaly score for soft harmonic anomaly detection]
s_i = |\ell_i^\star - y_i|.
\label{eq:anomalyscore}
\end{equation}

We will now address the problems illustrated in Figure~\ref{fig:fringe}.
Recall that the isolated points are the examples
that are (with respect to some metric) far from the majority of the data.
Consequently, they are surrounded by few or no nearby points.
Therefore, no matter what their label is, we do not want to report them as conditional anomalies. In other words,
we want CAD methods to assign them a low anomaly score.
Even when the isolated points are far from the majority data, they  still can be orders
of magnitudes closer to the data points with the opposite label.
This can make a label propagation approach falsely confident about that example being a conditional anomaly.
In the same way, we do not want to assign a high anomaly score  to fringe points just because they lie on the distribution boundary.
To tackle these problems we set $K = L + \gamma_gI$, where we diagonally regularize the graph Laplacian.
Intuitively, such a regularization lowers the confidence value $|\bell^\star|$ of all examples; however it reduces the confidence score of far outlier points relatively more. To see this, notice (Section~\ref{sec:Parameters}) that
the similarity weight metric is an exponentially decreasing function of the Euclidean distance.
In other words, such a regularization can be interpreted as a label propagation on the graph with an extra
sink. The sink is an extra node in $G$ with label $0$ and every other node connected to it
with the same small weight $\gamma_g$. The edge weight of $\gamma_g$ affects the isolated points more than other points, because their connections to other nodes are small.


In the fully labeled setting, the \textit{hard} harmonic solution degenerates to the weighted $k$-NN.
In particular, the hard constraints of the harmonic solution do not allow the labels to spread beyond other labeled examples.
However, despite the fully labeled case, we still want to take the advantage of the manifold structure.
To alleviate this problem we allow labels to spread on the graph by
using soft constraints in the unconstrained regularization problem \eqref{eq:soft HFS}.
In particular, instead of $c_l=\infty$ we set $c_l$ to a finite constant and we set $C = c_l I$.
With such a setting of $K$ and $C$, we can solve \eqref{eq:soft HFS}
using \eqref{eq:closed form for soft hfs} to get:

\begin{equation}[Soft harmonic solution using matrix inversion]
\bell^\star  = \left(\left(c_l I\right)^{-1}\left(L+\gamma_g\right) + I\right)^{-1}\by \
       = \left(c_l^{-1}L+\left(1+\frac{\gamma_g}{c_l}\right)I\right)^{-1}\by.
			\label{eq:had}
\end{equation}

\noindent To avoid computation of the inverse,\footnote{due to numerical instability} we calculate \eqref{eq:had} using the following system of linear equations:

\begin{equation}[Soft harmonic solution using system of linear equations]
\left(c_l^{-1}L+\left(1+\frac{\gamma_g}{c_l}\right)I\right) \bell^\star = \by
\label{eq:had:sle}
\end{equation}

\noindent
We then plug the output of \eqref{eq:had:sle} into \eqref{eq:anomalyscore} to get the anomaly score.
We will refer to this score as the SoftHAD score.
Intuitively, when the confidence is high but $\mathrm{sign}(\ell_i^\star) \ne y_i$, we will consider the label $y_i$ of the case
$(\bx_i, y_i)$ conditionally anomalous.

\noindent {\bf Backbone graph} The computation of the system of linear equations \eqref{eq:had:sle}
scales with complexity\footnote{The complexity can be further improved to $O(n_u^{2.376})$ with
the Coppersmith-Winograd algorithm.} $O(n^3)$.
This is not feasible for a graph with more than several thousand nodes.
To address the problem, we use \emph{data quantization} \cite{gray1998quantization}
and sample a set of nodes from the training data to create $G$.
We then substitute the nodes in the graph with a smaller set of $k \ll n$ distinct centroids,
which results in $O(k^3)$ computation.

We improve the approximation of the original graph with the backbone graph,
by assigning different weights to the centroids.
We do it by computing the multiplicities (i.e.\,how many nodes each centroid represents).
In the following we will describe how to modify
 \eqref{eq:had:sle} to allow for the computation with multiplicities.

Let $V$ be the diagonal matrix of multiplicities with $v_{ii}$ being
the number of nodes that centroid $\bx_i$ represents.
We will set the multiplicities according to the empirical prior.

Let $W^V$ be the compact representation of the matrix $W$ on $G$,
where each node $\bx_i$ is replicated $v_{ii}$ times.
Let $L^V$ and $K^V$ be the graph Laplacian and regularized
graph Laplacian of $W^V$. Finally, let $C^V$ be the $C$ in \eqref{eq:soft HFS} with the adjustment for the multiplicities. $C^V$ accounts for the fact that we care about `fitting' to train data according to the multiplicities.
Then:

\begin{align*}
W^V &= VWV\\
L^V &= L(W^V)\\
K^V &= L^V + \gamma_gV\\
C^V &= V^{1/2}CV^{1/2}	
\end{align*}

\noindent
\noindent The unconstrained regularization \eqref{eq:soft HFS}
now becomes:

\begin{equation}[Compact computation of harmonic solution for the backbone graph]
  \bell^{V\star} = \min_{\bell \in \realset^n} \
  (\bell - \by)\transpose C^V (\bell - \by) + \bell\transpose K^V\bell
	\label{eq:unconstrained regularization:mul}
\end{equation}

\noindent and subsequently \eqref{eq:had} becomes:

\begin{eqnarray*}
 \bell^{V\star}  & = & \left(\left(C^V\right)^{-1}K^V+I\right)^{-1}\by \\
      & = & \left( V^{-1/2}C^{-1}V^{-1/2} (L^V + \gamma_gV)  +I\right)^{-1}\by \\
\label{eq:closed form:mul}
      & = & \left(\left(c_lV\right)^{-1}  (L^V + \gamma_gV)  +I\right)^{-1}\by \\
      & = & \left(1/c_l V^{-1}  L^V + c_l\gamma_g  +I\right)^{-1}\by
\end{eqnarray*}

\noindent With these adjustments the anomaly score that accounts
for the multiplicities is equal to $|\bell^{V\star} - \by|$.

%% file: chap_theory.tex
\chapter{Theoretical Analysis}
\label{sec:TheoreticalAnalysis}

In this chapter, we analyze the methods proposed in Chapter~\ref{sec:SemiSupervisedLearning} and Chapter~\ref{sec:ConditonalAnomalyDetection}.
We will analyze mostly:

\begin{itemize}
	\item \textbf{generalization} errors induced by harmonic solutions on the graph,
	\item errors induced by \textbf{quantization} of the graph to accommodate online learning, and
	\item errors due to the \textbf{online} setting.
\end{itemize}

\section{Soft Harmonic Solution}
\label{sec:SoftHarmonicSolutionCAD}

In this section we prove a bound on the generalization error of our transductive learner. 
The generalization error of the solution to the problem \eqref{eq:closed form for soft hfs} (and also \eqref{eq:soft HFS}) is bounded in Lemma \ref{lem:transductive bound}. 

\begin{lemma}
\label{lem:transductive bound} Let $\bell^\ast$ be a solution to the problem:
\begin{align*}
  \min_{\bell \in \realset^n} \
  (\bell - \by)\transpose C (\bell - \by) +
  \bell\transpose Q \bell,
\end{align*}
where $Q = L + \gamma_g I$ and all labeled examples $l$ are selected i.i.d. Then the inequality:
\begin{align*}
  R_P^{\scriptscriptstyle{W}}(\bell^\ast) \ \leq & \ \
  \widehat{R}_P^{\scriptscriptstyle{W}}(\bell^\ast) +
  \underbrace{\beta + \sqrt{\frac{2 \ln(2 / \delta)}{n_l}}
  (n_l \beta + 4)}_{\emph{transductive error }
  \Delta_T(\beta, n_l, \delta)} \\
  \beta \ \leq & \ \
  2 \left[\frac{\sqrt{2}}{\gamma_g + 1} +
  \sqrt{2 n_l} \frac{1 - \sqrt{c_u}}{\sqrt{c_u}}
  \frac{\lambda_M(L) + \gamma_g}{\gamma_g^2 + 1}\right]
\end{align*}
holds with probability $1 - \delta$, where:
\begin{align*}
  R_P^{\scriptscriptstyle{W}}(\bell^\ast)
  \ = & \ \
  \frac{1}{n} \sum_i (\ell_i^\ast - y_i)^2 \\
  \widehat{R}_P^{\scriptscriptstyle{W}}(\bell^\ast)
  \ = & \ \
  \frac{1}{n_l} \sum_{i \in l} (\ell_i^\ast - y_i)^2
\end{align*}
are risk terms for all vertices and labeled vertices, respectively, and $\beta$ is the stability coefficient of the solution $\bell^\ast$.
\end{lemma}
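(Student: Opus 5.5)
This lemma is an instance of the transductive stability framework of \cite{cortes2008stability}, so the plan is to reduce it to their results instead of building a generalization argument from scratch. The first ingredient is their general transductive bound for uniformly $\beta$-stable algorithms. Suppose changing one labeled example in the i.i.d.-selected labeled set changes every prediction by at most $\beta$, and the squared loss is bounded (here $|\ell_i^\ast|,|y_i|\le 1$, so $(\ell_i^\ast-y_i)^2\le 4$). Then McDiarmid's inequality, applied to the gap $R_P^{\scriptscriptstyle{W}}-\widehat{R}_P^{\scriptscriptstyle{W}}$, gives with probability $1-\delta$ a deviation of at most $\beta + \sqrt{2\ln(2/\delta)/n_l}\,(n_l\beta+4)$. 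The steps are as follows.

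First, I would fix the setting. The pseudo-targets $y_i\in\{-1,0,+1\}$ are $0$ on unlabeled vertices. The matrix $C$ is diagonal with $c_l$ on labeled and $c_u$ on unlabeled vertices, and the solution is $\bell^\ast=(C^{-1}Q+I)^{-1}\by$ by \eqref{eq:closed form for soft hfs}. I would check that $\|\bell^\ast\|_\infty\le 1$, or at least that the losses are bounded by the constant $4$ appearing in $\Delta_T$. This follows from the random-walk-with-sink interpretation of Section~\ref{sec:reg HFS}: $\bell^\ast$ is a convex combination of targets in $[-1,1]$ together with $0$ from the sink.

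Second, I would invoke the stability theorem of \cite{cortes2008stability} for the unconstrained regularization problem with $Q=L+\gamma_g I$. Their bound controls the change $\|\bell^\ast-\bell'^\ast\|_2$ when one labeled example is replaced, in terms of:
\begin{itemize}
\item the smallest eigenvalue of $Q$, which is $\lambda_m(L)+\gamma_g\ge\gamma_g$ because $L\succeq 0$;
\item the largest eigenvalue $\lambda_M(L)+\gamma_g$;
\item the ratio between $c_u$ and $c_l$, which produces the factor $(1-\sqrt{c_u})/\sqrt{c_u}$ after normalizing $c_l=1$.
\end{itemize}
Concretely, I would write $\bell^\ast-\bell'^\ast$ via the resolvent identity as a perturbation of $\by$ at two coordinates, with norm at most $\sqrt2$, plus a perturbation of $C$ at those coordinates. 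The first term gives $\sqrt2/(\gamma_g+1)$ after bounding $\|(C^{-1}Q+I)^{-1}\|$ by $1/(1+\lambda_m(C^{-1}Q))$. The second term brings in $\|\bell\|_2\le\sqrt{n_l}$-type factors and the ratio $(\lambda_M(L)+\gamma_g)/(\gamma_g^2+1)$. Since $\|\cdot\|_\infty\le\|\cdot\|_2$, this bounds the pointwise change. The extra factor $2$ comes from the squared loss: $|(a-y)^2-(b-y)^2|\le 2\cdot|a-b|\cdot\max|a+b-2y|$, suitably normalized.

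Third, I would plug this $\beta$ into the transductive McDiarmid bound to get the stated inequality.

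The main obstacle is the second step. The eigenvalue bookkeeping of the perturbed system must be carried out so that exactly the constants $\sqrt{2}/(\gamma_g+1)$ and $\sqrt{2n_l}(1-\sqrt{c_u})/\sqrt{c_u}\cdot(\lambda_M(L)+\gamma_g)/(\gamma_g^2+1)$ appear. This depends on the normalization of $C$ in \cite{cortes2008stability}. I would try first to simply instantiate their Theorem (stability of unconstrained regularization) with $\lambda_m(Q)\ge\gamma_g$ and $\lambda_M(Q)=\lambda_M(L)+\gamma_g$, and only redo the resolvent computation if the constants do not match.
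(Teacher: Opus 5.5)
Your proposal follows the paper's proof. The paper likewise gets the risk inequality from a uniform-stability transductive bound: it cites Theorem 1 of \cite{belkin2004regularization}, which is the McDiarmid-based bound with exactly the $\beta+\sqrt{2\ln(2/\delta)/n_l}\,(n_l\beta+4)$ form, rather than \cite{cortes2008stability}. It takes $\beta$ from Section 5 and Proposition 1 of \cite{cortes2008stability} with $c_l=1$, then substitutes $\lambda_m(Q)=\lambda_m(L)+\gamma_g\ge\gamma_g$, $\lambda_M(Q)=\lambda_M(L)+\gamma_g$ and $(\lambda_m(L)+\gamma_g+1)^2\ge\gamma_g^2+1$, as you plan. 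The paper simply assumes $|\ell_i^\ast|\le 1$, so your random-walk-with-sink (maximum principle) argument for it is a small addition that fills that in.
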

\noindent {\bf Proof:} 
To simplify the proof, we assume that $c_l = 1$ and $c_l > c_u$.
Our risk bound follows from combining Theorem 1 of \cite{belkin2004regularization} with the assumptions $\abs{y_i} \leq 1$ and $\abs{\ell_i^\ast} \leq 1$. The coefficient $\beta$ is derived based on Section 5 of \cite{cortes2008stability}. In particular, based on the properties of the matrix $C$ and Proposition 1 \cite{cortes2008stability}, we conclude:
\begin{align*}
  \beta = 2 \left[\frac{\sqrt{2}}{\lambda_m(Q) + 1} +
  \sqrt{2 n_l} \frac{1 - \sqrt{c_u}}{\sqrt{c_u}}
  \frac{\lambda_M(Q)}{(\lambda_m(Q) + 1)^2}\right],
\end{align*}
where $\lambda_m(Q)$ and $\lambda_M(Q)$ refer to the smallest and largest eigenvalues of $Q$, respectively, and can be further rewritten as $\lambda_m(Q) = \lambda_m(L) + \gamma_g$ and $\lambda_M(Q) = \lambda_M(L) + \gamma_g$. Our final claim directly follows from applying the lower bounds $\lambda_m(L) \geq 0$ and $(\lambda_m(L) + \gamma_g + 1)^2 \geq \gamma_g^2 + 1$. \qed

\bigskip Lemma \ref{lem:transductive bound} is practical when the error $\Delta_T(\beta, n_l, \delta)$ decreases at the rate of $O(n_l^{- \frac{1}{2}})$. This is achieved when $\beta \! = \! O(1 / n_l)$, which corresponds to $\gamma_g \! = \! \Omega(n_l^\frac{3}{2})$. Thus, when the problem (\ref{eq:soft HFS}) is sufficiently regularized, its solution is stable, and the generalization error of the solution is bounded.

\section{Analysis of Max-margin Graph Cuts}
\label{sec:TheoryMaxMarginGraphCuts}

\subsection{When Manifold  Regularization Fails}
\label{sec:MR fails}

The major difference between manifold regularization (\ref{eq:MR}) and the regularized harmonic function solution (\ref{eq:reg HFS}) is in the space of optimized parameters. In particular, manifold regularization is performed on a class of functions $\cH_K$. When this class is severely restricted, such as with linear functions, the minimization of ${\bf f}\transpose L {\bf f}$ may lead to results which are significantly worse than the harmonic function solution.

This issue can be illustrated on the problem from Figure \ref{fig:HFS}, where we learn a linear decision boundary $f(\bx) = \alpha_1 x_1 + \alpha_2 x_2$ through manifold regularization of linear SVMs:
\begin{equation}[Linear manifold regularization]
  \min_{\alpha_1, \alpha_2} \ \sum_{i \in l} V(f, \bx_i, y_i) +
  \gamma [\alpha_1^2 + \alpha_2^2] +
  \gamma_u {\bf f}\transpose L {\bf f}.
  \label{eq:linear MR}
\end{equation}
The structure of our problem simplifies the computation of the regularization term ${\bf f}\transpose L {\bf f}$. In particular, since all edges in the data adjacency graph are either horizontal or vertical, the term ${\bf f}\transpose L {\bf f}$ can be expressed as a function of $\alpha_1^2$ and $\alpha_2^2$. Therefore, for this particular problem we have:
\begin{align}
  {\bf f}\transpose L {\bf f}
  \ = & \ \ \frac{1}{2} \sum_{i, j} w_{ij} (f(\bx_i) - f(\bx_j))^2
  \nonumber \\
  \ = & \ \ \frac{1}{2} \sum_{i, j} w_{ij}
  (\alpha_1 (\bx_{i1} - \bx_{j1}) + \alpha_2 (\bx_{i2} - \bx_{j2}))^2
  \nonumber \\
  \ = & \ \ \frac{\alpha_1^2}{2} \underbrace{\sum_{i, j} w_{ij}
  (\bx_{i1} - \bx_{j1})^2}_{\Delta = 218.351} + \nonumber \\
  & \ \ \frac{\alpha_2^2}{2} \underbrace{\sum_{i, j} w_{ij}
  (\bx_{i2} - \bx_{j2})^2}_{\Delta = 218.351}.
  \label{eq:linear manifold}
\end{align}
After we incorporate \eqref{eq:linear manifold} to our objective function \eqref{eq:linear manifold}, we get \eqref{eq:linear manifold} as an additional weight at the regularizer $[\alpha_1^2 + \alpha_2^2]$:
\begin{equation}[Linear support vector machine]
  \min_{\alpha_1, \alpha_2} \ \sum_{i \in l} V(f, \bx_i, y_i) +
  \left(\gamma + \frac{\gamma_u \Delta}{2}\right)
  [\alpha_1^2 + \alpha_2^2]  = 
	\min_{\alpha_1, \alpha_2} \ \sum_{i \in l} V(f, \bx_i, y_i) +
  \gamma^\ast[\alpha_1^2 + \alpha_2^2],
  \label{eq:linear SVM}
\end{equation}
where $\gamma^\ast = \left(\gamma + \frac{\gamma_u \Delta}{2}\right)$.
Thus, manifold regularization of linear SVMs on our problem can be viewed as supervised learning with linear SVMs with a varying weight at the regularizer. In other words, in this particular problem, the unlabeled examples only influence the solution 
through the regularizer $\gamma^\ast$ on $f(\bx)$. That means we can get the same $f(\bx)$
for a different $\gamma^\ast$ if the unlabeled examples were not present at all. 
Since the problem involves only two labeled examples, changes in the weight $\gamma^\ast$ do not affect the direction of the discriminator $f^\ast(\bx) = 0$, because the margin is maximized by the hyperplane between them.
Therefore, different settings of the regularizer only change the slope of $f^\ast$ (Figure \ref{fig:synthetic cuts}, second row).
The above analysis shows that the discriminator $f^\ast(\bx) = 0$ does not change with $\gamma_u$. As a result, all discriminators are equal to the discriminator for $\gamma_u = 0$, which can be learned by linear SVMs, yet none of them solves our problem optimally. Max-margin graph cuts solve the problem optimally for small values of $\gamma_g$. If we included more unlabeled examples, we could get the error 
arbitrarily large, assuming our problems would consist of two coherent square-shaped classes, as in \ref{fig:HFS}.
Figure \ref{fig:synthetic cuts} shows
linear, cubic, and RBF decision boundaries, obtained by manifold regularization of SVMs (MR) and max-margin graph cuts (GC) on the problem from Figure \ref{fig:HFS}. The regularization parameter $\gamma_g = \gamma / \gamma_u$ is set as suggested in Section \ref{sec:MR}, $\gamma \! = \! 0.1$, and $\eps \! = \! 0.01$. The pink and blue colors denote parts of the feature space $\bx$ where the discriminators $f$ are positive and negative, respectively. The yellow color marks the regions where $\abs{f(\bx)} < 0.05$.

A similar line of reasoning can be used to extend our results to polynomial kernels. Figure \ref{fig:synthetic cuts} indicates that max-margin learning with the cubic kernel exhibits trends similar to the linear case.

\begin{figure}
  \centering
  \includegraphics[width=4.8in, viewport=1.25in 0.75in 7.25in 9.75in]{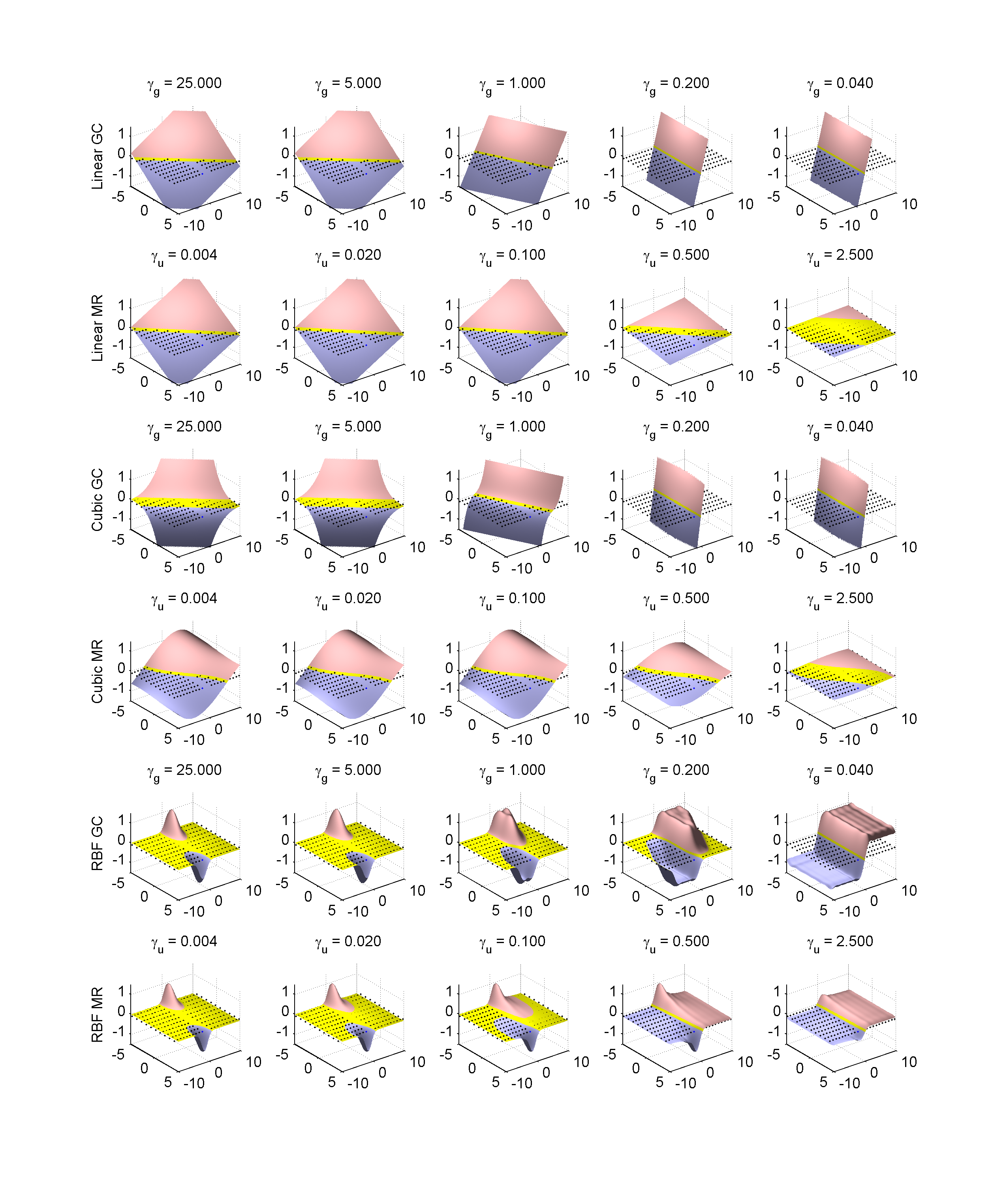}
  \caption{Linear, cubic, and RBF decision boundaries for different methods}
  \label{fig:synthetic cuts}
\end{figure}

The notion of algorithmic stability can be used to bound the generalization error of many learning algorithms \cite{bousquet2002stability}. In this section, we discuss how to make the harmonic function solution stable and prove a bound on the generalization error of max-margin cuts (\ref{eq:MMGC}). Our bound combines existing transductive \cite{belkin2004regularization,cortes2008stability} and inductive \cite{vapnik1995nature} bounds.

\subsection{Generalization Error}
\label{sec:generalization error}

Our objective is to show that the \emph{risk} of our solution $f$:
\begin{equation}[Risk of our solutions]
  R_P(f) = \E{P(\bx)}{\cL(f(\bx), y(\bx))}
  \label{eq:risk}
\end{equation}
is bounded by the \emph{empirical risk} on graph-induced labels:
\begin{equation}[Empirical risk on graph induced labels]
  \frac{1}{n} \sum_i \cL(f(\bx_i), \sgn(\ell_i^\ast))
  \label{eq:empirical risk}
\end{equation}
and error terms, which can be computed from training data. The function $\cL(y', y) \! = \! \I{\sgn(y') \! \neq \! y}$ computes the zero-one loss of the prediction $\sgn(y')$ given the ground truth $y$. $P(\bx)$ is the distribution of our data. For simplicity, we assume that the label $y$ is a deterministic function of $\bx$. Our proof starts by relating $R_P(f)$ and graph-induced labels $\ell_i^\ast$.

\begin{lemma}
\label{lem:inductive bound} Let $f$ be from a function class with the VC dimension $h$ and $\bx_i$ be $n$ examples, which are sampled i.i.d. with respect to the distribution $P(\bx)$. Then the inequality:
\begin{align*}
  R_P(f)
  \ \leq & \ \
  \frac{1}{n} \sum_i \cL(f(\bx_i), \sgn(\ell_i^\ast)) \ + \\
  & \ \
  \frac{1}{n} \sum_i (\ell_i^\ast - y_i)^2 + \\
  & \ \
  \underbrace{\sqrt{\frac{h (\ln(2 n / h) + 1) -
  \ln(\eta / 4)}{n}}}_{\emph{inductive error } \Delta_I(h, n, \eta)}
\end{align*}
holds with the probability of $1 - \eta$, where $y_i$ and $\ell_i^\ast$ represent the true and graph-induced soft labels, respectively.
\end{lemma}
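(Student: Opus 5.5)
The plan is to decompose the zero-one risk through the graph-induced labels, using the triangle inequality for the zero-one loss. For any $\bx$ we have the pointwise inequality
\begin{align*}
  \cL(f(\bx_i), y_i) \ \leq \ \cL(f(\bx_i), \sgn(\ell_i^\ast)) + \I{\sgn(\ell_i^\ast) \neq y_i},
\end{align*}
since if $\sgn(f(\bx_i)) \neq y_i$ then either $\sgn(f(\bx_i)) \neq \sgn(\ell_i^\ast)$ or $\sgn(\ell_i^\ast) \neq y_i$. Averaging over the $n$ examples bounds the empirical risk $\frac{1}{n}\sum_i \cL(f(\bx_i), y_i)$ by the first term of the claim plus $\frac{1}{n}\sum_i \I{\sgn(\ell_i^\ast) \neq y_i}$.

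Next, I would replace the indicator by the squared loss. Since $y_i \in \set{-1, +1}$, whenever $\sgn(\ell_i^\ast) \neq y_i$ the value $\ell_i^\ast$ lies on the opposite side of zero from $y_i$ (or at zero). Hence $\abs{\ell_i^\ast - y_i} \geq 1$, and so $\I{\sgn(\ell_i^\ast) \neq y_i} \leq (\ell_i^\ast - y_i)^2$. The case $\ell_i^\ast = 0$ needs a convention for $\sgn(0)$, but the bound $(0 - y_i)^2 = 1$ covers it either way. This yields the second term of the statement.

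Finally, I would pass from the empirical risk on the true labels to $R_P(f)$ using the classical VC bound of Vapnik~\cite{vapnik1995nature}. For a class of VC dimension $h$ and i.i.d.\ samples, with probability at least $1-\eta$ it holds uniformly over the class that $R_P(f) \leq \frac{1}{n}\sum_i \cL(f(\bx_i), y(\bx_i)) + \Delta_I(h, n, \eta)$. Here I use that $y$ is a deterministic function of $\bx$, so the pairs $(\bx_i, y_i)$ are i.i.d. Chaining the three inequalities gives the claim.

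The main subtlety is that $f$ is learned from the graph-induced labels and therefore depends on the sample. This is why the uniform (over the class) form of the VC bound is needed rather than a bound for a fixed $f$; the uniform version handles the data dependence. The rest is routine.
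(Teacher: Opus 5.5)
Your proposal is correct and is essentially the paper's proof: it also applies Vapnik's VC bound to the empirical zero-one risk on the true labels, then bounds each term pointwise by $\cL(f(\bx_i), \sgn(\ell_i^\ast)) + (\ell_i^\ast - y_i)^2$. The difference is only in detail: you justify that pointwise step explicitly (via the disagreement triangle inequality plus $\I{\sgn(\ell_i^\ast) \neq y_i} \leq (\ell_i^\ast - y_i)^2$), and you note that the VC bound must hold uniformly over the class, whereas the paper states the pointwise inequality without proof and uses uniformity implicitly.
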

\noindent {\bf Proof:} Based on Equations 3.15 and 3.24 \cite{vapnik1995nature}, the inequality:
\begin{align*}
  R_P(f) \leq \frac{1}{n} \sum_i \cL(f(\bx_i), y_i) +
  \Delta_I(h, n, \eta)
\end{align*}
holds with the probability of $1 - \eta$. Our final claim follows from bounding all terms $\cL(f(\bx_i), y_i)$ as:
\begin{align*}
  \cL(f(\bx_i), y_i) \leq \cL(f(\bx_i), \sgn(\ell_i^\ast)) +
  (\ell_i^\ast - y_i)^2.
\end{align*}
The above bound holds for any $y_i \in \set{-1, 1}$ and $\ell_i^\ast$. \qed

\bigskip It is hard to bound the error term $\frac{1}{n} \sum_i (\ell_i^\ast - y_i)^2$ when the constraints $\ell_i = y_i$ (\ref{eq:reg HFS}) are enforced in a hard manner. Thus, in the rest of our analysis, we consider a relaxed version of the harmonic function solution 
(Section~\ref{sec:SoftHarmonicSolution}).  Lemma \ref{lem:transductive bound} and its proof can be found in Section~\ref{sec:SoftHarmonicSolutionCAD}. Lemmas \ref{lem:transductive bound} and \ref{lem:inductive bound} can be combined using the union bound.

\begin{proposition}
\label{prop:MMGC bound} Let $f$ be from a function class with the VC dimension $h$. Then the inequality:
\begin{align*}
  R_P(f)
  \ \leq & \ \
  \frac{1}{n} \sum_i \cL(f(\bx_i), \sgn(\ell_i^\ast)) \ + \\
  & \ \
  \widehat{R}_P^{\scriptscriptstyle{W}}(\bell^\ast) +
  \Delta_T(\beta, n_l, \delta) + \Delta_I(h, n, \eta)
\end{align*}
holds with probability $1 - (\eta + \delta)$.
\end{proposition}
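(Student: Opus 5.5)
The plan is to chain Lemma~\ref{lem:inductive bound} and Lemma~\ref{lem:transductive bound} through the common term $\frac{1}{n}\sum_i (\ell_i^\ast - y_i)^2 = R_P^{\scriptscriptstyle{W}}(\bell^\ast)$, and to combine their failure probabilities with a union bound. Throughout, $\bell^\ast$ denotes the soft harmonic solution of Section~\ref{sec:SoftHarmonicSolution}, with $Q = L + \gamma_g I$, since this is the solution for which Lemma~\ref{lem:transductive bound} is stated. The hard-constrained solution \eqref{eq:reg HFS} is not used, as the remark after Lemma~\ref{lem:inductive bound} indicates.

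\emph{Step 1 (inductive part).} Apply Lemma~\ref{lem:inductive bound} with this $\bell^\ast$. Its proof only uses the pointwise inequality $\cL(f(\bx_i), y_i) \leq \cL(f(\bx_i), \sgn(\ell_i^\ast)) + (\ell_i^\ast - y_i)^2$, which holds for any real $\ell_i^\ast$. It therefore applies verbatim to the soft solution. On an event $E_1$ with $P(E_1^c) \leq \eta$ we get
\begin{align*}
R_P(f) \leq \frac{1}{n}\sum_i \cL(f(\bx_i), \sgn(\ell_i^\ast)) + R_P^{\scriptscriptstyle{W}}(\bell^\ast) + \Delta_I(h, n, \eta).
\end{align*}

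\emph{Step 2 (transductive part).} Lemma~\ref{lem:transductive bound} gives, on an event $E_2$ with $P(E_2^c) \leq \delta$, the bound $R_P^{\scriptscriptstyle{W}}(\bell^\ast) \leq \widehat{R}_P^{\scriptscriptstyle{W}}(\bell^\ast) + \Delta_T(\beta, n_l, \delta)$. Here the randomness is over the i.i.d.\ choice of the labeled set $l$, and $\beta$ is bounded as stated in that lemma.

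\emph{Step 3 (combination).} Substitute the inequality from Step 2 into the middle term of Step 1 on the event $E_1 \cap E_2$. By the union bound, $P((E_1 \cap E_2)^c) \leq \eta + \delta$, so the claimed inequality holds with probability at least $1 - (\eta + \delta)$.

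The main obstacle is the bookkeeping of randomness in Step 3. Lemma~\ref{lem:inductive bound} is a statement over the draw of the $n$ points $\bx_i$. Lemma~\ref{lem:transductive bound}, for a fixed set of $n$ points, is over the draw of the labeled subset. Also, $f$ is typically learned from $\bell^\ast$, so it depends on the sample.

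Two points need to be checked. First, the VC bound behind $\Delta_I$ is uniform over the function class, so a data-dependent $f$ is allowed. Second, the transductive bound holds conditionally on the points with probability $1-\delta$, so integrating over the points keeps it a $1-\delta$ statement on the joint space.

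With both events defined on the same joint probability space of (points, labeled subset), the union bound applies without any independence assumption. I would state this explicitly rather than belabor the calculation, which is otherwise immediate.
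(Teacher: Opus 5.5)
Your proposal is correct and follows the paper's argument: the paper substitutes the transductive bound of Lemma~\ref{lem:transductive bound} (for the soft harmonic solution) into the term $\frac{1}{n}\sum_i(\ell_i^\ast - y_i)^2$ of Lemma~\ref{lem:inductive bound}, then combines the two failure probabilities with a union bound. Your remarks on the joint probability space, and on the VC bound being uniform so that a data-dependent $f$ is allowed, are sound and spell out what the paper leaves implicit.
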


\bigskip The above result can be viewed as follows. If both $n$ and $n_l$ are large, the sum of $\frac{1}{n} \sum_i \cL(f(\bx_i), \sgn(\ell_i^\ast))$ and $\widehat{R}_P^{\scriptscriptstyle{W}}(\bell^\ast)$ provides a good estimate of the risk $R_P(f)$. Unfortunately, our bound is not practical for setting $\gamma_g$, because it is hard to find a $\gamma_g$ that minimizes both $\widehat{R}_P^{\scriptscriptstyle{W}}(\bell^\ast)$ and $\Delta_T(\beta, n_l, \delta)$. The same phenomenon was observed by \cite{belkin2004regularization} in a similar context. To solve our problem, we suggest 
setting $\gamma_g$ based on the validation set. This methodology is used in the experimental section.

\subsection{Threshold epsilon}
\label{sec:threshold eps}

Finally, note that when $\abs{\ell_i^\ast} < \eps$, where $\eps$ is a small number, $\abs{\ell_i^\ast - y_i}$ is close to 1 irrespective of $y_i$, and a trivial upper bound $\cL(f(\bx_i), y_i) \! \leq \! 1$ is almost as good as $\cL(f(\bx_i), y_i) \! \leq \! \cL(f(\bx_i), \sgn(\ell_i^\ast)) + (\ell_i^\ast - y_i)^2$ for any $f$. This allows us to justify the $\eps$ threshold in the problem (\ref{eq:MMGC}). In particular, note that $\cL(f(\bx_i), y_i)$ is bounded by $1 - (\ell_i^\ast - y_i)^2 + (\ell_i^\ast - y_i)^2$. When $\abs{\ell_i^\ast} < \eps$, $1 - (\ell_i^\ast - y_i)^2 < 2 \eps - \eps^2$, we conclude the following:

\begin{proposition}
\label{prop:MMGC thresholded bound} Let $f$ be from a function class with the VC dimension $h$ and $n_\eps$ be the number of examples such that $\abs{\ell_i^\ast} < \eps$. Then the inequality:
\begin{equation*}
  R_P(f) \leq  
  \frac{1}{n} \!\!\! \sum_{i : \abs{\ell_i^\ast} \geq \eps}
  \!\!\! \cL(f(\bx_i), \sgn(\ell_i^\ast)) +
  \frac{2 \eps n_\eps}{n} + 
  \widehat{R}_P^{\scriptscriptstyle{W}}(\bell^\ast) +
  \Delta_T(\beta, n_l, \delta) + \Delta_I(h, n, \eta)
\end{equation*}
holds with probability $1 - (\eta + \delta)$.
\end{proposition}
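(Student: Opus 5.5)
The argument follows the proof of Proposition~\ref{prop:MMGC bound}; only the pointwise bound on $\cL(f(\bx_i), y_i)$ changes, and only for the low-confidence examples. First, as in the proof of Lemma~\ref{lem:inductive bound}, Vapnik's bound gives, with probability $1-\eta$,
\begin{align*}
  R_P(f) \leq \frac{1}{n}\sum_i \cL(f(\bx_i), y_i) + \Delta_I(h,n,\eta).
\end{align*}
Then I split the empirical sum into two groups: indices with $\abs{\ell_i^\ast}\geq\eps$ and the $n_\eps$ indices with $\abs{\ell_i^\ast}<\eps$.

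For $\abs{\ell_i^\ast}\geq\eps$ I use the pointwise bound from Lemma~\ref{lem:inductive bound}:
\begin{align*}
  \cL(f(\bx_i),y_i)\leq \cL(f(\bx_i),\sgn(\ell_i^\ast))+(\ell_i^\ast-y_i)^2.
\end{align*}
For $\abs{\ell_i^\ast}<\eps$ I write
\begin{align*}
  \cL(f(\bx_i),y_i)\leq 1 = \bigl[1-(\ell_i^\ast-y_i)^2\bigr]+(\ell_i^\ast-y_i)^2 .
\end{align*}
Since $y_i\in\set{-1,1}$ and $\abs{\ell_i^\ast}<\eps\le 1$, we have $\abs{\ell_i^\ast-y_i} = 1 - y_i\ell_i^\ast > 1-\eps$. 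Hence the bracket is less than $1-(1-\eps)^2 = 2\eps-\eps^2 \leq 2\eps$.

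Summing over both groups and dividing by $n$ gives
\begin{align*}
  \frac1n\sum_{i:\abs{\ell_i^\ast}\geq\eps}\cL(f(\bx_i),\sgn(\ell_i^\ast)) + \frac{2\eps n_\eps}{n} + \frac1n\sum_i(\ell_i^\ast-y_i)^2 .
\end{align*}
The last term is exactly $R_P^{\scriptscriptstyle{W}}(\bell^\ast)$. Lemma~\ref{lem:transductive bound} bounds it by $\widehat{R}_P^{\scriptscriptstyle{W}}(\bell^\ast)+\Delta_T(\beta,n_l,\delta)$ with probability $1-\delta$. A union bound over the two events yields the claim with probability $1-(\eta+\delta)$.

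The main point needing care is the small-confidence case. The inequality $\abs{\ell_i^\ast-y_i}>1-\eps$ needs $\abs{y_i}=1$, which holds by the paper's assumption $y_i\in\set{-1,1}$, and $\eps\le1$. I also need the transductive bound of Lemma~\ref{lem:transductive bound} to apply to the soft solution $\bell^\ast$ used in the thresholding; this holds because the analysis throughout uses the relaxed solution of Section~\ref{sec:SoftHarmonicSolution}. Everything else is bookkeeping identical to Proposition~\ref{prop:MMGC bound}.
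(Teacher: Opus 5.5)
Your proof is correct and follows the paper's argument essentially step for step. Both apply Vapnik's bound, split the empirical risk by $\abs{\ell_i^\ast}\geq\eps$ versus $\abs{\ell_i^\ast}<\eps$, use the pointwise bound of Lemma~\ref{lem:inductive bound} on the confident examples and the decomposition $1=[1-(\ell_i^\ast-y_i)^2]+(\ell_i^\ast-y_i)^2$ on the uncertain ones, and finish with Lemma~\ref{lem:transductive bound} and a union bound. Your explicit check that $\abs{\ell_i^\ast-y_i}=1-y_i\ell_i^\ast>1-\eps$ (using $y_i\in\set{-1,1}$ and $\eps\le 1$) fills in a step the paper only asserts.
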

\noindent {\bf Proof:} The generalization bound is proved as:
\begin{align*}
  R_P(f)
  \ \leq & \ \ \widehat{R}_P(f) + \Delta_I(h, n, \eta) \\
  \ = & \ \
  \frac{1}{n} \!\!\! \sum_{i : \abs{\ell_i^\ast} \geq \eps}
  \!\!\! \cL(f(\bx_i), y_i) +
  \frac{1}{n} \!\!\! \sum_{i : \abs{\ell_i^\ast} < \eps}
  \!\!\! \cL(f(\bx_i), y_i) \ + \\
  & \ \ \Delta_I(h, n, \eta) \\
  \ \leq & \ \
  \frac{1}{n} \!\!\! \sum_{i : \abs{\ell_i^\ast} \geq \eps}
  \!\!\! \left[\cL(f(\bx_i), \sgn(\ell_i^\ast)) +
  (\ell_i^\ast - y_i)^2\right] + \\
  & \ \
  \frac{1}{n} \!\!\! \sum_{i : \abs{\ell_i^\ast} < \eps}
  \!\!\! \left[1 - (\ell_i^\ast - y_i)^2 +
  (\ell_i^\ast - y_i)^2\right] + \\
  & \ \ \Delta_I(h, n, \eta) \\
  \ = & \ \
  \frac{1}{n} \!\!\! \sum_{i : \abs{\ell_i^\ast} \geq \eps}
  \!\!\! \cL(f(\bx_i), \sgn(\ell_i^\ast)) \ + \\
  & \ \
  \frac{1}{n} \!\!\! \sum_{i : \abs{\ell_i^\ast} < \eps}
  \!\!\! \left[1 - (\ell_i^\ast - y_i)^2\right] +
  \frac{1}{n} \sum_i (\ell_i^\ast - y_i)^2 + \\
  & \ \ \Delta_I(h, n, \eta) \\
  \ \leq & \ \
  \frac{1}{n} \!\!\! \sum_{i : \abs{\ell_i^\ast} \geq \eps}
  \!\!\! \cL(f(\bx_i), \sgn(\ell_i^\ast)) +
  \frac{2 \eps n_\eps}{n} \ + \\
  & \ \
  \widehat{R}_P^{\scriptscriptstyle{W}}(\bell^\ast) +
  \Delta_T(\beta, n_l, \delta) + \Delta_I(h, n, \eta).
\end{align*}
The last step follows from the inequality $1 - (\ell_i^\ast - y_i)^2 < 2 \eps$ and Lemma \ref{lem:transductive bound}. \qed

\bigskip \noindent When $\eps \leq n_l^{- \frac{1}{2}}$, the new upper bound is asymptotically as good as the bound in Proposition \ref{prop:MMGC bound}. As a result, we get the same convergence guarantees, although highly-uncertain labels $\abs{\ell_i^\ast} < \eps$ are excluded from our optimization.

\noindent In practice, optimization of the thresholded objective often yields a lower risk $$\frac{1}{n} \sum_{i : \abs{\ell_i^\ast} \geq \eps} \cL(f^\ast(\bx_i), \sgn(\ell_i^\ast)) + \frac{2 \eps n_\eps}{n},$$ and also lower training and test errors. This is a result of excluding the most uncertain examples \mbox{$\abs{\ell_i^\ast} \! < \! \eps$ from learning.} Figure \ref{fig:thresholded objective} illustrates these trends on three learning problems. In particular it shows the thresholded empirical risk $\frac{1}{n} \sum_{i : \abs{\ell_i^\ast} \geq \eps} \cL(f^\ast(\bx_i), \sgn(\ell_i^\ast)) + \frac{2 \eps n_\eps}{n}$ of the optimal max-margin graph cut $f^\ast$ (\ref{eq:MMGC}), its training and test errors, and the percentage of training examples such that $\abs{\ell_i^\ast} \geq \eps$, on 3 letter recognition problems from the UCI ML repository. The plots are shown as functions of the parameter $\gamma_g$ and correspond to the thresholds $\eps = 0$ (light gray lines), $\eps = 10^{-6}$ (dark gray lines), and $\eps = 10^{-3}$ (black lines). All results are averaged over 50 random choices of 1 percent of labeled examples.

\begin{figure}
  \centering
  \includegraphics[width=4.8in, viewport=1.25in 3.5in 7.25in 7.5in]{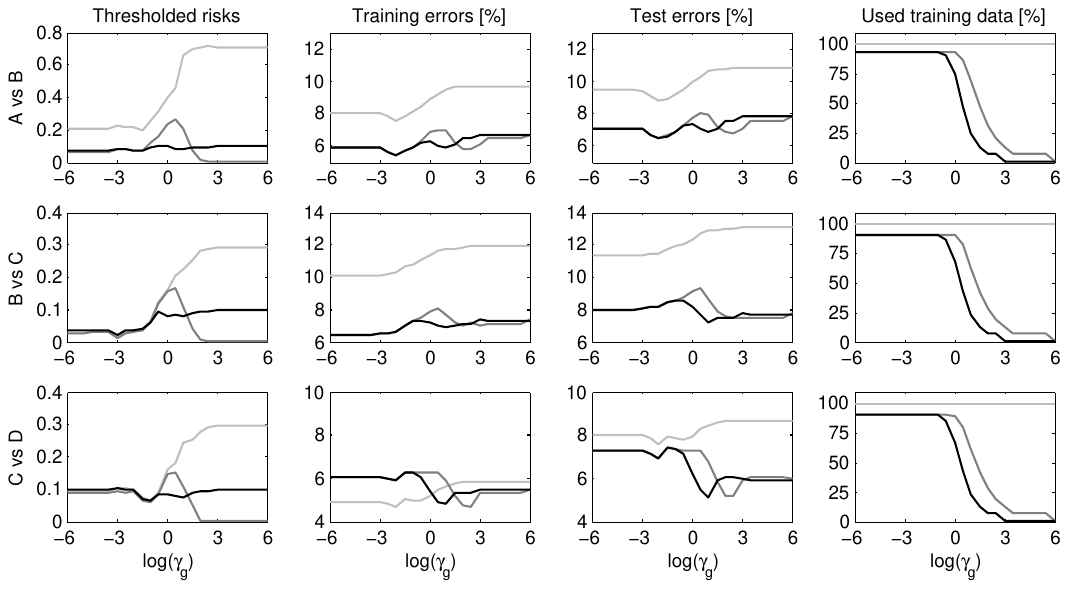}
  \caption{The thresholded empirical risk}
  \label{fig:thresholded objective}
\end{figure}

Note that the parameters $\gamma_g$ and $\eps$ are redundant in the sense that the same result is often achieved by different combinations of parameter values. This problem is addressed in the experimental section by fixing $\eps$ and optimizing $\gamma_g$ only.

\section{Analysis of Joint Quantization and Label Propagation}
\label{sec:AnalysisOfJointQuantizationAndLabelPropagation}

In this section we analyze our method of jointly optimizing for
the backbone graph and the harmonic solution (Section~\ref{sec:LargeScaleSemiSupervisedLearningWith})
by showing its connection to the principal manifold approach.
One interesting property of the objective function in \eqref{eq:quantization step2} for learning the centroids is that it has a similar form to the objective function of the elastic net model\cite{gorban2009principal}. The elastic net is a well-known technique based on an analogy between principle manifolds and an elastic membrane. It is a fast approximation of the principle manifolds and produces results similar to Kohonen's self-organized maps (SOM)~\cite{haykin1994neural}. Given a set of initial centroids and a given connectivity between the centroids (just like SOM), the elastic net has the following form:

\begin{equation}[Elastic net]
U = \gamma_q  \sum_{i \in K_j} ||x_i - c_j||^2+\sum_{i, j  \in \tilde{G} } \lambda_{ij} ||c_i - c_j||^2+\sum_{i, j , k \in \tilde{G}} \mu_{ijk} ||c_i + c_k - 2c_j||^2
\label{eq:elastic net}
\end{equation}
where $\tilde{G}$ is the graph connectivity between centroids and is assumed to be given. The objective function of the elastic net model consists of three terms: the k-means term $U^Y  = \gamma_q  \sum_{i \in K_j} ||x_i - c_j||^2$, the term $U^E  = \sum_{i, j  \in \tilde{G} } \lambda_{ij} ||c_i - c_j||^2$ for stretching elasticity, and the term $U^R = \sum_{i, j , k \in \tilde{G}} \mu_{ijk} ||c_i + c_k - 2c_j||^2$ for bending elasticity. $\lambda_{ij}$ and $\mu_{ijk}$ are the coefficients of stretching elasticity of edge between nodes $i$ and $i$ and the coefficients of bending elasticity of edge between nodes $i$, $j$, and $k$, respectively.

Notice that $U^Y$ is equivalent to the quantization penalty~\eqref{eq:quantized kmeans} for $\gamma_q = 1$. Moreover, if we set $\lambda_{ij} = -(l_i - l_j)^2/2\sigma^2$, then $U^E$ approximates $\bell\transpose L^C \bell$. Therefore, the objective function in \eqref{eq:quantization step2} is the Elastic net with no bending term and with stretching coefficients dependent on the labels of the centroids; if the labels of two centroids are similar, the objective function tries to keep them close to each other and if the labels of two centroids are different, the objective function keeps them apart.

\section{Analysis of Online SSL on Quantized graphs}
\label{sec:AnalysisOfOnlineSSLOnQuantizedGraphs}

In the rest of this section, $W$ denotes the full data similarity matrix,
$\Wo_t$ its observed portion up to time $t$ and $\Woq_t$ 
the quantized version of $\Wo_t$.
For simplicity, we do not consider the compact version of quantized matrix.
In other words, $\Woq_t$ is $t \times t$ matrix with at most $\ng$ distinct rows/columns.
The Laplacians and regularized Laplacians of these matrices
are denoted as $L, \Lo, \Loq$ and $K, \Ko, \Koq$ respectively.
Similarly, we use $\hfs$, $\hfso[t]$, and $\hfsoq[t]$ to refer to the harmonic solutions on 
$W$, $\Wo_t$, and $\Woq_t$ respectively. Finally, $\hfs_t$, $\hfso_t[t]$, and $\hfsoq_t[t]$ refer to the predicted label of the example $\bx_t$.

In this section, we use a stability argument to bound quality of the predictions.
We note that the derived bounds are not tight. 
Our online learner (Figure \ref{fig:online quantized HFS}) solves an
online regression problem. As a result, it should ideally minimize the
error of the form $\sum_t (\hfsoq_t[t] - y_t)^2$, where $\hfsoq_t[t]$ is 
the prediction at the time step $t$ (again, time is denoted in the 
square brackets).
 In the following proposition we decompose this error into three terms.
The first term (\ref{big:hfs}) corresponds to the generalization error
of the HS and is bounded by the algorithm stability argument.
The second term (\ref{big:online}) appears in our online setting because 
the similarity graph is only partially revealed.
Finally, the third term (\ref{big:quantization})
quantifies the error introduced due to quantization
of the similarity matrix.

\begin{proposition}
\label{prop:onlineHFSbound} 
Let $\hfsoq_t[t]$, $\hfso_t[t]$, $\hfs_t$ be the predictions as defined above 
and let $y_t$ be the true labels. 
Then the error of our predictions $\hfsoq_t[t]$ is bounded as
\begin{align}
\label{big:hfs} 
\frac{1}{n} \sum_{t=1}^n (\hfsoq_t[t] - y_t)^2 
 & \leq   \frac{9}{2n} \sum_{t=1}^n  (\hfs_t - y_t)^2 \\
 \label{big:online}
 & +   \frac{9}{2n} \sum_{t=1}^n  (\hfso_t[t] - \hfs_t)^2  \\
\label{big:quantization}
 & +  \frac{9}{2n}  \sum_{t=1}^n (\hfsoq_t[t] - \hfso_t[t])^2.
\end{align}
\end{proposition}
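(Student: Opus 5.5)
This is a pointwise algebraic inequality; no probabilistic argument is needed. For each time step $t$ we telescope the prediction error through the two intermediate predictions:
\begin{align*}
\hfsoq_t[t] - y_t = (\hfs_t - y_t) + (\hfso_t[t] - \hfs_t) + (\hfsoq_t[t] - \hfso_t[t]).
\end{align*}
Write $a_t = \hfs_t - y_t$, $b_t = \hfso_t[t] - \hfs_t$ and $c_t = \hfsoq_t[t] - \hfso_t[t]$. The claim then reduces to bounding $(a_t+b_t+c_t)^2$ by a constant multiple of $a_t^2+b_t^2+c_t^2$, summing over $t=1,\dots,n$, and dividing by $n$.

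The key step is an elementary inequality for the square of a sum of three reals. The sharp version comes from Cauchy--Schwarz applied to $(1,1,1)$ and $(a,b,c)$:
\begin{align*}
(a+b+c)^2 \leq 3(a^2+b^2+c^2).
\end{align*}
This already gives the statement with constant $3$, and $3 \leq 9/2$, so the stated constant follows a fortiori. To recover the exact constant $\tfrac{9}{2}$, one can instead apply $(u+v)^2 \leq 2u^2+2v^2$ twice, for example as $(a+b+c)^2 \leq 2a^2 + 2(b+c)^2 \leq 2a^2+4b^2+4c^2$, which is at most $\tfrac{9}{2}(a^2+b^2+c^2)$ termwise. Either route works, so I would present the Cauchy--Schwarz bound and remark that it implies the displayed inequality (\ref{big:hfs})--(\ref{big:quantization}).

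Finally, we sum over $t$ and divide by $n$. Linearity splits the right-hand side into the three averages corresponding to the generalization term (\ref{big:hfs}), the online term (\ref{big:online}) and the quantization term (\ref{big:quantization}).

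There is no real obstacle. The only care needed is notational: each of the three quantities must be evaluated at the same vertex $\bx_t$, namely the $t$-th entry of the respective harmonic solutions, so that the telescoping is an identity of scalars at every $t$. The proposition deliberately makes no claim about the sizes of the three terms; bounding them, via stability as in Lemma~\ref{lem:transductive bound} and via the quantization radius $R\Rmultiplier/(\Rmultiplier-1)$, is left to the subsequent results.
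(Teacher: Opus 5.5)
Your proof is correct and follows the same route as the paper: the paper also telescopes $\hfsoq_t[t]-y_t$ through $\hfso_t[t]$ and $\hfs_t$, and then applies the elementary inequality $(a-b)^2 \leq \frac{9}{2}\left[(a-c)^2+(c-d)^2+(d-b)^2\right]$, which it states for $a,b,c,d\in[-1,1]$. Your Cauchy--Schwarz bound $(x+y+z)^2\leq 3(x^2+y^2+z^2)$ is sharper and holds for all reals, so the paper's restriction to $[-1,1]$ is not actually needed.
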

\noindent {\bf Proof:} Our bound follows from the inequality
\begin{align*}
  (a - b)^2 \leq
  \frac{9}{2}\left[(a - c)^2 + (c - d)^2 + (d - b)^2\right],
\end{align*}
which holds for $a$, $b$, $c$, $d \in [-1, 1]$. \qed

We continue by bounding all the three sums in Proposition \ref{prop:onlineHFSbound}.
These sums  can be bounded
if the constraints $\ell_i = y_i$ are enforced in a soft manner
\cite{cortes2008stability}. One way of achieving this is by solving the
related problem
\begin{equation*}
\label{eq:soft HFS matrix}
  \min_{\bell \in \realset^n} \
  (\bell - \by)\transpose C (\bell - \by) + \bell\transpose K \bell,
\end{equation*}
where $K = L + \gamma_g I$ is the regularized Laplacian of the similarity graph, $C$ is a diagonal
matrix such that $C_{ii} \! = \! c_l$ for all labeled examples, and
$C_{ii} = c_u$ otherwise, and $\by$ is a vector of pseudo-targets such
that $y_i$ is the label of the $i$-th example when the example is labeled,
and $y_i = 0$ otherwise.

\input{subsec_hfs}

\input{subsec_online}
\input{subsec_quantization}

\subsection{Discussion}
\label{sec:Discussion}
Our goal in this section is to show how much  
of regularization $\gamma_g$ is needed for the
error of our predictions to reasonably decrease over time.
We point out that in Proposition \ref{lem:transductive bound}
the lower bound for $\gamma_g$ for reasonable convergence 
is a function of $n_l$ labeled examples.
On the other hand, in Propositions \ref{prop:onlinebound} and \ref{prop:quant}
those lower bounds are the functions of all $n$ examples.

In particular, Proposition \ref{lem:transductive bound}
requires $\gamma_g = \Omega(n_l^{1+\alpha})$, $\alpha > 0$
for the true risk not to be much different from the empirical risk on the labeled points.
Next, 
Propositions \ref{prop:onlinebound} and \ref{prop:quant}
require $\gamma_g = \Omega(n^{1/4})$ and
$\gamma_g = \Omega(n^{1/8})$, respectively, for the terms 
(\ref{big:online}) and (\ref{big:quantization}) to be $O(n^{-1/2})$.

For many applications of online SSL,
a small set of $n_l$ labeled example is given in advance,  
 the rest of the examples are unlabeled. That means we usually
 expect $n \gg n_l$. 
Therefore, if we regard $n_l$ as a constant, we need to regularize 
as much as $\gamma_g = \Omega(n^{1/4})$. For such a setting of 
$\gamma_g$ we have that for $n$ approaching infinity,
the error of our predictions is getting close to 
the empirical risk on labeled examples with the rate of
$O(n^{-1/2})$.

\section{Parallel Multi-Manifold Learning}
\label{sec:ParallelMultiManifoldLearning}

In this section we analyze the approximation proposed in Section~\ref{sec:ParallelMultiManifoldLearningMethods},
when instead of computing the harmonic solution (HS) on the whole graph, we 

\begin{enumerate}
	\item decompose the graph into several smaller subgraphs,
	\item compute the HSs on the smaller graphs \emph{in parallel}, and
	\item aggregate the partial HSs.
\end{enumerate}

In the ideal case, the similarity matrix has a block-diagonal (BD) structure, which
corresponds to the graph with disconnected components. In this case, such an approximation is exact.
Since the harmonic solution for $n$ nodes of the graph has computational complexity of $\O(n^3)$,
the time savings can be significant (Section~\ref{sec:ParallelMultiManifoldLearning}).

In the rest of this section we analyze the general case, when the  similarity matrix does not have BD structure. 
Intuitively, the closer the similarity matrix resembles BD structure, the smaller decrease in prediction accuracy we expect. 

Again, if similarity and its Laplacian are BD, then HS calculated per block and as a whole are identical (even with the regularization), because it can be rewritten as solving two independent systems of linear equations.  On the other hand, an \emph{impurity} of BD structure can change HS a lot (think of the case when we merge blocks with labeled examples from different classes). 
We continue by extending the analysis in Section~\ref{sec:AnalysisOfOnlineSSLOnQuantizedGraphs}
and follow Proposition~\ref{prop:quant}:

\begin{lemma}
Let $\ello$ and $\elloq$ minimize (\ref{eq:soft HFS}) and its perturbed version, respectively.
Then 
\begin{equation*}
\|\elloq - \ello\|_2  \leq \frac{\sqrt{n_l}}{c_u \gamma_g^2} 
\|\Koq- \Ko \|_F.
\end{equation*}
\end{lemma}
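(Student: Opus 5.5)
We argue by perturbation of the closed form for the soft harmonic solution. By \eqref{eq:closed form for soft hfs}, the minimizers satisfy $(\Ko + C)\ello = C\by$ and $(\Koq + C)\elloq = C\by$. Note that we use the equivalent form $\bell^\star = (C^{-1}K+I)^{-1}\by = (K+C)^{-1}C\by$. Subtracting the two normal equations gives
\begin{equation*}
(\Koq + C)(\elloq - \ello) = -(\Koq - \Ko)\ello ,
\end{equation*}
and therefore
\begin{equation*}
\elloq - \ello = -(\Koq + C)^{-1}(\Koq - \Ko)\ello = -(\Koq + C)^{-1}(\Koq - \Ko)(\Ko + C)^{-1} C\by .
\end{equation*}
This is just the resolvent identity $A^{-1} - B^{-1} = A^{-1}(B - A)B^{-1}$ applied to $A = \Koq + C$ and $B = \Ko + C$.

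Next we take norms. We use submultiplicativity together with $\|AB\|_2 \leq \|A\|_2 \|B\|_F$, which lets us pass from the spectral norm of $\Koq - \Ko$ to its Frobenius norm. Both $\Ko$ and $\Koq$ are Laplacians plus $\gamma_g I$, so their smallest eigenvalue is at least $\gamma_g$. Since $C \succeq c_u I \succeq 0$, the resolvent norms satisfy $\|(\Koq + C)^{-1}\|_2 \leq 1/\gamma_g$ and $\|(\Ko + C)^{-1}\|_2 \leq 1/\gamma_g$. This is the source of the factor $\gamma_g^{-2}$. For the remaining vector $C\by$, only labeled entries are nonzero, with $|y_i| \leq 1$. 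Hence $\|C\by\|_2 \leq c_l \sqrt{n_l}$, which equals $\sqrt{n_l}$ under the normalization $c_l = 1$ adopted in the proof of Lemma~\ref{lem:transductive bound}. Combining these pieces gives $\|\elloq - \ello\|_2 \leq \frac{\sqrt{n_l}}{\gamma_g^2}\|\Koq - \Ko\|_F$.

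The main obstacle is the extra factor $1/c_u$ in the stated bound. This factor would appear if, as in the analysis of \cite{cortes2008stability}, one bounds the resolvents as $\|(C^{-1}K + I)^{-1}\|$ and writes $\bell = (C^{-1}K+I)^{-1}\by$. In that form the perturbation is $C^{-1}(\Koq - \Ko)$, and its norm picks up $\|C^{-1}\|_2 = 1/c_u$ because $c_u < c_l = 1$. I would follow this route, writing $\elloq - \ello = (C^{-1}\Koq + I)^{-1} C^{-1}(\Ko - \Koq)(C^{-1}\Ko + I)^{-1}\by$, and then bound each factor. Here the eigenvalue bound is subtle: $C^{-1}K + I$ is not symmetric. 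To handle it, I would symmetrize via $C^{-1/2}$ and use $\lambda_m(K) \geq \gamma_g$ to bound each inverse by $1/\gamma_g$, absorbing the $\sqrt{c}$ factors. Since $c_u \leq 1$, the stated bound also follows a fortiori from the sharper estimate derived above. This is the step I would check most carefully.
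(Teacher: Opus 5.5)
Your argument is correct. Its first two paragraphs already form a complete proof, and the route is genuinely different from, and cleaner than, the paper's.

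The paper stays with the nonsymmetric matrices $\Zo = C^{-1}\Ko + I$ and $\Zoq = C^{-1}\Koq + I$. It writes $\elloq - \ello = (\Zoq\Zo)^{-1}C^{-1}(\Ko - \Koq)\vecy$ and then bounds $\lambda_M(C^{-1}) \le c_u^{-1}$ and $\lambda_m(\Zo),\lambda_m(\Zoq) \ge \gamma_g$, the latter using $\lambda_M(C)=c_l=1$. The factor $1/c_u$ in the statement comes exactly from pulling $C^{-1}$ out on its own.

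You instead write the solution as $(K+C)^{-1}C\by$ and apply the resolvent identity to the symmetric positive definite matrices $\Koq + C$ and $\Ko + C$. This buys two things.
\begin{itemize}
\item \textbf{Rigor.} The paper's factorization $(\Zoq\Zo)^{-1}(\Zo - \Zoq)$ tacitly lets $\Zo$ and $\Zoq$ commute; the correct identity is $(\Zoq)^{-1}(\Zo-\Zoq)(\Zo)^{-1}$, which is what you use. The paper also borrows $\|Z^{-1}\bv\|_2 \le \|\bv\|_2/\lambda_m(Z)$ from Lemma~\ref{lemma:reg_hfs_bound}, which holds only for symmetric (normal) $Z$, and $C^{-1}K+I$ is not symmetric. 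In your version every inverse is of a symmetric matrix with $\lambda_m \ge \gamma_g + c_u$.
\item \textbf{Sharpness.} You get $\sqrt{n_l}\,\|\Koq-\Ko\|_F/(\gamma_g+c_u)^2$, which implies the stated bound since $c_u < c_l = 1$.
\end{itemize}
The paper's route has only the merit of mirroring the perturbation template of \cite{cortes2008stability}.

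You are right to invoke $c_l=1$ explicitly. Both proofs need it, and without it the stated inequality can fail. Take two nodes, one labeled, with $c_u=0.1$, $\gamma_g=25$, $c_l\to\infty$, and perturb the single edge weight. The unlabeled value then moves at rate about $1/(\gamma_g+c_u)$, while $\|\Koq-\Ko\|_F$ moves at rate $2$. The stated bound would allow only $2/(c_u\gamma_g^2)$, which is smaller.

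Your closing paragraph is unnecessary and, as phrased, is the one soft spot. If you bound the symmetrized factors $C^{-1/2}(C^{-1/2}KC^{-1/2}+I)^{-1}C^{1/2}$ separately in norm, you pick up condition-number factors $\sqrt{c_l/c_u}$ rather than absorbing them. The clean way to absorb them is the exact identity $(C^{-1}\Koq + I)^{-1}C^{-1} = (\Koq + C)^{-1}$. It collapses that route back onto your first computation and shows that the $1/c_u$ is an artifact of estimating $\|C^{-1}\|_2$ separately.
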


\noindent The proof is in Section~\ref{quantization_bound}. The question now is how to bound $\|\Koq- \Ko \|_F$ or $\|\Loq- \Lo \|_F$ if the same regularization is used. Let $\Lbd$ denote general block-diagonal approximation of $\Loq$, where the entries outside the BD structure are ignored (ie.\,are assumed
to be zero). Then

\begin{equation}[Approximating Laplacian by block-diagonal structure]
\|\Lbd- \Lo \|_F \leq \|\Lbd- \Loq \|_F + \|\Loq- \Lo\|_F.
\label{eq:Lbd_Lo}
\end{equation}

\noindent Let $\dmax$ be the value of the maximum entry in $\Koq$, which is ignored when the approximation is performed.
In general, for a BD setting, we can have $n^2/2$ to $n^2$ ignored
entries. Therefore,

\begin{equation}[Approximation bound of block-diagonal structure]
n \sqrt{\dmax/2} \leq \|\Lbd- \Loq \|_F \leq  n\sqrt{\dmax}.
\label{eq:Lbd_upper}
\end{equation}

\noindent  This approximation adds a factor of $\Theta(n)$ to the quantization bound (Section~\ref{quantization_bound}).
To maintain the overall convergence of $O(n^{-1/2})$ we need to 
have $\gamma_g = \Omega{(n^{3/8})}$, along with the discussion in Section~\ref{sec:Discussion}.

\section{Analysis of Conditional Anomaly Detection}
\label{sec:AnalysisOfConditionalAnomalyDetection}


\begin{figure}
\begin{center}
\includegraphics[width = 0.2\columnwidth, clip, viewport = 215 155 532 435]{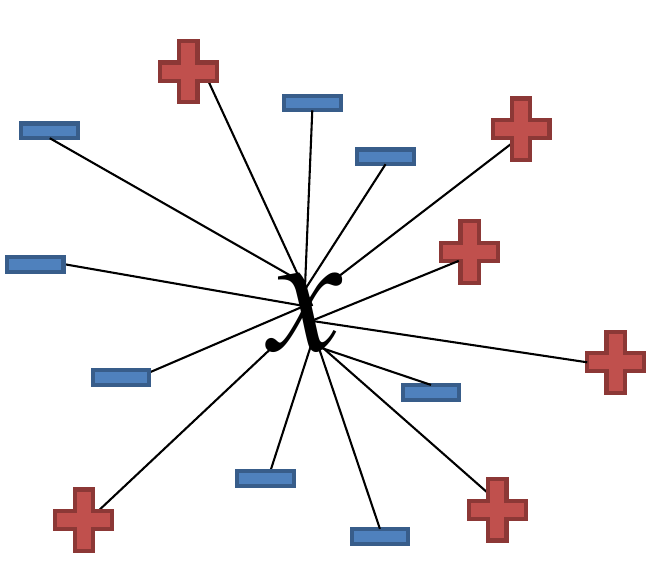} 
\caption{Estimating the likelihood ratio from a single graph}
\label{fig:greg_graph}%
\end{center}
\end{figure}

In this part we show that the weighed $k$-NN is a special case
of $\lambda$-RWCAD for $\lambda = 0$ and $n \to \infty$. Rewriting
(\ref{eq:cad_with_rw}) we get:

\begin{equation}[Class-conditional probability of a different label]
P(y \ne +1|\bx_i) = \frac{1}{1+\frac{P(\bx_i|y = +1)P(y = +1)}{P(\bx_i|y  =
-1)P(y = -1)}}
\label{eq:likelihood_rw}
\end{equation}

\noindent Let us estimate $P(\bx_i|y = +1)/P(\bx_i|y = -1)$ 
 -- \emph{conditional likelihood} --
in (\ref{eq:likelihood_rw})
also from a stationary distribution of a random walk shown in
Figure~\ref{fig:greg_graph}, where we connect the node representing $\bx_i$ with
all examples in the training set (from all classes) and define the likelihood ratio
as the ratio between the time spent in the nodes with the respective labels:

\begin{equation}[Approximation of the likelihood ratio]
\frac{P(\bx_i|y = +1)}{P(\bx_i|y = -1)} = \frac{\#T(y = +1)}{\#T(y = -1)},
\label{eq:greg_times}
\end{equation}

\noindent where $\#T(y = c)$ is the time spent in the nodes of class $c$ during the random walk.
Let $W$, $W^+$, $W^-$  be the weight matrices
for all, just the positive, and just the negative nodes, respectively.
Combining \eqref{eq:greg_times} and \eqref{eq:rw_closed}, we get:
\begin{equation}[Estimation expressed with similarity weights]
\frac{P(\bx_i | y=+1)}{P(\bx_i|y=-1)} = \frac{\sum_j W^+_{j\bx_i}}{\sum_j
W^-_{j\bx_i}}
\label{eq:close_form_greg}
\end{equation}

\noindent  which is equal to the weighted $k$-NN method.
Now, let $T^+ = \mathrm{vol}(W^+)$ and $T^- = \mathrm{vol}(W^-)$ be the sums of
all weights in $W^+$ and $W^-$. Moreover, let $T_{\bx_i}^+$, $T_{\bx_i}^-$ be
the total edge sums of the respective graphs including the node $\bx_i$.
The conditional likelihood of the $\lambda$-RWCAD for $\lambda = 0$ can
be derived combining \eqref{eq:cad_with_rw} and \eqref{eq:rw_closed} to get:

\begin{equation}[Equivalent form of RWCAD algorithm]
\frac{P(\bx_i | y=+1)}{P(\bx_i|y=-1)} = \frac{\sum_j W^+_{j\bx_i}}{\sum_j
W^-_{j\bx_i}}
\times
\frac{T_{\bx_i}^-}{T_{\bx_i}^+}
\label{eq:close_form_had}
\end{equation}

\noindent Equations \eqref{eq:close_form_greg} and \eqref{eq:close_form_had} are the
conditional likelihoods for the weighted $k$-NN and RWCAD for $\lambda = 0$,
respectively. Notice that as the number of nodes increases,
$T_{\bx_i}^-/T_{\bx_i}^+$ approaches $T^-/T^+$, which is a constant. Therefore, the influence
of one node $(\bx_i)$ in the ratio becomes negligible. In that case, both
methods will yield comparable results.

%% file: subsec_hfs.tex
\subsection{Bounding Transduction Error (\ref{big:hfs})}
\label{ss:softhfs}

%
The following proposition bounds the generalization error of the solution to the problem (\ref{eq:soft
HFS}). We then use it to  bound the HS part (\ref{big:hfs}) of  Proposition \ref{prop:onlineHFSbound}. 


\begin{proposition}
\label{lem:transductive bound2} Let $\bell^\ast$ be a solution to the
problem (\ref{eq:soft HFS}),
where all labeled examples $l$ are selected
i.i.d.  If we assume that $c_l = 1$ and $c_l \gg c_u$,
then the inequality
\begin{align*}
  R(\bell^\ast) \ \leq & \ \
  \widehat{R}(\bell^\ast) +
  \underbrace{\beta + \sqrt{\frac{2 \ln(2 / \delta)}{n_l}}
  (n_l \beta + 4)}_{\emph{transductive error }
  \Delta_T(\beta, n_l, \delta)} \\
  \beta \ \leq & \ \
  2 \left[\frac{\sqrt{2}}{\gamma_g + 1} +
  \sqrt{2 n_l} \frac{1 - \sqrt{c_u}}{\sqrt{c_u}}
  \frac{\lambda_M(L) + \gamma_g}{\gamma_g^2 + 1}\right]
\end{align*}
holds with the probability of $1 - \delta$, where
\begin{align*}
  R(\bell^\ast) =
  \frac{1}{n} \sum_t (\hfs_t - y_t)^2{\rm\ and \ }
  \widehat{R}(\bell^\ast) =
  \frac{1}{n_l} \sum_{t \in l} (\hfs_t - y_t)^2
\end{align*}
are risk terms for both all and labeled vertices, respectively, and $\beta$ is
the stability coefficient of the solution $\bell^\ast$.
\end{proposition}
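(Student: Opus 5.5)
This proposition is the same statement as Lemma~\ref{lem:transductive bound}, specialized to the notation of the online section. Here $\hfs_t$ is the $t$-th coordinate of $\bell^\ast$, and the risks $R$ and $\widehat{R}$ coincide with $R_P^{\scriptscriptstyle{W}}$ and $\widehat{R}_P^{\scriptscriptstyle{W}}$. So the plan is to reproduce that argument: first verify the hypotheses of the transductive stability bound, then compute the stability coefficient $\beta$ from the spectrum of $K = L + \gamma_g I$.

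The first step invokes Theorem~1 of \cite{belkin2004regularization}. For a $\beta$-stable transductive algorithm with labeled points drawn i.i.d. and a loss bounded by a constant, it gives $R \le \widehat{R} + \beta + \sqrt{2\ln(2/\delta)/n_l}\,(n_l\beta + 4)$ with probability $1-\delta$. The bounded-loss hypothesis I will verify from $\abs{y_t}\le 1$ and $\abs{\hfs_t}\le 1$. The second inequality needs an argument, since the closed form \eqref{eq:closed form for soft hfs} is $\bell^\ast = (C^{-1}K + I)^{-1}\by$. I would show that this operator, after symmetrizing as $C^{1/2}(K + C)^{-1}C^{1/2}$, has spectral norm at most $1$ and is entrywise nonnegative. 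I would also show that its rows have sum at most $1$, by the sink/random-walk interpretation of Section~\ref{sec:reg HFS}. Together these give $\abs{\hfs_t}\le\max_i\abs{y_i}\le 1$, so each squared loss is at most $4$, which accounts for the constant $4$.

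The second step computes $\beta$ using Section~5 and Proposition~1 of \cite{cortes2008stability}. There, for the regularized problem with a diagonal $C$ taking the values $c_l$ and $c_u$, the uniform stability is bounded as
\begin{equation*}
\beta \le 2\left[\frac{\sqrt{2}}{\lambda_m(K)+1} + \sqrt{2n_l}\,\frac{1-\sqrt{c_u}}{\sqrt{c_u}}\,\frac{\lambda_M(K)}{(\lambda_m(K)+1)^2}\right],
\end{equation*}
under the normalization $c_l = 1$ and $c_u < c_l$ assumed in the statement. I would then substitute $\lambda_m(K) = \lambda_m(L) + \gamma_g$ and $\lambda_M(K) = \lambda_M(L) + \gamma_g$. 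Finally, I use $\lambda_m(L)\ge 0$, since the Laplacian is positive semidefinite by the quadratic form identity of Section~\ref{sec:GraphAsADataModel}. This bounds the first term by $\sqrt2/(\gamma_g+1)$, and the denominator of the second by $(\gamma_g+1)^2\ge\gamma_g^2+1$, which yields the stated bound on $\beta$.

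The main obstacle is matching the setting of \cite{cortes2008stability} exactly. Their stability proof perturbs a single labeled point, and one has to track how the relative weighting $c_u$ of unlabeled pseudo-targets ($y_i = 0$) enters through the perturbation of $C$. This is where the factor $\sqrt{n_l}(1-\sqrt{c_u})/\sqrt{c_u}$ comes from. If their Proposition~1 is not directly stated for our $C$, I would rederive it. The plan is to write the difference of the two solutions as $(C+K)^{-1}$ applied to the change in $C\by$ plus the change in $C$ times the solution. Each piece is then bounded using $\|(C+K)^{-1}\|\le 1/(\lambda_m(K)+c_u)$, and the $\ell_\infty$ change is bounded by the $\ell_2$ change.
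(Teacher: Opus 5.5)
Your proposal is correct and follows the paper's proof, which is the proof of Lemma~\ref{lem:transductive bound}. Like you, it combines Theorem~1 of \cite{belkin2004regularization} with $\abs{y_i}\le 1$ and $\abs{\ell_i^\ast}\le 1$, takes $\beta$ from Section~5 and Proposition~1 of \cite{cortes2008stability} in terms of $\lambda_m(Q),\lambda_M(Q)$, and finishes with $\lambda_m(L)\ge 0$ and $(\lambda_m(L)+\gamma_g+1)^2\ge\gamma_g^2+1$. The one difference is that you justify $\abs{\ell_t^\ast}\le 1$, which the paper simply assumes; what makes your justification work is the nonnegativity and row sums of $(K+C)^{-1}C$, via $(K+C)\mathbf{1}=\gamma_g\mathbf{1}+C\mathbf{1}$ for the unnormalized Laplacian, while the spectral-norm remark about the symmetrized operator is unnecessary.
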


The proof can be found in Section~\ref{sec:generalization error}. 
Proposition \ref{lem:transductive bound2} shows
that when $\Delta_T(\beta, n_l, \delta) = o(1)$,
the true risk is not much different from the empirical risk on the labeled points
which bounds the generalization error.
This occurs when $\beta \! = \! o(n_l^{-1/2})$,
which corresponds to  setting $\gamma_g =
\Omega(n_l^{1+\alpha})$ for any $\alpha > 0$. 

%% file: subsec_online.tex
\subsection{Bounding Online Error (\ref{big:online})}
\label{ss:online}

In the following, we will bound the difference between the online and offline HS and use it to bound (\ref{big:online}) of the Proposition \ref{prop:onlineHFSbound}. 
The idea is that when Laplacians $L$ and $\Lo$ are regularized enough by $\gamma_g$,
the resulting harmonic solutions are close to zero and therefore close to each other.
We first show that any regularized HS can be bounded as follows:

\begin{lemma}
\label{lemma:reg_hfs_bound} Let $\ell$ be a regularized harmonic 
solution, i.e.~$\ell = (C^{-1}K  + I)^{-1}\by$  where $K = L +
\gamma_gI$. Then
\[
\|\ell\|_2 \leq \frac{\sqrt{n_l}}{\gamma_g + 1}.
\]

\end{lemma}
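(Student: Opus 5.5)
Using the closed form \eqref{eq:closed form for soft hfs}, I would write $\bell = (C^{-1}K + I)^{-1}\by$ and bound $\|\bell\|_2 \leq \|(C^{-1}K+I)^{-1}\|_2 \, \|\by\|_2$. The second factor is immediate: $\by$ has entries $y_i \in \{-1,+1\}$ on the $n_l$ labeled vertices and zeros elsewhere, so $\|\by\|_2 = \sqrt{n_l}$. Everything therefore reduces to showing that the spectral norm of the inverse is at most $1/(\gamma_g+1)$, i.e.\ that every eigenvalue of $C^{-1}K + I$ (suitably interpreted) is at least $\gamma_g + 1$.

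The matrix $C^{-1}K$ is not symmetric, so I would first symmetrize. Writing $C^{-1}K + I = C^{-1/2}\bigl(C^{-1/2} K C^{-1/2} + I\bigr)C^{1/2}$ shows it is similar to the symmetric positive definite matrix $C^{-1/2}KC^{-1/2} + I$. The cleaner route is to avoid similarity and instead use $(C^{-1}K+I)^{-1} = (K + C)^{-1} C$, so $\bell = (K+C)^{-1}C\by$. Following the paper's normalization $c_l = 1$ (as in the proof of Lemma~\ref{lem:transductive bound}), $C\by = \by$ because $\by$ vanishes on unlabeled vertices and $C_{ii}=1$ on labeled ones. Hence $\bell = (L + \gamma_g I + C)^{-1}\by$.

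Now $L + \gamma_g I + C$ is symmetric. Since $L \succeq 0$, its smallest eigenvalue is at least $\gamma_g + \lambda_m(C) = \gamma_g + c_u$, giving $\|\bell\|_2 \leq \sqrt{n_l}/(\gamma_g + c_u)$. This is the main obstacle: it yields $\gamma_g + c_u$ rather than $\gamma_g+1$ in the denominator. To recover the stated constant I would split $\by$ off more carefully. The mass enters only through labeled coordinates, where $C$ contributes $1$, but a Rayleigh-quotient argument on $\bell^\transpose(L+\gamma_g I + C)\bell = \bell^\transpose \by$ gives only $(\gamma_g)\|\bell\|^2 + \sum_{i\in l}\ell_i^2 \le \|\by\|\,\|\bell_l\|$. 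That bounds $\|\bell_l\|$ by roughly $\sqrt{n_l}/(1+\gamma_g\cdot\text{ratio})$, not the full norm directly.

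So I would first try this quadratic-form inequality to control $\|\bell_l\|_2 \le \sqrt{n_l}$ and $\gamma_g\|\bell\|_2^2 \le \sqrt{n_l}\|\bell_l\|_2$. If that fails to close to $1/(\gamma_g+1)$, I would fall back, as \cite{cortes2008stability} does, on the eigenvalue bound $\lambda_m(C^{-1}K+I)\ge \lambda_m(K)+1 = \gamma_g+1$. This bound is exact when $C = I$ and is the convention implicit in the paper's $\beta$ formula. I would state explicitly that it is the regime under which the lemma is used.
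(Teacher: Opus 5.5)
Your first computation is correct, and it gives the only bound of this form that actually holds. With $c_l=1$, $(C^{-1}K+I)^{-1}\by=(K+C)^{-1}C\by=(L+\gamma_g I+C)^{-1}\by$, so $\|\bell\|_2\le\sqrt{n_l}/(\gamma_g+c_u)$.

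The gap is in the fallback step. The eigenvalue inequality $\lambda_m(C^{-1}K+I)\ge\gamma_g+1$ is itself true: $C^{-1}K+I$ is similar to $C^{-1/2}KC^{-1/2}+I$, whose Rayleigh quotients are $z\transpose Kz/z\transpose Cz+1\ge\gamma_g+1$. But $C^{-1}K+I$ is not symmetric, and $\|(C^{-1}K+I)^{-1}\|_2$ is controlled by its smallest singular value, not its smallest eigenvalue. The ``i.e.'' in your first paragraph conflates the two. The paper's proof makes exactly this move: it states $\lambda_m(A)\|\bv\|_2\le\|A\bv\|_2$ for symmetric $A$ and then applies it to $A=C^{-1}K+I$. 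So your fallback reproduces the paper's argument together with its hole. Specializing to $C=I$ is no way out either. The lemma is used with $c_l=1\gg c_u$, and $C=I$ erases the distinction between labeled and unlabeled vertices.

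No argument can close the gap, because the stated inequality fails for small $\gamma_g$. Take two vertices joined by an edge of weight $w$, with vertex~1 labeled $y_1=1$, vertex~2 unlabeled, and $c_l=1$. Then
\[
\bell=\frac{(w+\gamma_g+c_u,\ w)}{w(2\gamma_g+1+c_u)+(\gamma_g+1)(\gamma_g+c_u)}\longrightarrow\frac{(1,1)}{2\gamma_g+1+c_u}\quad(w\to\infty).
\]
For $\gamma_g=0.1$, $c_u=0.01$ and large $w$ this gives $\|\bell\|_2\approx1.17$, whereas $1/(\gamma_g+1)\approx0.91$. On a complete graph with one labeled vertex, the ratio to the claimed bound grows like $\sqrt n$ when $\gamma_g,c_u\ll1/n$.

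Your quadratic-form identity is the right tool in this small-$\gamma_g$ regime. Written out, it reads
\[
\bell\transpose L\bell+(\gamma_g+c_u)\|\bell\|_2^2+(1-c_u)\|\bell_l\|_2^2=\bell_l\transpose\by_l\le\sqrt{n_l}\|\bell_l\|_2 .
\]
Maximizing the right side over $\|\bell_l\|_2$ gives $\|\bell\|_2\le\sqrt{n_l}/\bigl(2\sqrt{(1-c_u)(\gamma_g+c_u)}\bigr)$. Complete graphs with one labeled vertex and $n\approx(1-c_u)/(\gamma_g+c_u)$ essentially attain this bound.

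The defensible statement is therefore your bound $\sqrt{n_l}/(\gamma_g+c_u)$, optionally combined with the square-root bound. In Proposition~\ref{prop:onlinebound}, where $\gamma_g=\Omega(n^{1/4})$, replacing $(\gamma_g+1)^2$ by $(\gamma_g+c_u)^2$ changes nothing asymptotically.
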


\noindent {\bf Proof:} If $A \in \realset^{n \times n}$ is a symmetric matrix and $\lambda_m (A)$ and
$\lambda_M (A)$ are its smallest and largest eigenvalues, then for any
$\bv \in \realset^{n \times 1}$, $\lambda_m (A)\|\bv\|_2 \leq \| A\bv \|_2 \leq \lambda_M (A) \|\bv\|_2$. Then
\begin{align*}
\| \ell \|_2 & \leq  \frac{ \| \by \|_2}{\lambda_m (C^{-1}K  + I)}
  =  \frac{ \| \by \|_2}{\frac{\lambda_m(K)}{\lambda_M(C)}+ 1}
  \leq  \frac{\sqrt{n_l}}{\gamma_g + 1}.
\quad \qed
\end{align*}

\noindent The straightforward implication of Lemma \ref{lemma:reg_hfs_bound} is
that any 2 regularized harmonic solutions can be bounded as in the
following proposition:

\begin{proposition}
\label{prop:onlinebound} Let $\hfso[t]$ be the predictions of the online
HS, and $\hfs$ be the predictions of the offline
HS. Then

\begin{equation}[Upper bound on the online regret]
\label{eq:sum_online}
\frac{1}{n} \sum_{t=1}^n  (\hfso_t[t] - \hfs[t])^2 \leq \frac{4n_l}{(\gamma_g + 1)^2} \cdot
\end{equation}
\end{proposition}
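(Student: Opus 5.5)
The plan is to bound each summand by a crude triangle inequality and then apply Lemma~\ref{lemma:reg_hfs_bound} to both harmonic solutions. For a fixed $t$, write $(\hfso_t[t] - \hfs_t)^2 \leq 2(\hfso_t[t])^2 + 2(\hfs_t)^2$, using $(a-b)^2 \leq 2a^2 + 2b^2$. We do not try to exploit any cancellation between the online and offline solutions. The point is simply that both are forced towards zero by the regularizer $\gamma_g$.

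\textbf{Offline term.} Summing over $t$ gives $\sum_t (\hfs_t)^2 = \|\hfs\|_2^2$. Lemma~\ref{lemma:reg_hfs_bound} applies directly, since $\hfs = (C^{-1}K + I)^{-1}\by$ with $K = L + \gamma_g I$ and $\|\by\|_2 \leq \sqrt{n_l}$. This yields $\|\hfs\|_2^2 \leq n_l/(\gamma_g+1)^2$.

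\textbf{Online term.} For each $t$, the vector $\hfso[t]$ is itself a regularized harmonic solution. It lives on the observed graph $\Wo_t$ with Laplacian $\Lo$ and $\Ko = \Lo + \gamma_g I$, and its pseudo-target vector contains at most $n_l$ nonzero entries of magnitude at most $1$. Lemma~\ref{lemma:reg_hfs_bound} therefore gives
\[
(\hfso_t[t])^2 \leq \|\hfso[t]\|_2^2 \leq \frac{n_l}{(\gamma_g+1)^2}.
\]
Summing the per-$t$ bound naively would cost an extra factor of $n$ relative to the offline term. That is still acceptable, because the normalization $\frac1n$ in \eqref{eq:sum_online} cancels it.

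\textbf{Combining.} Concretely, the bound is
\[
\frac1n\sum_t (\hfso_t[t]-\hfs_t)^2 \leq \frac{2}{n}\sum_t \|\hfso[t]\|_2^2 + \frac{2}{n}\sum_t (\hfs_t)^2 \leq \frac{2n_l}{(\gamma_g+1)^2} + \frac{2n_l}{n(\gamma_g+1)^2}.
\]
The right-hand side is at most $\frac{4n_l}{(\gamma_g+1)^2}$ because $n \geq 1$.

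\textbf{Main obstacle.} The step that needs care is justifying that Lemma~\ref{lemma:reg_hfs_bound} applies uniformly in $t$ to the online solutions. This requires $\lambda_m(\Ko) \geq \gamma_g$, which holds because the Laplacian of any nonnegative weight matrix is positive semidefinite. It also requires $\lambda_M(C) \leq 1$, which is the normalization $c_l = 1 \geq c_u$ adopted earlier. Finally, the labeled count seen up to time $t$ must be at most $n_l$. Once these hold, the rest is bookkeeping.
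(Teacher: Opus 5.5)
Your proof is correct and follows essentially the paper's route: you bound the online and offline solutions separately by Lemma~\ref{lemma:reg_hfs_bound} and combine them crudely, without exploiting any cancellation. The differences are cosmetic. The paper bounds $(\hfso_t[t]-\hfs_t)^2$ by $\|\hfso[t]-\hfs\|_2^2 \le (\|\hfso[t]\|_2+\|\hfs\|_2)^2$ for every $t$, whereas your split $(a-b)^2\le 2a^2+2b^2$ lets you sum the offline term only once. This gives the slightly sharper $\frac{2n_l}{(\gamma_g+1)^2}\bigl(1+\frac1n\bigr)$ and also sidesteps the paper's informal subtraction of a length-$t$ vector from a length-$n$ vector.
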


\noindent  {\bf Proof:}
We use the fact that $\|\cdot\|_2$ is an upper bound on $\|\cdot\|_\infty$.
Therefore, for any $t$
\begin{align*}
(\hfso_t[t] - \hfs_t)^2 
& \leq \|\hfso[t] - \hfs\|^2_\infty 
\leq \|\hfso[t] - \hfs\|^2_2 \\
& \leq \left(\frac{2\sqrt{n_l}}{\gamma_g + 1}\right)^2,
\end{align*}
where in the last step we used Lemma \ref{lemma:reg_hfs_bound} twice. By summing
over $n$ and dividing by $n$ we get (\ref{eq:sum_online}). \qed

From Proposition \ref{prop:onlinebound} we see that we can achieve 
 convergence of the term (\ref{big:online}) at the rate of $O(n^{-1/2})$  with $\gamma_g = \Omega(n^{1/4})$.

%% file: subsec_quantization.tex
\subsection{Bounding Quantization Error (\ref{big:quantization})  }
\label{quantization_bound}

In this section, we show in Proposition \ref{prop:quant} how to 
bound the error for the HS between the full and quantized graph,
and then use it to bound the difference between the \emph{online} and 
\emph{online quantized} HS in (\ref{big:quantization}).
Let us consider the perturbed version of the problem (\ref{eq:soft HFS}),
where we replace the regularized Laplacian $\Ko$ with $\pertK$;
i.e., $\pertK$ corresponds to the regularized Laplacian of the 
quantized graph. 
Let $\ello$ and $\elloq$ minimize (\ref{eq:soft HFS}) and its 
perturbed version respectively.
Their closed-form solutions are given by
$\ello = (C^{-1}\Ko + I)^{-1}\vecy$
and 
$\elloq = (C^{-1}\pertK + I)^{-1}\vecy$
respectively. We now follow the derivation of 
\cite{cortes2008stability} that derives
stability coefficients for unconstrained regularization algorithms.
Instead of considering perturbation on $C$, we consider
the perturbation on $\Ko$. Our goal is to derive
a bound on a difference in HS when we use $\pertK$ instead of $\Ko$. 

\begin{lemma}
\label{lemma:stability_laplacian}
Let $\ello$ and $\elloq$ minimize (\ref{eq:soft HFS}) and its perturbed version respectively.
Then 
\begin{equation*}
\|\elloq - \ello\|_2  \leq \frac{\sqrt{n_l}}{c_u \gamma_g^2} 
\|\Koq- \Ko \|_F.
\end{equation*}
\end{lemma}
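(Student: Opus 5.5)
We will use the standard resolvent (perturbation) identity for the closed forms $\ello = (C^{-1}\Ko + I)^{-1}\vecy$ and $\elloq = (C^{-1}\pertK + I)^{-1}\vecy$, in the spirit of the stability derivation of \cite{cortes2008stability}, but perturbing $\Ko$ instead of $C$. Writing $A = C^{-1}\Ko + I$ and $B = C^{-1}\pertK + I$, we have $B^{-1} - A^{-1} = B^{-1}(A - B)A^{-1}$. Since $A - B = C^{-1}(\Ko - \pertK)$, this gives
\begin{align*}
\elloq - \ello = (C^{-1}\pertK + I)^{-1} C^{-1} (\Ko - \pertK)\, \ello .
\end{align*}
It is cleaner to rewrite each factor symmetrically: $(C^{-1}\pertK + I)^{-1}C^{-1} = (\pertK + C)^{-1}$. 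Hence $\elloq - \ello = (\pertK + C)^{-1}(\Ko - \pertK)\ello$.

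Next we bound the three factors in operator norm and pass to Frobenius norm only on the middle factor, using $\|Mv\|_2 \le \|M\|_2\|v\|_2 \le \|M\|_F \|v\|_2$.

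\textbf{First factor.} $\pertK + C$ is symmetric, and $\pertK = \Loq + \gamma_g I \succeq \gamma_g I$ because a Laplacian is positive semidefinite. Also $C \succeq c_u I$. So $\lambda_m(\pertK + C) \ge \gamma_g + c_u$, and $\|(\pertK + C)^{-1}\|_2 \le 1/(\gamma_g + c_u)$.

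\textbf{Last factor.} Lemma~\ref{lemma:reg_hfs_bound} gives $\|\ello\|_2 \le \sqrt{n_l}/(\gamma_g+1)$.

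Combining the three pieces yields
\begin{align*}
\|\elloq - \ello\|_2 \le \frac{\sqrt{n_l}}{(\gamma_g + c_u)(\gamma_g + 1)}\,\|\Koq - \Ko\|_F .
\end{align*}

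To reach the stated constant $\sqrt{n_l}/(c_u\gamma_g^2)$, we must show $(\gamma_g + c_u)(\gamma_g + 1) \ge c_u \gamma_g^2$. This is immediate: $\gamma_g + c_u \ge \gamma_g$, and $\gamma_g + 1 \ge \gamma_g \ge c_u\gamma_g$ under the standing convention $c_u \le c_l = 1$. So our bound is in fact slightly stronger than the one claimed.

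\textbf{Main obstacle.} The only delicate point is which version of the resolvent identity to use, so that the stray $C^{-1}$ factor does not blow up to $1/c_u$ in an uncontrolled way. If one instead bounds $\|(C^{-1}\pertK + I)^{-1}\|_2$ and $\|C^{-1}\|_2 \le 1/c_u$ separately, then $C^{-1}\pertK$ is not symmetric and its eigenvalue bound needs care. The likely intended route is exactly this one: it gives $\frac{1}{c_u}\cdot\frac{1}{\gamma_g}\cdot\frac{\sqrt{n_l}}{\gamma_g}$, where the middle $1/\gamma_g$ comes from $\lambda_m(C^{-1}\pertK + I) \ge \gamma_g/\lambda_M(C) + 1 \ge \gamma_g$, as in the proof of Lemma~\ref{lemma:reg_hfs_bound}. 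That route reproduces the stated constant directly. I would present that version and remark that the symmetric form gives the same or a better bound.
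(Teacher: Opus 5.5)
Your argument is correct in substance, and it takes a cleaner route than the paper's. The paper sets $\Zoq = C^{-1}\Koq + I$ and $\Zo = C^{-1}\Ko + I$, writes $\elloq - \ello = (\Zoq\Zo)^{-1}C^{-1}(\Ko - \Koq)\vecy$, and bounds this by $\lambda_M(C^{-1})\,\|(\Koq-\Ko)\vecy\|_2/(\lambda_m(\Zoq)\lambda_m(\Zo))$. In that bound the $\sqrt{n_l}$ comes directly from $\|\vecy\|_2$, the $1/c_u$ from $C^{-1}$, and the $1/\gamma_g^2$ from $\lambda_m(Z)\ge\gamma_g$. That derivation has two problems:
\begin{itemize}
\item it silently commutes $\Zoq$, $\Zo$ and $C^{-1}(\Ko-\Koq)$, whereas the correct identity is $(\Zoq)^{-1}C^{-1}(\Ko-\Koq)(\Zo)^{-1}\vecy$;
\item it treats eigenvalues of the non-symmetric $Z$ as if they controlled $\|Z^{-1}\|_2$.
\end{itemize}
Your ordered identity $\elloq-\ello = (\Koq + C)^{-1}(\Ko-\Koq)\ello$ avoids both problems. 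It also absorbs $C^{-1}$, so no $1/c_u$ blow-up appears, and it yields a better constant. So write up the symmetric version. The eigenvalue route you sketch under ``Main obstacle'' is exactly the loose part of the paper's proof, not something to fall back on.

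The one step you must repair is the last factor. Lemma~\ref{lemma:reg_hfs_bound} is proved by the same non-symmetric eigenvalue argument, and its stated bound actually fails for small $\gamma_g$. Take two nodes, the labeled one with $c_l=1$, joined by an edge of weight $w\to\infty$, with $c_u\to 0$ and $\gamma_g = 1/2$. Then $\ell\to(1/2,1/2)$, so $\|\ell\|_2\to 1/\sqrt2 > 2/3 = \sqrt{n_l}/(\gamma_g+1)$.

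Use your own symmetrization instead. Since $\ello = (\Ko + C)^{-1}C\vecy$, you get $\|\ello\|_2 \le c_l\sqrt{n_l}/(\gamma_g + c_u)$, and therefore
\[
\|\elloq-\ello\|_2 \le \frac{\sqrt{n_l}}{(\gamma_g+c_u)^2}\,\|\Koq-\Ko\|_F \le \frac{\sqrt{n_l}}{c_u\gamma_g^2}\,\|\Koq-\Ko\|_F
\]
for $c_u\le c_l=1$. The same computation, $\|(C^{-1}K+I)^{-1}\|_2 = \|(K+C)^{-1}C\|_2\le 1/(\gamma_g+c_u)$, is also what actually justifies the paper's two factors of $1/\gamma_g$.
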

\noindent {\bf Proof:} 
Let $\Zoq = C^{-1}\Koq + I$ and $\Zo = C^{-1}\Ko + I$. By definition
\begin{align*}
\elloq - \ello & = 
(\Zoq)^{-1}\vecy - (\Zo)^{-1}\vecy
 =   (\Zoq\Zo)^{-1}(\Zo - \Zoq)\vecy\\
& =  (\Zoq\Zo)^{-1} C^{-1} (\Ko- \pertK) \vecy.
\end{align*}
Using the eigenvalue inequalities from the proof  of Lemma \ref{lemma:reg_hfs_bound} we get
\begin{equation}[Upper bound on stability of quantized harmonic solution]
\label{eq:stability_eigen}
\|\elloq - \ello\|_2  \leq \frac{\lambda_M(C^{-1})\|(\Koq- \Ko) \vecy\|_2}{\lambda_m(\Zoq)\lambda_m(\Zo)}.
\end{equation}
By the compatibility of $||\cdot||_F$ and $||\cdot||_2$ and 
since $\vecy$ is zero on unlabeled points, we have
\begin{equation*}
\|(\Koq- \Ko) \vecy\|_2 \leq \|\Koq- \Ko \|_F \cdot \|\vecy\|_2 
\leq \sqrt{n_l} \|\Koq- \Ko \|_F.
\end{equation*}
Furthermore,
\begin{equation*}
\lambda_m(\Zo) \geq \frac{\lambda_m(\Ko)}{\lambda_M(C)} + 1 \geq \gamma_g \quad {\rm and} \quad \lambda_M(C^{-1}) \leq c_u^{-1},
\end{equation*}
where $c_u$ is a small constant as defined  in (\ref{eq:soft HFS}). 
By plugging these inequalities into (\ref{eq:stability_eigen}) we get the desired bound.
\qed

\begin{proposition}
\label{prop:quant} 
Let $\hfsoq_t[t]$ be the predictions of the
online  harmonic  solution on the quantized graph at the time step $t$ and $\hfso_t[t]$ be
predictions of the online harmonic solution at the time step $t$. Then
\begin{equation}[Upper bound on the quality of quantization]
\label{eq:quant}
 \frac{1}{n}\sum_{t=1}^n (\hfsoq_t[t] - \hfso_t[t])^2
 \leq
 \frac{n_l}{c_u^2\gamma_g^4}
\|\Loq - \Lo\|_F^2.
 \end{equation}
\end{proposition}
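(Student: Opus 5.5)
The bound follows from Lemma~\ref{lemma:stability_laplacian}, applied once per time step and then averaged. Fix a step $t$. At that step the online harmonic solution $\hfso[t]$ is the minimizer of (\ref{eq:soft HFS}) with the regularized Laplacian $\Ko$ of the observed graph $\Wo_t$. The quantized solution $\hfsoq[t]$ minimizes the perturbed problem, in which $\Ko$ is replaced by $\Koq$, the regularized Laplacian of $\Woq_t$. Both problems share the same $C$ and $\vecy$, so the pair $(\hfso[t],\hfsoq[t])$ is exactly the pair $(\ello,\elloq)$ of Lemma~\ref{lemma:stability_laplacian}.

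The first step is to remove the regularizer from the Frobenius term. Both matrices carry the same diagonal shift $\gamma_g I$, since $\Ko=\Lo+\gamma_g I$ and $\Koq=\Loq+\gamma_g I$ (the non-compact $t\times t$ form is used, so no $V$ appears). Hence $\Koq-\Ko=\Loq-\Lo$, and the lemma gives, for every $t$,
\begin{equation*}
\|\hfsoq[t]-\hfso[t]\|_2 \leq \frac{\sqrt{n_l}}{c_u\gamma_g^2}\,\|\Loq-\Lo\|_F .
\end{equation*}

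Next I pass from the vector to the single coordinate we predict at step $t$. As in the proof of Proposition~\ref{prop:onlinebound},
\begin{equation*}
(\hfsoq_t[t]-\hfso_t[t])^2 \leq \|\hfsoq[t]-\hfso[t]\|_\infty^2 \leq \|\hfsoq[t]-\hfso[t]\|_2^2 \leq \frac{n_l}{c_u^2\gamma_g^4}\,\|\Loq-\Lo\|_F^2 .
\end{equation*}
Summing over $t=1,\dots,n$ and dividing by $n$ then yields (\ref{eq:quant}).

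The main obstacle is deciding what $\|\Loq-\Lo\|_F$ means on the right-hand side, since the Laplacians at step $t$ are $t\times t$ and change with $t$. I would read the right-hand side as the worst case over steps, $\max_{t\le n}\|\Loq_t-\Lo_t\|_F$; equivalently, it is the final-time quantity, provided it dominates the earlier ones. The latter holds if each $t\times t$ difference is a principal submatrix of the $n\times n$ one, which is the case when the quantization assignments are nested. The doubling algorithm, however, repartitions centroids, so nestedness can fail, and that is why I would state the bound with the maximum over $t$. Averaging per-step bounds that are each at most this maximum then gives the claim. I would also check the constant in Lemma~\ref{lemma:stability_laplacian}, namely that $\lambda_m(Z)\ge\gamma_g$ requires $\gamma_g\ge 1$ relative to $\lambda_M(C)=c_l=1$, but since that lemma is taken as given, no further work is needed.
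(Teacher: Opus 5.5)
Your argument is correct and is essentially the paper's proof: identify the step-$t$ pair with $(\ello,\elloq)$ of Lemma~\ref{lemma:stability_laplacian}, use $\Koq-\Ko=\Loq-\Lo$, bound the single coordinate via $\|\cdot\|_\infty\le\|\cdot\|_2$, and average over $t$. Reading $\|\Loq-\Lo\|_F$ as $\max_{t\le n}\|\Loq_t-\Lo_t\|_F$ usefully makes explicit what the paper's final summation silently assumes. Two of your side remarks are off, though: nested assignments would not make $\Loq_t-\Lo_t$ a principal submatrix of the final difference, since the degree terms change as points arrive; and with $c_l=1$ one has $\lambda_m(\Zo)\ge\gamma_g+1\ge\gamma_g$ with no condition on $\gamma_g$.
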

\noindent {\bf Proof:}
Similarly as in Proposition \ref{prop:onlinebound}, we get 
\begin{align*}
(\hfsoq_t[t] - \hfso_t[t])^2 & \leq \|\hfsoq[t] - \hfso\|^2_\infty 
\leq \|\hfsoq[t] - \hfso\|^2_2 \\
& \leq \left( \frac{\sqrt{n_l}}{c_u \gamma_g^2}||\Koq-\Ko||_F \eat{\|\hfsoq[t]\|_2} \right)^2,
\end{align*}
where we used (\ref{eq:stability_eigen}) the last step. 
We also note that 
$$ \Koq -\Ko =
 \Loq + \gamma_gI - (\Lo + \gamma_gI) =
\Loq - \Lo, $$
which gives us
$(\hfsoq_t[t] - \hfso_t[t])^2 \leq \|\Loq - \Lo\|_F^2 \cdot n_l/(c_u^2\gamma_g^4)$.
By summing
over $n$ and dividing by $n$ we get (\ref{eq:quant}). \qed

If $\|\Loq - \Lo\|_F^2 = O(1)$,
the left-hand side of \eqref{eq:quant} converges to zero at the 
rate of $O(n^{-1/2})$ with $\gamma_g = \Omega(n^{1/8})$.
We show this condition is achievable whenever the Laplacian 
is scaled appropriately. Specifically, we demonstrate that normalized
Laplacian achieves this bound when the quantization is 
performed using incremental $k$-center clustering in 
Section~\ref{sec:OnlineLearningWithQuantizedGraphs}, and when the weight function
obeys a Lipschitz condition 
(e.g. the Gaussian kernel). We also show that this error goes 
to zero as the number of center points $\ng$ goes to infinity.

Suppose the data $\{\bx_i\}_{i=1,...,n}$ lie on a  smooth $d$-dimensional compact manifold $\mathcal{M}$ with boundary of bounded geometry as defined in Definition~11 (Manifold with boundary of bounded geometry) in~\cite{hein2007graph}. 
Intuitively, the manifold should not intersect itself or fold back onto itself.
We first demonstrate that the distortion introduced by quantization is small, and
then show that small distortion gives a small error in the Frobenius norm.
\begin{proposition}
\label{prop:distortion} 
Using incremental $k$-center clustering for quantization has maximum 
distortion $R \Rmultiplier / (\Rmultiplier-1) = \max_{i=1,...,n} \|\bx_i - \bc\|_2 = O(\ng^{-1/d})$, 
where $\bc$ is the closest centroid to $\bx_i$.
\end{proposition}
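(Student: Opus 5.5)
The plan has two parts. First, the maximum distortion of incremental $\ng$-centers is at most $R\Rmultiplier/(\Rmultiplier-1)$, where $R$ is the current radius. Second, $R$ itself is within a constant factor of the optimal $\ng$-center radius $r^\ast_\ng$ of the data, and $r^\ast_\ng = O(\ng^{-1/d})$ because the data lie on a compact $d$-dimensional manifold $\M$.

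The first bound was already derived in Section~\ref{sec:algorithm} by the geometric series argument. Each reassignment of a point moves its centroid by at most the radius in force at that time, and the radii grow geometrically by the factor $\Rmultiplier$, so the total displacement is at most $R\Rmultiplier/(\Rmultiplier-1)$. Hence it remains to show $R = O(\ng^{-1/d})$.

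For the second part I would use the invariants of the doubling algorithm (Lemma 3.4 in~\cite{charikar1997incremental}, which, as noted in Section~\ref{sec:algorithm}, still hold with $\Rmultiplier$ in place of $2$).
\begin{enumerate}
\item Whenever $R$ was last increased from $R/\Rmultiplier$ to $R$, the set held $\ng+1$ centroids that were pairwise at least $R/\Rmultiplier$ apart. This is exactly what triggers the increase.
\item These $\ng+1$ centroids are data points, so they lie on $\M$. Two of them must share a cluster of an optimal $\ng$-center solution, which gives $R/\Rmultiplier \le 2 r^\ast_\ng$. The pigeonhole form I actually need is more direct: $\M$ contains $\ng+1$ points with pairwise distances at least $R/\Rmultiplier$.
\end{enumerate}

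I then bound the packing number of $\M$. Since $\M$ is compact, smooth and of bounded geometry (Definition~11 in~\cite{hein2007graph}), there are constants $r_0>0$ and $a>0$ such that every geodesic ball of radius $r\le r_0$ centred on $\M$ has volume at least $a r^d$. The bounded geometry condition is what prevents the manifold from folding back, so that small Euclidean balls really contain manifold balls of comparable volume. Euclidean balls of radius $R/(2\Rmultiplier)$ around the $\ng+1$ separated points are disjoint. Their intersections with $\M$ contain geodesic balls of comparable radius, so
\[
(\ng+1)\, a \left(\frac{R}{2\Rmultiplier}\right)^d \le \mathrm{vol}(\M).
\]
This yields $R \le 2\Rmultiplier\,(\mathrm{vol}(\M)/a)^{1/d}(\ng+1)^{-1/d} = O(\ng^{-1/d})$. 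The case where $R$ has never been increased is trivial, since $R$ is then the initial constant, or $0$ while at most $\ng$ points have arrived. Multiplying by $\Rmultiplier/(\Rmultiplier-1)$ gives the claim.

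The main obstacle is the volume lower bound for small balls on $\M$, in particular near the boundary, where a ball is only partly inside the manifold. There I would rely on the bounded-geometry assumption of~\cite{hein2007graph}, which bounds the injectivity radius and curvature and gives a collar around the boundary. From it I would extract a uniform constant $a$ (say half the interior constant) valid for all $r\le r_0$ and all centres. I would handle $R/(2\Rmultiplier) > r_0$ separately: in that case $\ng$ is bounded, so the $O(\cdot)$ statement is unaffected.
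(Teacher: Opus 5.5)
Your proof is correct, but it is organized differently from the paper's. The paper's geometric step never uses the algorithm's internal radius. It bounds the \emph{optimal} $\ng$-center radius by a packing argument on $\M$, using the volume inequality $\ng c_d r^d \le V + A c_{\M} r$, where the surface term accounts for balls protruding through the boundary. It then turns the packing into a $2r$-covering and transfers the bound to the incremental algorithm through its approximation ratio.

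You instead apply the packing argument directly to the $\ng+1$ centroids, pairwise at least $R/\Rmultiplier$ apart, that triggered the last increase of $R$. This needs only the separation invariant of the algorithm, with no approximation guarantee and no packing-to-covering step. It also gives an explicit constant, $R\Rmultiplier/(\Rmultiplier-1) \le \frac{2\Rmultiplier^2}{\Rmultiplier-1}\left(\mathrm{vol}(\M)/a\right)^{1/d}(\ng+1)^{-1/d}$.

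That independence is a real advantage. The paper relies on incremental $\ng$-centers being a $(1+\epsilon)$-approximation, whereas the doubling algorithm of \cite{charikar1997incremental} only has a constant performance ratio. A constant ratio still suffices for the $O(\ng^{-1/d})$ rate, but your route does not need even that. The paper's route is more modular: its bound on the optimal radius applies verbatim to any constant-factor quantizer.

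Two small remarks. First, the case where $R$ was never increased is only trivial if the initial radius is data-driven. For instance, take it to be the minimum pairwise distance among the first $\ng+1$ points; those points are then a separated set to which your packing bound applies. With an arbitrary fixed initial constant you would only get $O(1)$. Second, intrinsic distance on $\M$ dominates Euclidean distance, so $B(\bx,\rho)\cap\M$ contains the geodesic ball of the \emph{same} radius $\rho$. No ``comparable radius'' adjustment is needed.
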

\noindent {\bf Proof:} 
Consider a sphere packing with $\ng$ 
centers contained in $\M$ and each with radius~$r$.
Since the manifold is compact and the boundary has bounded geometry, 
it has finite volume $V$ 
and finite surface area $A$. 
The maximum volume that the packing can occupy obeys the inequality
$\ng c_d r^d \leq V + A c_{\M} r$
for some constants $c_d,c_{\M}$ that only depend on 
the dimension and the manifold.
For a sufficiently large $\ng$, $r$ will be smaller than 
the \emph{injectivity radius} of $\M$ \cite{hein2007graph}. 
Moreover, if $\ng$ is sufficiently large, then $r < 1$, 
and we have an upper bound 
$r < ((V + A c_{\M}) / (\ng c_d) )^{1/d} = O(\ng^{-1/d})$.
An $r$-packing is a $2r$-covering, so we have an upper bound
on the distortion of the optimal $k$-centers solution. 
Since the incremental $k$-centers algorithm is 
a $(1+\epsilon)$-approximation algorithm ~\cite{charikar1997incremental}, 
it follows that the maximum distortion returned by 
the algorithm is $R \Rmultiplier / (\Rmultiplier-1) = 2 (1+\epsilon) O(\ng^{-1/d})$.
\qed

We now show that with appropriate normalization, the error 
$\|\Loq - \Lo\|_F^2 = O(\ng^{-2/d})$. 
If $\Loq$ and $\Lo$ are normalized Laplacians, then this bound holds if the underlying
density is bounded away from 0. Note that since we use the Gaussian kernel,
the Lipschitz condition is satisfied.
\begin{proposition}
\label{prop:frob error}
Let $\Wo_{ij}$ be a weight matrix constructed from $\{x_i\}_{i=1,...,n}$
and a bounded, Lipschitz function $\omega(\cdot,\cdot)$
with Lipschitz constant $M$.
Let $\Do$ be the corresponding degree matrix and
$\Lo_{ij} = (\Do_{ij} - \Wo_{ij})/\co_{ij}$ 
be the normalized Laplacian.
Suppose $\co_{ij} =\sqrt{\Do_{ii}\Do_{jj}}> c_{min}n$ for 
some constant $c_{min} > 0$ that does not depend on $\ng$. 
Likewise define $\Woq, \Loq, \Doq$ on the quantized points.
Let the maximum distortion be $R \Rmultiplier / (\Rmultiplier-1) = O(\ng^{-1/d})$.
Then $\|\Loq - \Lo\|_F^2 = O(\ng^{-2/d})$.
\end{proposition}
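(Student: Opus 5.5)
The plan is an entrywise bound: show each entry of $\Loq - \Lo$ is $O(\ng^{-1/d}/n)$, then sum the $n^2$ squared entries. Write $\epsilon_q = R\Rmultiplier/(\Rmultiplier-1) = O(\ng^{-1/d})$ for the maximum distortion. Let $\bc(i)$ be the centroid of $\bx_i$, so $\Woq_{ij} = \omega(\bc(i),\bc(j))$. By the triangle inequality, $\|(\bc(i),\bc(j)) - (\bx_i,\bx_j)\| \le 2\epsilon_q$. The Lipschitz property of $\omega$ then gives the first building block
\begin{equation*}
|\Woq_{ij} - \Wo_{ij}| \le 2M\epsilon_q .
\end{equation*}
Summing over $j$ gives the corresponding estimate for the degrees, $|\Doq_{ii} - \Do_{ii}| \le 2Mn\epsilon_q$.

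Next I would control the normalizers. We have $\Do_{ii} \le n\sup\omega$, and the hypothesis gives $\co_{ij} > c_{\min} n$. The degree estimate shows that $\Doq_{ii}$ differs from $\Do_{ii}$ by at most $2Mn\epsilon_q$. Hence, once $\ng$ is large enough that $2M\epsilon_q$ is small relative to $c_{\min}$, we also get $\coq_{ij} \ge c_{\min}n/2$. Since $\sqrt{ab}-\sqrt{a'b'}$ is controlled by the perturbations of $a,b$ divided by $\sqrt{\min}$, it follows that $|\coq_{ij} - \co_{ij}| = O(n\epsilon_q)$.

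Then I would bound each entry by adding and subtracting $(\Doq_{ij}-\Woq_{ij})/\co_{ij}$:
\begin{equation*}
|\Loq_{ij}-\Lo_{ij}| \le \frac{|(\Doq-\Do)_{ij}| + |\Woq_{ij}-\Wo_{ij}|}{\co_{ij}} + |\Doq_{ij}-\Woq_{ij}|\,\frac{|\coq_{ij}-\co_{ij}|}{\coq_{ij}\co_{ij}} .
\end{equation*}
The off-diagonal entries contain no degree term, so they are $O(\epsilon_q/n)$. The diagonal entries have numerator $O(n\epsilon_q)$ over a denominator $\Omega(n)$, so they are $O(\epsilon_q)$. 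In the second term, $|\Doq_{ij}-\Woq_{ij}| = O(n)$ on the diagonal and $O(1)$ off it, while the ratio is $O(n\epsilon_q/n^2) = O(\epsilon_q/n)$. This again gives $O(\epsilon_q)$ on the diagonal and $O(\epsilon_q/n^2)$ off it.

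Summing squares, the $n^2$ off-diagonal terms contribute $O(\epsilon_q^2)$ in total. The step I expect to be the obstacle is the diagonal: its $n$ terms contribute $O(n\epsilon_q^2)$, which is too large. My first remedy is to use the standard normalized Laplacian $I - D^{-1/2}WD^{-1/2}$, whose diagonal is $1 - w_{ii}/d_i$. Its perturbation is then $O(\epsilon_q/n)$, because the numerator $w_{ii}$ is $O(1)$ and moves by $O(\epsilon_q)$ while the denominator is $\Theta(n)$. This is the sense in which the statement's $(\Do_{ij}-\Wo_{ij})/\co_{ij}$ should be read, since $\Do_{ij}/\co_{ij} = 1$ on the diagonal for both graphs. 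With that reading every entry is $O(\epsilon_q/n)$, so $\|\Loq-\Lo\|_F^2 = n^2 \cdot O(\epsilon_q^2/n^2) = O(\epsilon_q^2) = O(\ng^{-2/d})$ by Proposition~\ref{prop:distortion}.
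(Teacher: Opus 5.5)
Your proof is correct and follows essentially the paper's route. You use the Lipschitz bound $|\Woq_{ij}-\Wo_{ij}|\le 2M\epsilon_q$ and an $O(n\epsilon_q)$ perturbation of the normalizers $\co_{ij}$, together with $\coq_{ij}\ge c_{\min}n/2$, which the paper uses only tacitly. This gives $O(\epsilon_q/n)$ per entry and $O(\epsilon_q^2)$ after summing the $n^2$ squares. On the diagonal the paper simply asserts that both Laplacians equal $1$, implicitly taking $w_{ii}=0$, so your $O(\epsilon_q/n)$ bound on the perturbation of $1-w_{ii}/d_i$ is slightly more general. That step is not a reinterpretation of the statement: $(\Do_{ij}-\Wo_{ij})/\co_{ij}$ already is $(I-D^{-1/2}WD^{-1/2})_{ij}$, and your earlier $O(\epsilon_q)$ diagonal estimate came only from the lossy add-and-subtract step.
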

\noindent {\bf Proof:}
Since $\omega$ is Lipschitz, we have
that $|\Woq_{ij} - \Wo_{ij}| < 2MR \Rmultiplier / (\Rmultiplier-1)$ and $|\coq_{ij} - \co_{ij}| < 2 n M R \Rmultiplier / (\Rmultiplier-1)$.
The error of a single off-diagonal entry of the Laplacian matrix is 
\begin{align*}
\Loq_{ij} - \Lo_{ij} &= \frac{\Woq_{ij}}{\coq_{ij}} - \frac{\Wo_{ij}}{\co_{ij}}\\
&\leq \frac{\Woq_{ij} - \Wo_{ij}}{\coq_{ij}}  +  
\frac{\Woq_{ij} (\coq_{ij} - \co_{ij})}{\co_{ij}\coq_{ij}} \\
&\leq \frac{4 M R \Rmultiplier}{(\Rmultiplier-1)c_{min}n} + \frac{4 M (n M R \Rmultiplier)}{((\Rmultiplier-1)c_{min}n)^2} \\
& = O\left(\frac{R}{n}\right).
\end{align*}
The error on the diagonal entries is 0 since
the diagonal entries of $\Loq$ and $\Lo$ are all $1$.
Thus $\|\Loq - \Lo\|_F^2 \leq n^2 O(R^2/n^2) = O(\ng^{-2/d})$.
\qed

Here we showed the asymptotic behavior $\|\Loq - \Lo\|_F$
in term of the number of vertices used in the quantized graph.
In Section~\ref{sec:UCI ML repository experiments UAI}, we empirically show 
that $\|\Loq - \Lo\|_F$ vanishes quickly as the number of
vertices increases (Figure~\ref{fig:results UCI ML centroids}).
Moreover, with a fixed number of vertices, $\|\Loq-\Lo\|_F$
quickly flattens out even when the data size (time) 
keeps increasing (Figure~\ref{fig:results UCI ML time}).

%


%% file: chap_experiments.tex
\chapter{Experiments}
\label{sec:Experiments}

This chapter presents the set of experiments we performed for semi-supervised learning (SSL) and 
conditional anomaly detection (CAD). We present our SSL results in Section~\ref{sec:EvaluationsOfPredictiveModels}
and our CAD results in Section~\ref{sec:EvaluationOfAnomalyDetection}.
We start each of these sections with the descriptions of the data.
We used medical, vision, the UCI ML repository, and synthetic datasets.

\input{sec_eval_predictive}

\input{sec_evaluations}

%% file: sec_eval_predictive.tex
\section{Evaluations of Semi-Supervised Learning Models}
\label{sec:EvaluationsOfPredictiveModels}

In this section we evaluate the predictive performance of our graph-based model on semi-supervised tasks.
Our goal is to demonstrate that graph-based methods can yield predictors  
that outperform the current state-of-the-art methods. We continue with the 
description of the datasets we used. 

\input{datasets_ssl}

\subsection{Max-margin Graph Cuts Experiments}
\label{sec:MaxMarginGraphCutsExperiments}

The experiments with max-margin graph cuts are divided into two parts. The first part compares max-margin graph cuts to manifold regularization of SVMs on the problem from Figure \ref{fig:HFS}. The second part compares max-margin graph cuts, manifold regularization of SVMs, and supervised learning with SVMs on three UCI ML repository datasets \cite{asuncion2007uci}.
Manifold regularization of SVMs is evaluated based on the implementation of \cite{belkin2006manifold}. Max-margin graph cuts and SVMs are implemented using LIBSVM \cite{chang2001libsvm:}.

\subsubsection{Synthetic problem}
\label{sec:synthetic experiments}

The first experiment (Figure \ref{fig:synthetic cuts}) illustrates linear, cubic, and RBF graph cuts (\ref{eq:MMGC}) on the synthetic problem from Figure \ref{fig:HFS}. The cuts are shown for various settings of the regularization parameter $\gamma_g$. As $\gamma_g$ decreases, note that the cuts gradually interpolate between supervised learning on just two labeled examples and semi-supervised learning on all data. The resulting discriminators are max-margin decision boundaries that separate the corresponding colored \mbox{regions in Figure \ref{fig:HFS}.}

Figure \ref{fig:synthetic cuts} also shows that the manifold regularization of SVMs (\ref{eq:MR}) with linear and cubic kernels cannot perfectly separate the two clusters in Figure \ref{fig:HFS} for any setting of the parameter $\gamma_u$. The reason for this problem is discussed in Section \ref{sec:MR fails}. Finally, note the similarity between max-margin graph cuts and manifold regularization of SVMs with the RBF kernel. This similarity was suggested in Section \ref{sec:MR}.

\subsubsection{UCI ML repository datasets}
\label{sec:UCI ML repository experiments}

\begin{figure}[ht]
  \begin{center}
  {\small
  \begin{tabular}{l r|r r r|r r r|r r r} \hline \hline
    & & \multicolumn{9}{|c}{Misclassification errors [\%]} \\ \cline{3-11}
    Dataset & $L$ & \multicolumn{3}{|c|}{Linear kernel} &
    \multicolumn{3}{|c|}{Cubic kernel} & \multicolumn{3}{|c}{RBF kernel} \\
    & & SVM & MR & GC & SVM & MR & GC & SVM & MR & GC \\ \hline
    &  1 & 18.90 & 30.94 & {\bf 15.79} & 20.54 & 25.96 & {\bf 17.45} & 20.06 & 17.61 & {\bf 16.01} \\
    Letter
    &  2 & 12.92 & 28.45 & {\bf 10.79} & 12.18 & 18.34 & {\bf 10.90} & 13.52 & 13.10 & {\bf 11.83} \\
    recognition
    &  5 &  8.21 & 27.13 & {\bf  5.65} &  5.49 & 18.77 & {\bf  4.80} &  6.81 &  8.06 & {\bf  5.65} \\
    & 10 &  6.51 & 25.45 & {\bf  3.96} &  4.17 & 14.03 & {\bf  2.96} &  4.95 &  6.14 & {\bf  3.32} \\ \hline
    &  1 &  7.06 &  9.59 & {\bf  6.88} &  9.62 & {\bf  5.29} &  8.55 &  8.22 & {\bf  6.36} &  7.65 \\
    Digit
    &  2 &  4.87 &  7.97 & {\bf  4.60} &  6.06 & {\bf  5.06} &  5.09 &  6.17 & {\bf  4.21} &  5.61 \\
    recognition
    &  5 &  2.97 &  3.68 & {\bf  2.29} &  3.04 & {\bf  2.27} &  2.36 &  2.74 &  2.29 & {\bf  2.19} \\
    & 10 &  1.70 &  2.86 & {\bf  1.59} &  1.87 & {\bf  1.60} &  1.74 &  1.68 &  1.75 & {\bf  1.35} \\ \hline
    &  1 & 14.02 & 11.81 & {\bf 10.27} & 23.30 & {\bf 12.02} & 14.10 & 14.02 & 11.60 & {\bf  9.51} \\
    Image
    &  2 &  8.54 & 10.87 & {\bf  7.69} & 14.28 & 13.07 & {\bf  7.73} &  9.06 &  8.93 & {\bf  7.34} \\
    segmentation
    &  5 &  4.73 &  7.83 & {\bf  4.49} &  8.32 &  8.79 & {\bf  7.17} &  5.87 &  5.43 & {\bf  5.31} \\
    & 10 &  3.30 &  6.26 & {\bf  3.28} &  3.65 &  6.64 & {\bf  3.60} &  3.84 &  4.81 & {\bf  3.73} \\
    \hline \hline
  \end{tabular}
  }
  \caption{Comparison of SVMs, GC and MR on 3 datasets from the UCI ML repository}
  \label{fig:UCI ML repository cuts}
	\end{center}
\end{figure}

The second experiment (Figure \ref{fig:UCI ML repository cuts}) shows that max-margin graph cuts (\ref{eq:MMGC}) typically outperform manifold regularization of SVMs (\ref{eq:MR}) and supervised learning with SVMs. 
In particular it shows the comparison of SVMs, max-margin graph cuts (GC), and manifold regularization of SVMs (MR) on three datasets from the UCI ML repository. The fraction of labeled examples $L$ varies from 1 to 10 percent.

The experiment is done on three UCI ML repository datasets: letter recognition, digit recognition, and image segmentation. The datasets are multi-class and thus, we transform each of them into a set of binary classification problems. The digit recognition and image segmentation datasets are converted into 45 and 15 problems, respectively, where all classes are discriminated against every other class. The letter recognition dataset is turned into 25 problems that involve pairs of consecutive letters. Each dataset is divided into three folds. The first fold is used for training, the second one for selecting the parameters $\gamma \! \in \! [0.01, 0.1] n_l$, \mbox{$\gamma_u \! \in \! [10^{-3}, 10^3] \gamma$, and} $\gamma_g = \gamma / \gamma_u$, and the last fold is used for testing.\footnote{Alternatively, the regularization parameters $\gamma$, $\gamma_u$, and $\gamma_g$ can be set using leave-one-out cross-validation on labeled examples.} The fraction of labeled examples in the training set is varied from 1 to 10 percent. All examples in the validation set are labeled, and its size is limited to the number of labeled examples in the training set.

In all experiments, we use 5-nearest neighbor graphs whose edges are weighted as $w_{ij} = \exp[- \normw{\bx_i - \bx_j}{2}^2 / (2 K \sigma^2)]$, where $K$ is the number of features, and $\sigma$ denotes the mean of their standard deviations. The width of radial basis functions (RBFs) is set accordingly to $\sqrt{K} \sigma$, and the threshold $\eps$ for choosing training examples (\ref{eq:MMGC}) is $10^{-6}$.

The test errors of all compared algorithms are averaged over all binary problems within each dataset and shown in Figure \ref{fig:UCI ML repository cuts}. Max-margin graph cuts outperform manifold regularization of SVMs in 29 out of 36 experiments. Note that the lowest errors are usually obtained for linear and cubic kernels, and our method improves the most over manifold regularization of SVMs in these settings.

\subsection{Joint Quantization and Label Propagation Experiments}
\label{sec:JointQuantizationAndHarmonicSolutionExperiments}

In this part, we evaluate the method we proposed in Section~\ref{sec:LargeScaleSemiSupervisedLearningWith}
that combines the creation of a backbone graph with label propagation.
The benefit of our algorithm comes when the data lies on a low dimensional manifold.
In this section, we show the 2 data sets when this is the case.
For data sets without a manifold structure or for the data sets
where a cluster assumption holds, the performance of our method is comparable to the case when $k$-means is
used as a preprocessing step. We compare our algorithm to several quantization approaches:

\begin{enumerate}
	\item \emph{random subsampling}: We randomly sample $k$ examples from the unlabeled data. Then we apply SSL method on the selected samples.
	\item \emph{$k$-means}: We cluster the unlabeled data using $k$-means \cite{hastie2001elements} to get $k$ cluster centers and then apply SSL algorithms to get their labels.
	\item \emph{elastic nets}: We use elastic net \cite{gorban2009principal} as a preprocessing to get $k$
cluster centers. We then apply SSL to get their labels.
	\item \emph{elastic-joint}: We apply the proposed algorithm in this dissertation to get both the centroids and their labels.
    \item \emph{full-soft}: We apply SSL algorithm on the full set of examples as a reference point.
\end{enumerate}
After obtaining the labels of the centroids using items 1-4 above, we apply the approximation in Section~\ref{sec:appox} to get the labels for unlabeled examples.

\subsubsection{Experimental setup}
\label{sec:ExperimentalSetup}
We use a small subset of examples as labeled examples. To see the sensitivity of the method on a different number of labeled examples, we try $m = 2, 10, 20,$~and~50 as the number of labeled examples.
To allow for the fair comparison between the methods, we run all the algorithms on the same set of labeled examples.
Moreover, all the approximation methods are initialized with the same cluster centers (seeds) as the ones that were drawn by  random subsampling. 

Finally, we fix all the parameters for the semi-supervised prediction in Equation~\eqref{eq:soft HFS} to the same settings as follows. We create a 3-nearest neighbors similarity graph, and we use the Gaussian kernel with the kernel width $\sigma$ equal to 10\% of the standard deviation of the distances as suggested in \cite{luxburg2007tutorial}.

For each of the methods we compute the regularized graph Laplacian, where we add $\gamma_g = 10^{-6}$ to the diagonal. For the diagonal matrix $F$ of empirical weights we set $f_l = 10$ for the labeled and $f_u = 0.1$ for the unlabeled examples. We set parameter $\gamma_q$ in our method to $10^5$. Finally, we vary the number of cluster centers as $k=15,20,25,30,60,$~and~90.

\subsubsection{Results}
\label{sec:Results}
The results are shown in Figure~\ref{fig:results} for the varying number of labeled examples $m$ and centroids $k$.
Error bars show the 95\% confidence intervals over 50 runs. The 5 compared methods are
1) subsample --- random subsampling, 2) $k$-means as a preprocessing, 3) our method: elastic-joint,
4) elastic net as a preprocessing, and  5) full soft --- harmonic solution using all unlabeled examples
to create the full graph. For the Car dataset and $m=2$ unlabeled examples, our method
outperforms the other baselines for the different number of cluster centers up to $k=60$, where all the methods achieved the performance of the full non-approximated graph. For $m=10,20,$~and~$50$, all the subsampling methods are comparable. For the COIL dataset, $m=2$ of labeled examples was not sufficient for learning, as the classes are perfectly balanced and all the methods produced a trivial classifier comparable to a random one, including the SSL on the full graph with all the examples. For $m=10,20,$~and~$50$, our method significantly outperforms all the other approximation methods. The result for SecStr (Figure~\ref{fig:time}) is similar for all the baselines. We utilize this data set to show the time complexity of different methods. Notice that the same observation and setup is used in~\cite{chapelle2006semi-supervised}.

\begin{figure}	
\begin{center}
\includegraphics[width=0.40\columnwidth,clip]{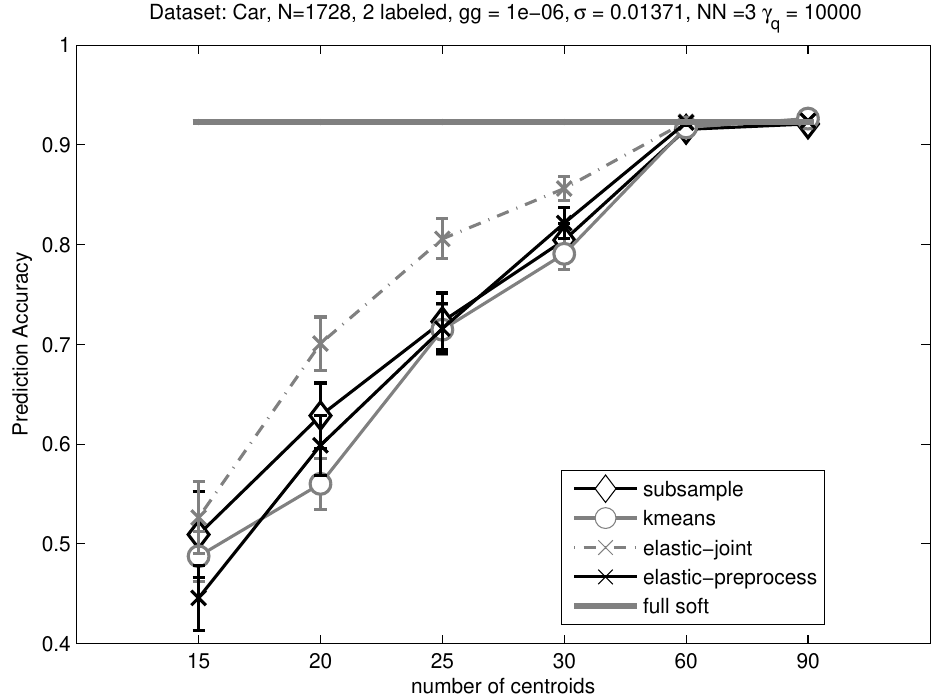}
\includegraphics[width=0.40\columnwidth,clip]{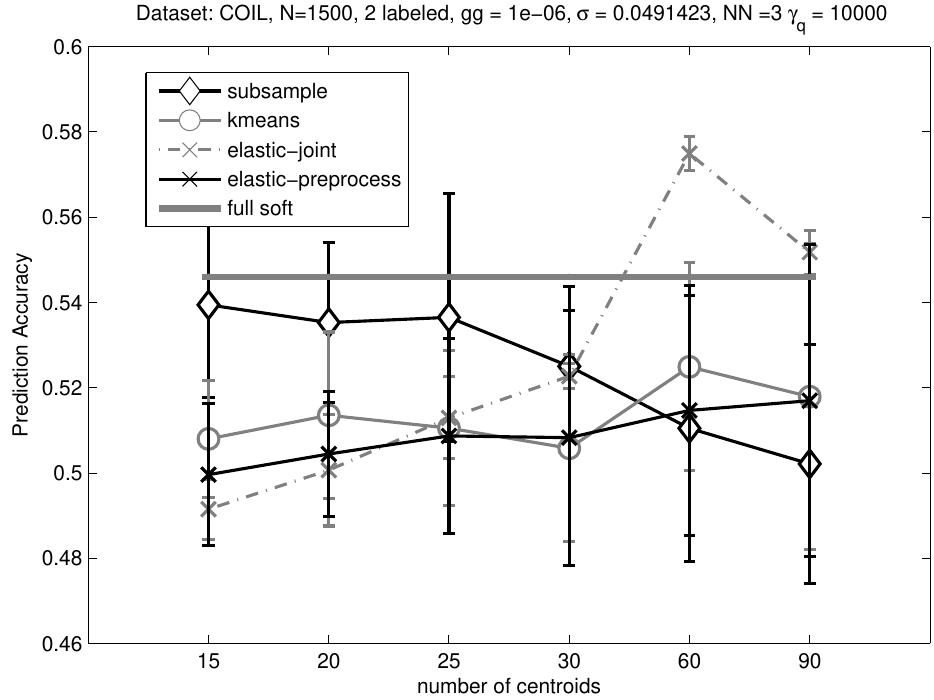}
\includegraphics[width=0.40\columnwidth,clip]{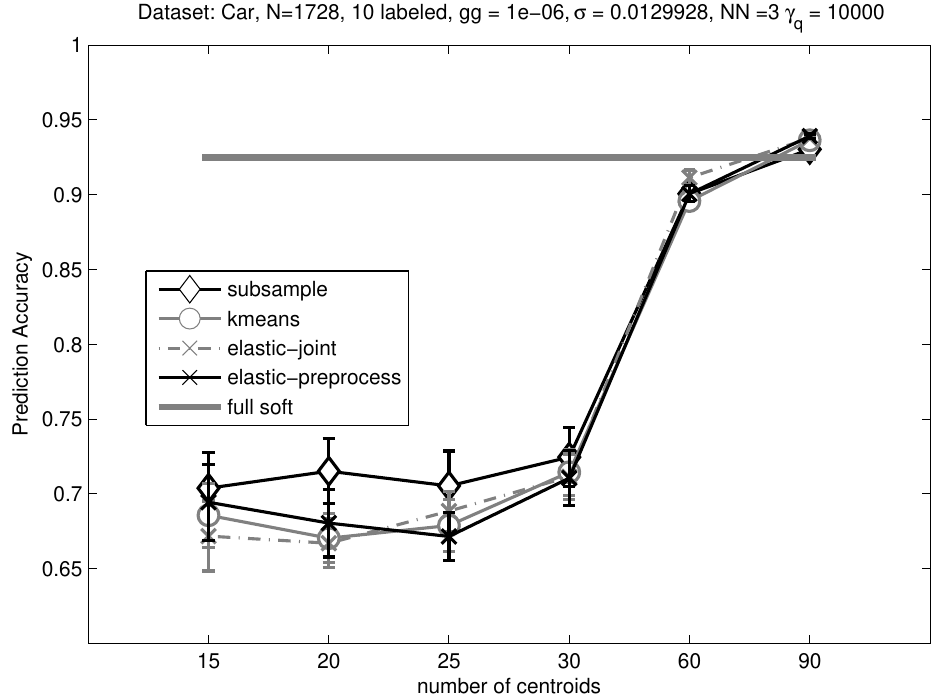}
\includegraphics[width=0.40\columnwidth,clip]{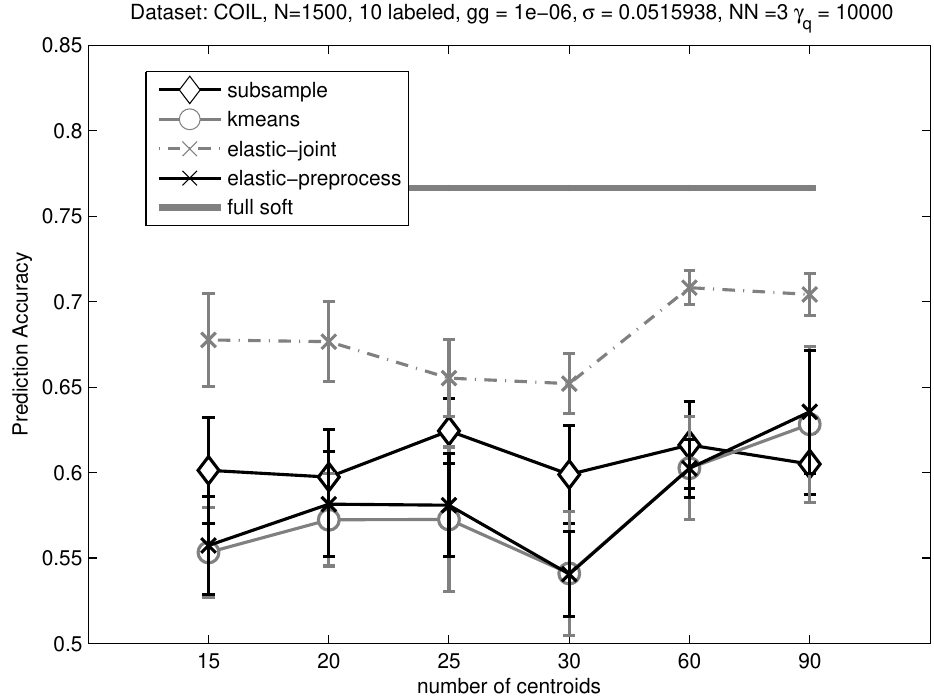}
\includegraphics[width=0.40\columnwidth,clip]{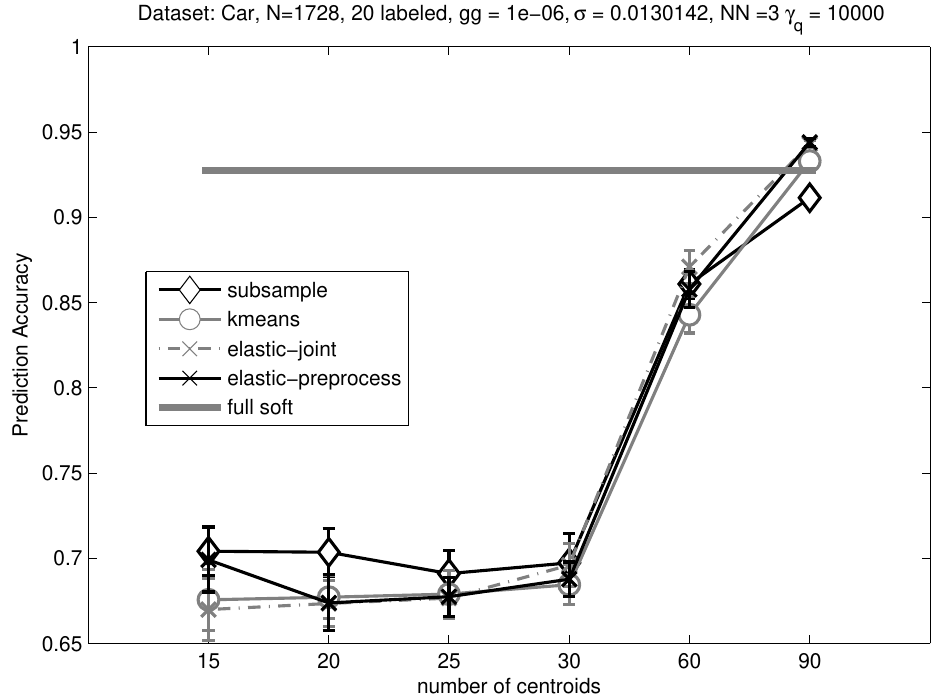}
\includegraphics[width=0.40\columnwidth,clip]{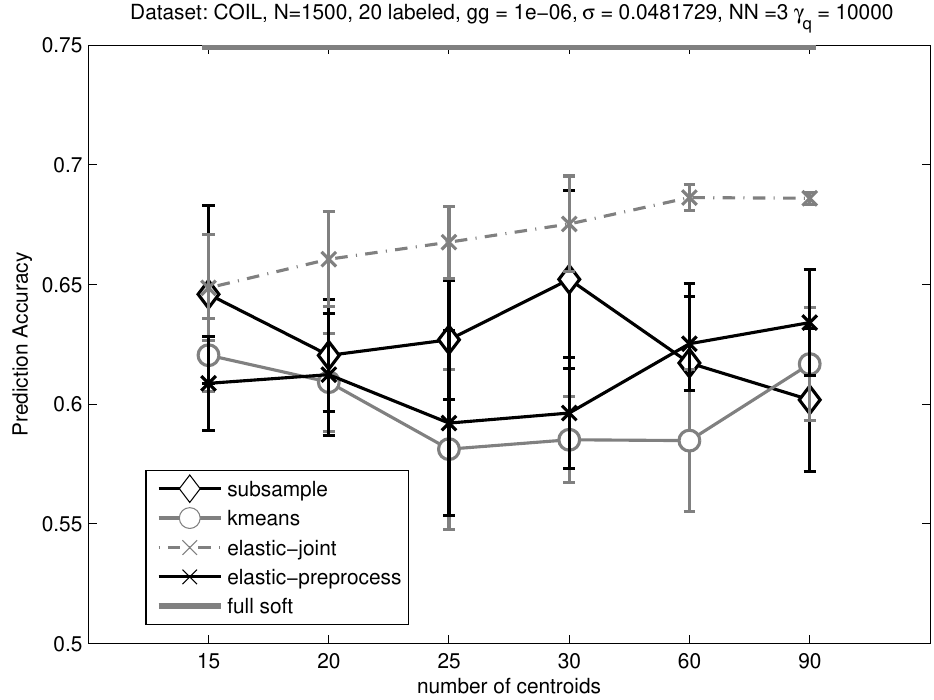}
\includegraphics[width=0.40\columnwidth,clip]{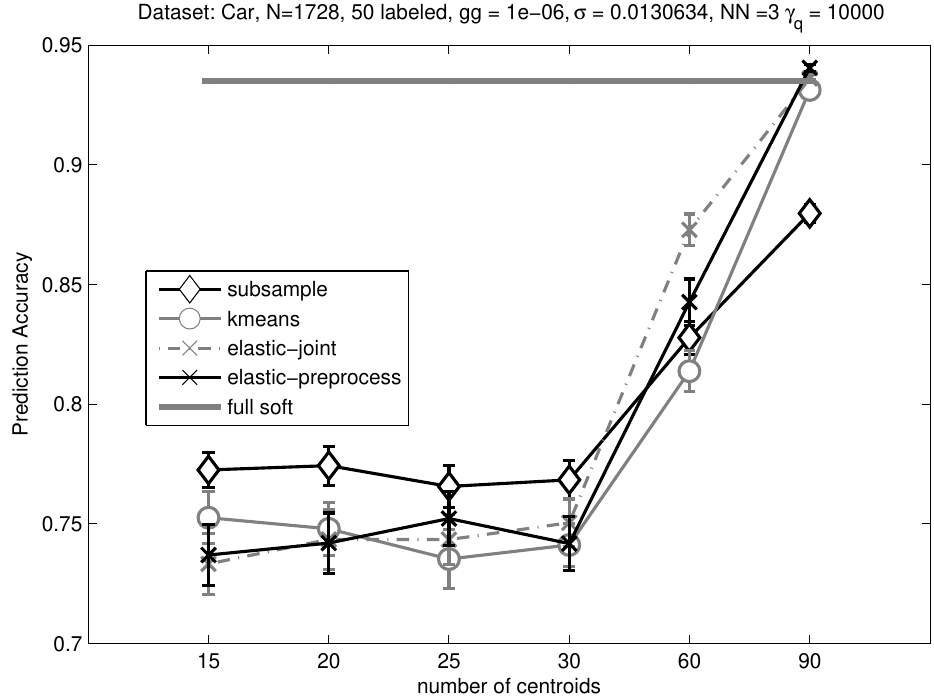}
\includegraphics[width=0.40\columnwidth,clip]{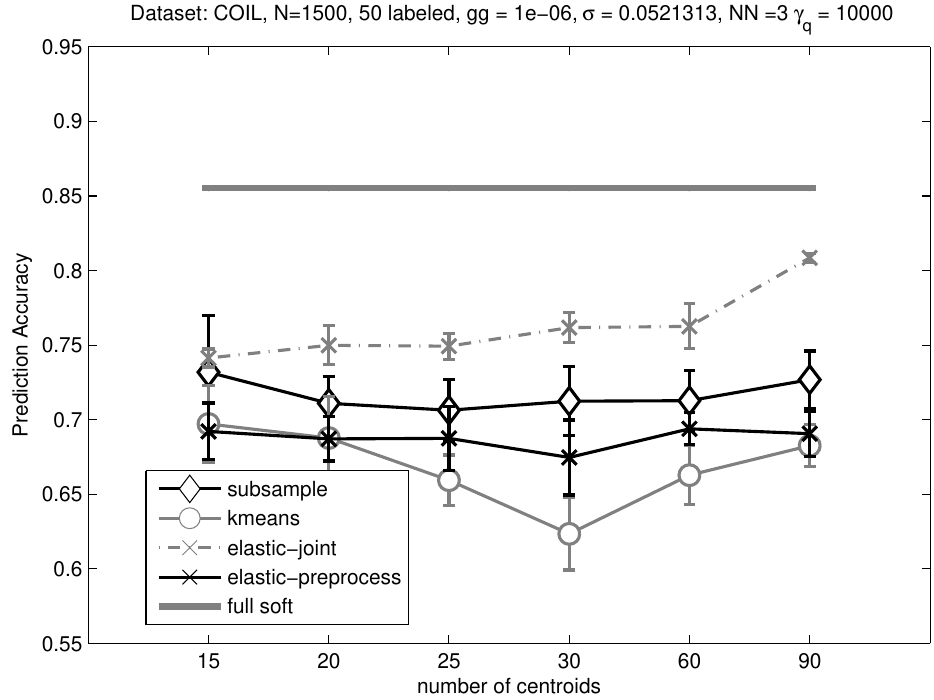}
\caption{Coil and Car datasets from UCI ML Repository}%
\label{fig:results}
\end{center}
\end{figure}

\input{uai_experiments}

\subsection{Parallel SSL}
\label{sec:ParellelSSL}

\begin{figure}%
\centering
\includegraphics[width=\columnwidth]{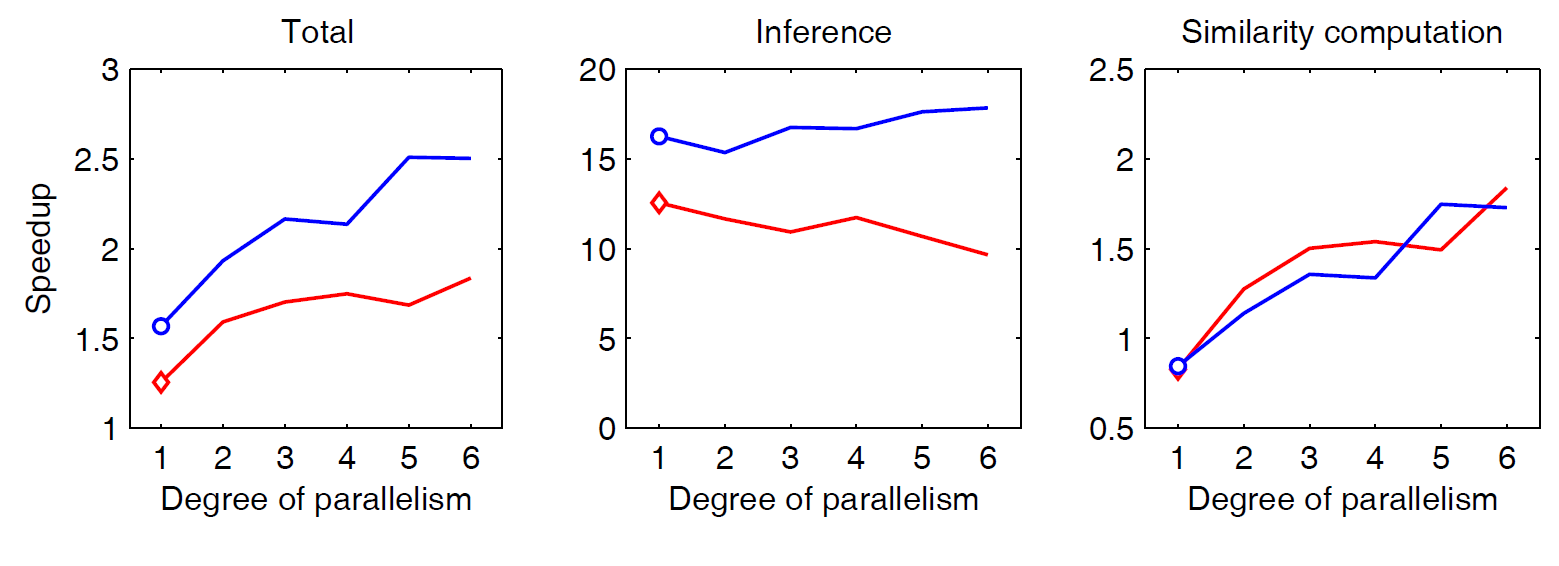}%
\caption{Speedups in the total, inference, and similarity computation times}%
\label{fig:pararesults}%
\end{figure}

In this experiment, we demonstrate how to speed up the online HS on a graph using
an additional structure and parallelization (Section~\ref{sec:ParallelMultiManifoldLearningMethods}). 
Therefore, we perform our experiments on an Intel Xeon workstation with six cores. 
The experimental setup is the same as in Section~\ref{sec:OnlineQuantizedSSLExperiments}.  
The number of labeled examples used for training models
of Person 1 and 13 (from Figure~\ref{fig:LoginFaceDataset}) is 5 and 6, respectively.
Figure~\ref{fig:pararesults} reports speedups due to decomposing the online HFS on 300 vertices into
$n_l$ smaller graphs of 50 vertices.  The plots correspond to Person 1 (red lines) and 13 (blue lines) in our dataset.
 The diamonds and
circles mark speedups that are obtained by the decomposition alone.
We observe two main trends. First, the decomposition
alone yields a modest speedup of 35\% on average. The speedup is due to 15 times faster
inference, which is a result of solving $n_l$ smaller systems of linear equations, each with
50 variables, instead of a bigger one with 300.
Second, we parallelize the online HS on the $n_l$ smaller graphs using OpenMP \cite{openmp2008openmp}.
The problem is trivially parallelizable because the graphs can be updated independently.
Figure~\ref{fig:pararesults} shows that as the number of used cores increases, the online HFS can be sped
up more than two times on average. The speedup is due to parallelizing the computation
of similarities $w_{ij}$ , which at this point consumes much more time than inference.
Finally, note that the proposed decomposition has almost no impact on the quality of
our solutions. For Person 1 and Person 13, the loss in accuracy is 2.5\% and 1\%, respectively.

\subsection{Conclusions}
\label{sec:PredictiveConclusions}

In this section, we have evaluated our algorithms for the semi-supervised
learning tasks.  Max-margin graph cuts algorithm learns max-margin graph cuts that
are conditioned on the labels induced by the harmonic function
solution. The approach is evaluated on a synthetic problem and three UCI ML repository
datasets, and we have showed that it usually outperforms manifold
regularization of SVMs.
Next, we have evaluated our joint optimization approach for graph quantization and label propagation.
 We have experimentally showed that this approach can lead to a significant gain in classification accuracy over the competing quantization approaches.
In the online SSL experiments we approximated
a similarity graph for a harmonic solution. This algorithm
significantly reduces the expense of the matrix
computation in the harmonic solution, while retaining
good control on the classification accuracy. Our
evaluation shows that a significant speedup for semi-supervised
learning can be achieved with little degradation
in classification accuracy. We have further approximated 
the computation by decomposing the graph into several
smaller graphs, thereby performing parallel multi-manifold learning.
With such a decomposition we were able to speed up the
computation even more with almost no loss in accuracy.

%% file: datasets_ssl.tex

\subsection{UCI ML Datasets}
\label{sec:UCIMLDatasets}

In this part we describe the datasets from the UCI ML Repository \cite{asuncion2007uci} that we used to test 
our semi-supervised algorithms. We used \emph{Digit}, \emph{Letter}, and \emph{Image segmentation} as
the benchmark datasets to compare our max-margin graph cuts to manifold regularization of SVMs. Moreover,
we used \emph{Digit} and \emph{Letter}, due to their small size, to compare 
the performance of our online semi-supervised algorithm on quantized graphs
to the performance of a full offline non-quantized harmonic solution. 
Finally, we used  \emph{COIL}, \emph{Car}, and \emph{SecStr}
as the benchmark datasets for large scale semi-supervised learning, as suggested
by \cite{chapelle2006semi-supervised}.

\subsubsection{Digit recognition}
This dataset was preprocessed by programs made available by NIST to extract normalized bitmaps of handwritten digits from a preprinted form. From a total of 43 people, 30 contributed to the training set and the remaining 13 contributed to the test set. 32x32 bitmaps are divided into non-overlapping blocks of 4x4, and the number of on pixels are counted in each block. This generates an input matrix of 8x8, where each element is an integer in the range 0--16. This reduces dimensionality and gives invariance to small distortions.

\subsubsection{Letter recognition}
The objective is to identify each of a large number of black-and-white rectangular pixel displays as one of the 26 capital letters in the English alphabet. The character images were based on 20 different fonts and each letter within these 20 fonts was randomly distorted to produce a file of 20,000 unique stimuli. Each stimulus was converted into 16 primitive numerical attributes (statistical moments and edge counts), which were then scaled to fit into a range of integer values from 0 through 15.

\subsubsection{Image segmentation}

The Segmentation dataset, created in 1990 by the Vision Group,
University of Massachusetts, consists of 2310 instances.
Each instance was drawn randomly from a database of seven outdoor images.
The image, a $3 \times 3$ region, was hand-segmented to create a
classification for each pixel.
The seven classes are brickface, sky, foliage, cement, window, path,
and grass.
Each of the 7 images is represented by 330 instances.
The extracted features are 19 continuous attributes that describe the
position of
extracted image, line densities, edges, and color values.

\subsubsection{COIL}
\label{sec:COIL}

The Columbia object image library (COIL-100) is a set of color images of 100 different objects taken from different angles (in steps of 5 degrees) at a resolution of $128 \times 128$ pixels \cite{nene1996columbia}.
We use the binary version of this dataset as preprocessed by \cite{chapelle2006semi-supervised}.

\subsubsection{Car}
\label{sec:Car}
The Car evaluation data set classifies cars into four categories using 6 features including buying price, number of doors, etc. We converted the Car dataset into a binary problem to classify first two vs.\,the second two car categories.

\subsubsection{SecStr}
The SecStr is a benchmark data set designed by \cite{chapelle2006semi-supervised} to investigate how far current methods can cope with large-scale application. The task is to predict the secondary structure of a given amino acid in a protein, based on a sequence window centered around that amino acid.

\smallskip

\noindent For the multi-class datasets we sometimes transformed them into a set of binary problems.

\subsection{Vision Datasets}

In this section, we describe several face recognition datasets that we recorded
to evaluate the performance of online semi-supervised learning algorithms 
on noisy real world data that involve outliers.

\label{sec:VisionDatasets}
The \emph{environment adaptation} dataset consists of faces of a single person, which are captured at various locations, such as a cubicle, a conference room, and a corner with a couch (Figure \ref{fig:videos}). The first four faces in the cubicle are labeled, and we want to learn a face recognizer for all locations. To test the sensitivity of the recognizer to outliers, we augmented the dataset with random faces. The \emph{office space} dataset (Figure \ref{fig:videos}) is multi-class and involves 8 people who walk in front of a camera and make funny faces. When a person shows up on the camera for the first time, we label four faces of the person. Our goal is to learn good face recognizers for all 8 people.  

\begin{figure}[t]
  \centering
  \includegraphics[width=3.2in, viewport=2.25in 4.5in 6.25in 6.5in]{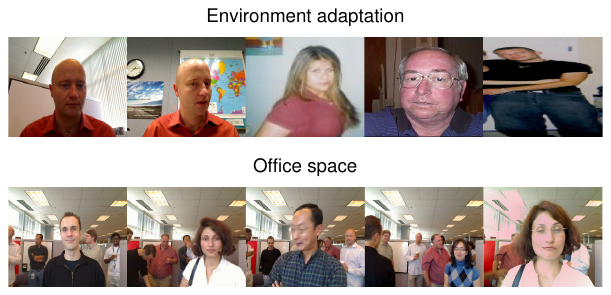}
  \caption{Snapshots from the environment adaptation and office space datasets}
  \label{fig:videos}
\end{figure}

Another vision dataset is a \emph{face-based authentication} dataset of 16 people (Figure~\ref{fig:LoginFaceDataset}).
The people try to log into a tablet PC with their face, while being recorded by its embedded 
camera. The data are collected at 10 indoor locations, which differ by backgrounds and
lighting conditions. In short, we recorded 20 10-second videos per person, each at 10 fps. 
Therefore, our face-based authentication dataset contains a total of $16 \times 20 \time 100 = 32 000$ images.
Faces in the images are detected using OpenCV \cite{bradski2000opencv}, converted to grayscale, resized
to $96\times96$, smoothed using the $3\times3$ Gaussian kernel, and equalized by the histogram of
their pixel intensities.

\begin{figure}[t]
  \centering
  \includegraphics[width=6.8in, viewport=0in 4.5in 8.5in 6.5in]{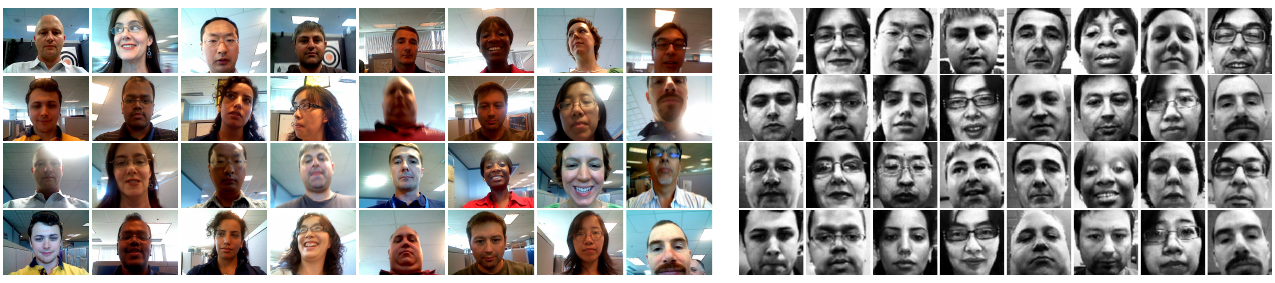}
  \caption{Face-based authentication dataset (left) and examples of labeled faces (right)}
  \label{fig:LoginFaceDataset}
\end{figure}

%% file: uai_experiments.tex
\subsection{Online Quantized SSL Experiments}
\label{sec:OnlineQuantizedSSLExperiments}

\begin{figure}[t]
  \centering
  \includegraphics[width=1.6in, viewport=3.25in 4in 5.25in 7in]{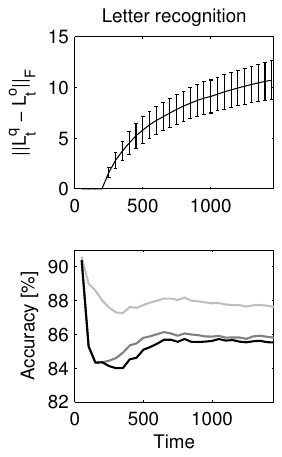}
  \includegraphics[width=1.6in, viewport=3.25in 4in 5.25in 7in]{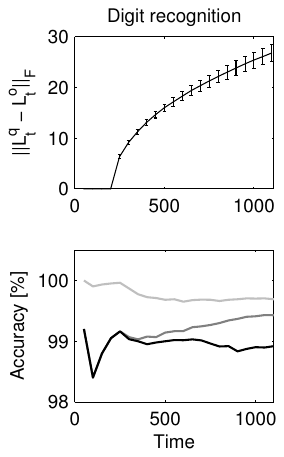}  
  \caption{UCI ML: Quality of approximation as a function of time}
	\label{fig:results UCI ML time}
\end{figure}

\begin{figure}[t]
  \centering
  \includegraphics[width=1.6in, viewport=3.25in 4in 5.25in 7in]{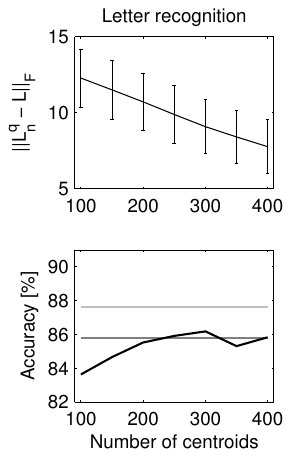}
  \includegraphics[width=1.6in, viewport=3.25in 4in 5.25in 7in]{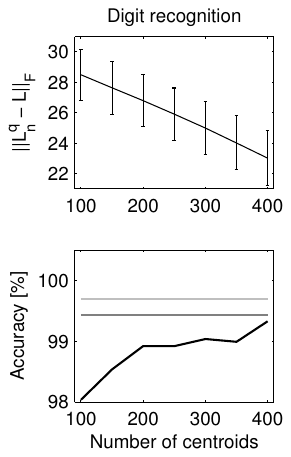}
  \caption{UCI ML: Quality of approximation as a function of number of centroids}
  \label{fig:results UCI ML centroids}
\end{figure}

The experimental section is divided into two parts. In the first part,
we evaluate our online learner (Figure \ref{fig:online quantized HFS})
on UCI ML repository datasets (Section~\ref{sec:UCIMLDatasets}). In the second
part, we apply our learner to solve two face recognition problems. In
all experiments, the multiplicative parameter $\Rmultiplier$ of the
$k$-centers algorithm is set to 1.5\,.

\subsubsection{UCI ML Repository Experiments}
\label{sec:UCI ML repository experiments UAI}

In the first experiment, we study the online quantization error
$\normw{\Loq_t - \Lo_t}{F}$ and its relation to the HS
on the quantized graphs $\Woq_t$. This experiment is performed on two
datasets from the UCI ML repository: letter and optical digit
recognition. The datasets are converted into a set of binary
problems, where each class is discriminated against every other
class. 
The similarity weights are computed as $w_{ij} = \exp[- \normw{\bx_i - \bx_j}{2}^2 / (2 p \sigma^2)]$, where $p$ is the number of features and $\sigma$ denotes the mean of their standard deviations. 
Our results are averaged over 10 problems from each dataset
and shown in Figures \ref{fig:results UCI ML time} and
\ref{fig:results UCI ML centroids}.

In Figure \ref{fig:results UCI ML time}, we fix the number of
centroids at $k = 200$ and study how the quality of our solution
changes with the learning time $t$. 
The upper plots show the difference between the normalized Laplacian $\Lo_t$ and its \mbox{approximation $\Loq_t$} at time $t$. The bottom plots show the cumulative accuracy of the harmonic solutions on $W$ (light gray lines), $\Wo_t$ (dark gray lines), and $\Woq_t$ (black lines) for various times $t$.
Two trends are
apparent. \mbox{First, as} time $t$ increases, the error
$\normw{\Loq_t - \Lo_t}{F}$ slowly levels off. Second, the accuracy of
the harmonic solutions on $\Woq_t$ changes little with $t$. These
trends indicate that a fixed number of centroids $k$ may be sufficient
for quantizing similarity graphs that grow with time. In Figure
\ref{fig:results UCI ML centroids}, we fix the learning time at $t =
n$ and vary the number of centroids $k$. 
The upper plots show the difference between the normalized Laplacian $L$ and its approximation $\Loq_n$. The difference is plotted as a function of the number of centroids $\ng$. The bottom plots compare the cumulative accuracy of the harmonic solutions up to time $n$ on $W$ (light gray lines), $\Wo_t$ (dark gray lines), and $\Woq_t$ (black lines).
Note that as $k$ increases,
the quantization error decreases and the quality of the solutions on
$\Woq_t$ improves. This trend is consistent with the theoretical
results in our work (Section~\ref{quantization_bound}).


\subsubsection{Face recognition}
\label{sec:face recognition experiments}

In the second experiment, we evaluate our learner on 2 face recognition datasets: office space and environment adaptation. 
(Section~\ref{sec:VisionDatasets}).

The similarity of faces $\bx_i$ and $\bx_j$ is computed as $w_{ij} \! = \! \exp\left[- d(\bx_i, \bx_j)^2/2 \sigma^2\right]$, where $\sigma$ is a heat parameter, which is set to $\sigma = 0.025$, and $d(\bx_i, \bx_j)$ is the distance of the faces in the feature space. To make the \mbox{graph $W$} sparse, we treat it as an $\eps$-neighborhood graph and set $w_{ij}$ to 0 when $w_{ij} < \eps$. The scalar $\eps$ is set as $\eps = 0.1 \gamma_g$. As a result, the lower the regularization \mbox{parameter $\gamma_g$,} the higher the number of edges in the graph $W$ and our learner extrapolates to more unlabeled examples. If an example is disconnected from the rest of the graph $W$, we treat it as an outlier and neither predict the label of the example, nor use it to update the quantized graph. This setup makes our algorithm robust to outliers and allows for controlling its precision and recall by a single parameter $\gamma_g$. In the rest of the section, the \mbox{number of} centroids $k$ is fixed at 500. More details are provided in Section~\ref{sec:UCI ML repository experiments}.

In Figure \ref{fig:face recognition results}, we compare our online algorithm to online semi-supervised boosting \cite{grabner2008semi-supervised} and a $k$-NN classifier, which is trained on all labeled faces. 
The recognizers are trained by a NN classifier (gray lines with circles), online semi-supervised boosting (thin gray lines), and our online learner (black lines with diamonds). The plots are generated by varying the parameters $\eps$ and $\gamma_g$. From left to right, the points on the plots correspond to decreasing values of the parameters. Online semi-supervised boosting is \mbox{performed on} 500 weak NN learners, which are sampled at random from the whole environment adaptation dataset (solid line), and its first and last quarters (dashed line).
The algorithm of \cite{grabner2008semi-supervised} is modified to allow for a fair comparison to our method. First, all weak learners have the nearest neighbor form $h_i(\bx_t) = \I{w_{it} \geq \eps}$, where $\eps$ is the radius of the neighborhood. Second, outliers are modeled implicitly. The new algorithm learns a regressor $H(\bx_t) = \sum_i \alpha_i h_i(\bx_t)$, which yields $H(\bx_t) \! = \! 0$ for outliers and $H(\bx_t) \! > \! 0$ when the detected face is recognized.

Figure \ref{fig:face recognition results}\textbf{a} clearly shows that our learner is better than the nearest neighbor classifier. Furthermore, note that online semi-supervised boosting yields results as good as our method when given a good set of weak learners. However, future data are rarely known in advance, and when the weak learners are chosen using only a part of the dataset, the quality of the boosted results degrades significantly (Figure \ref{fig:face recognition results}\textbf{a}). In comparison, our algorithm constantly adapts its representation of the world. How to incorporate a similar adaptation step in online semi-supervised boosting is not obvious.

In Figure \ref{fig:face recognition results}\textbf{b}, we evaluate our learner on an 8-class face recognition problem. Despite the fact that only 4 faces of each person are labeled, we can identify people with 95 percent precision and 90 percent \mbox{recall. In general,} our precision is 10 percent higher than the \mbox{precision of} the NN classifier at the same recall level.

\begin{figure}
  \centering
  \includegraphics[width=3.2in, viewport=2.25in 4.5in 6.25in 6.5in]{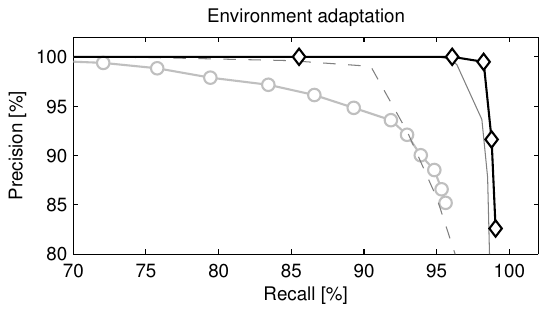}
  \vspace{0.05in}
  \includegraphics[width=3.2in, viewport=2.25in 4.5in 6.25in 6.5in]{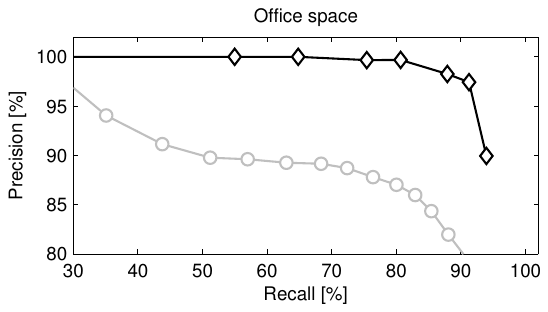} \\
  \textbf{a)} \hspace{2.95in} \textbf{b)}
  \caption{Comparison of 3 face recognizers on 2 face recognition datasets}
  \label{fig:face recognition results}
\end{figure}

%% file: sec_evaluations.tex
\section{Evaluations of Conditional Anomaly Detection Methods}
\label{sec:EvaluationOfAnomalyDetection}

In this section we present the experiments using the CAD methods
from Chapter~\ref{sec:ConditonalAnomalyDetection}. 
In all our experiments, we focus on the conditional anomalies in the class labels with respect to the features.
In general, in the whole field of anomaly detection and in medical domain especially,
the evaluation is extremely challenging. Most of time, it is subjective. 
The most veracious evaluations would have human experts judging the goodness of the methods.
Since this is a very expensive way, most researchers resort to some surrogate measures.
In the area of mislabel detection, the most common surrogate measure 
is to change the labels of a fraction of the dataset and observe how many of those were detected as mislabeled.
The problem with this measure is that the anomalies in the real life datasets are really \emph{sampled} randomly.
We describe the data we use in Sections~\ref{sec:SyntheticDatasets} and~\ref{sec:MARS}
and the algorithms we use for the comparison in Section~\ref{sec:AlgorithmsForComparison}.
We then provide two kinds of evaluations: the evaluation when the ground truth 
is known or can be computed (Section~\ref{sec:EvaluationOfCADWithKnownGroundTruth})
and then the evaluation with human experts (Section~\ref{sec:ExpertAssesedClinicallyUsefulAnomalies}).

\input{datasets_cad}

\subsection{Algorithms for Comparison}
\label{sec:AlgorithmsForComparison}

In this section we review the CAD algorithms chosen for the comparison
with our CAD methods.

\subsubsection{Discriminative SVM anomaly detection}
\label{sec:SVMBaseLineMethod}
For the baseline method we use an SVM based method \cite{valko2008conditional,hauskrecht2010conditional}, 
that computes an anomaly score from the distance from the hyperplane. SVM
\cite{vapnik1995nature,burges1998tutorial} is a discriminative method
that learns the  decision boundary as 
 $$\bw^T\bx + w_0 = \sum_{i\in SV} \hat\alpha_iy_i(\bx_i^T\bx) + w_0,$$
where only samples in the support vector set ($SV$) contribute to the computation of the decision boundary. 
To support classification tasks, the projection defining the decision boundary is used to 
determine the class of a new example. That is, if the value  $$\bw^T\bx + w_0 \ge 0$$   
is positive, then $C(\bx)$ belong to one class, but if it is negative it belongs to the other class. 
However, for conditional anomaly detection we use the projection itself for the positive class
and the negated projection for the negative class to measure the deviation:

$$d(y|\bx) = y(\bw^T\bx + w_0), \,\, \mathrm{where} \,\, y\in \{-1, 1\}$$  

\noindent In other words, the smaller the projection is the more likely the example is anomalous. 
We note that the negative projections correspond to misclassified examples.

\subsubsection{One-class SVM}
\label{sec:OneClassSVM}

As an example of a classical anomaly detection method converted to 
the CAD method we compare to the one-class SVM\cite{manevitz2002one-class}.
Originally proposed in  \cite{scholkopf1999estimating}, the method only needs positive examples to learn the margin.
The idea is that the space origin (zero) is treated as the only example of the `negative' class. 
In that way the learning essentially estimates the support of the distribution. The data that do not fall into this support have negative projections and can be considered anomalous. In our scenario, we will learn one one-class SVM for
each of the classes and based on the test label (which is known) we calculate the anomaly score. 
The more negative the score the higher the rank of the anomaly.

\subsubsection{Quadratic discriminant analysis}
In the quadratic discriminant analysis (QDA) model \cite{hastie2001elements}, we model each class by a multivariate Gaussian,
and the anomaly score is the class posterior of the opposite class.

\subsubsection{Weighted NN}
We also use the weighted $k$-NN approach \cite{hastie2001elements} that uses the same weight metric $W$ as SoftHAD, but relies on
only on the labels in the local neighborhood and does not account for the manifold structure.

\subsubsection{Parameters for the graph-based algorithms}
\label{sec:Parameters}

The similarity weights are 	computed as 
$$w_{ij} = \exp\left[- \frac{||\bx_i - \bx_j||_{2,\psi}^2 }{p \sigma^2}\right],$$ 

\noindent  where $p$ is the number of features and $\psi = (p \times 1)$ is a weighing of the features based on their discriminative power.
Including $p$ in the weight metric allows us to control the connectivity of the graph.
Next, $\sigma$ is chosen so that the graph is reasonably sparse \cite{luxburg2007tutorial}.
We follow \cite{valizadegan2007kernel} and chose $\sigma$ as 10\% of the mean of empirical standard deviations of all features.
Based on the experiments, our algorithm is not sensitive to the small perturbations of $\sigma$; what is important
is that the graph does not become disconnected by having all edges of several nodes with weights close to zero.

\begin{figure}
\begin{center}
\includegraphics[width=0.4\columnwidth]{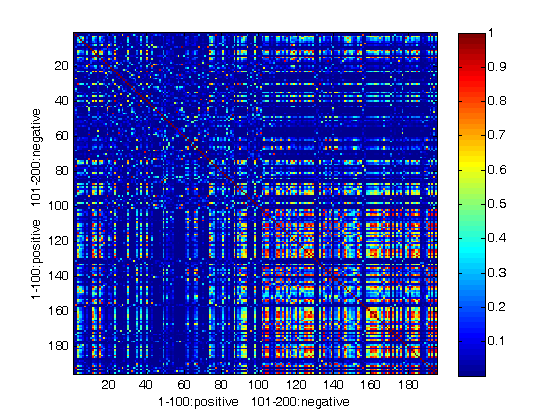}
\caption{The weight matrix for 100 negative and 100 positive cases of HPF4 order}%
\label{fig:W_HPF4_heat5_100x100}%
\end{center}
\end{figure}

For the feature weights $\psi$ for PCP data we used the univariate Wilcoxon (ROC) score \cite{hanley1982meaning},
which is typically used for medical data \cite{hauskrecht2006fundamentals}.
Since this score ranges from 0.5 to 1, we modify the score by subtracting 0.5
and raising it to a power of 5 to make the differences between the weights larger.
We use to same metric for the weighted NN anomaly detection from Section~\ref{sec:AnalysisOfConditionalAnomalyDetection}.
We vary the regularization parameter as $\lambda \in \{ 10^{-5}, 10^{-9},\dots, 10^{5} \}$.  
Figure~\ref{fig:W_HPF4_heat5_100x100} illustrates this metric on a binary classification
task for the heparin induced thrombocytopenia (a life threatening condition that may occur with
prolonged heparin treatments). One hundred negative and one hundred 
positive cases and their mutual similarities are shown. We can see that 
the positive cases are much closer to each other (bottom right part in Figure~\ref{fig:W_HPF4_heat5_100x100}) than the negatives. 
For the other datasets, we used a uniform $\psi$ for all features.

\subsection{Evaluation of CAD with Known Ground Truth}
\label{sec:EvaluationOfCADWithKnownGroundTruth}

\subsubsection{CAD on synthetic datasets with known distribution}
\label{sec:ConnectivityADOnSyntheticDatasetsWithKnownUnderlyingDistribution}

The evaluation of a CAD is a very challenging
task when the true model is not known. Therefore,  we first
evaluate and compare the results of different CAD methods
on three synthetic datasets (D1, D2, and D3) with known underlying distributions
that let us compute the true anomaly scores (Section~\ref{sec:MixturesOfGaussians}).
Then, we show the advantage of regularizing a discriminative approach on a synthetic dataset. 
We will use a 2D synthetic dataset, where we can demonstrate the ability 
to tackle fringe and isolated points as described in Section~\ref{sec:Regularization}.

For each experiment we sample the datasets 10 times.  After the sampling, we
randomly switch the class labels for three percent of examples.
We then calculate the true anomaly score as
$P(y \neq y_i|\bx_i)$, reflecting how anomalous the label of the example is with respect to the true model.

Each of the methods outputs a score which orders the examples according to the belief of the anomalous labeling.
For each of the CAD methods, we assess how much this ordering is consistent
with the ordering of the true anomaly score. In particular, we calculated the
area under the receiver operating characteristic (AUROC), which is inversely
proportional to the number of swaps between the ordering induced by the evaluated method and the true ordering.

\input{kdd_syn_table}

\noindent Table~\ref{tab:syn} compares the AUROCs of the experiment for all methods for
1000 samples per dataset. The results demonstrate that our $\lambda$-RWCAD
method outperforms the weighted $k$-NN, the one-class SVM, and the discriminative SVM with the RBF kernel\footnote{We also evaluated the linear versions of SVM and the one-class SVM, but the
results were inferior to the ones with the RBF kernel.},
and it is comparable to our label propagation SoftHAD algorithm on D2 and D3.
SoftHAD seems to be the best choice overall because it takes advantage of both local and global consistency.
However, it is computationally more expensive.

In the next experiment we evaluate the scalability of the graph-based methods
as we increase the number of examples. All of the graph methods were given the
same graph (with the same weight matrix). Figure~\ref{fig:graph_time_comparison}
compares the running times of these algorithms.
We see that while the running time of the SoftHAD algorithm
becomes prohibitive once the number of examples 
gets into thousands, our algorithm scales similarly to the $k$-NN method.
Figure~\ref{fig:graph_time_comparison} also shows the time spent
in constructing the graph from the data, which is the same among 
all the graph-based methods. Observe that both the weighted $k$-NN and our
$\lambda$-RWCAD algorithm take very little time over the necessary graph construction time
to do their calculations. 

\begin{figure}%
\centering
\includegraphics[width=0.6\columnwidth]{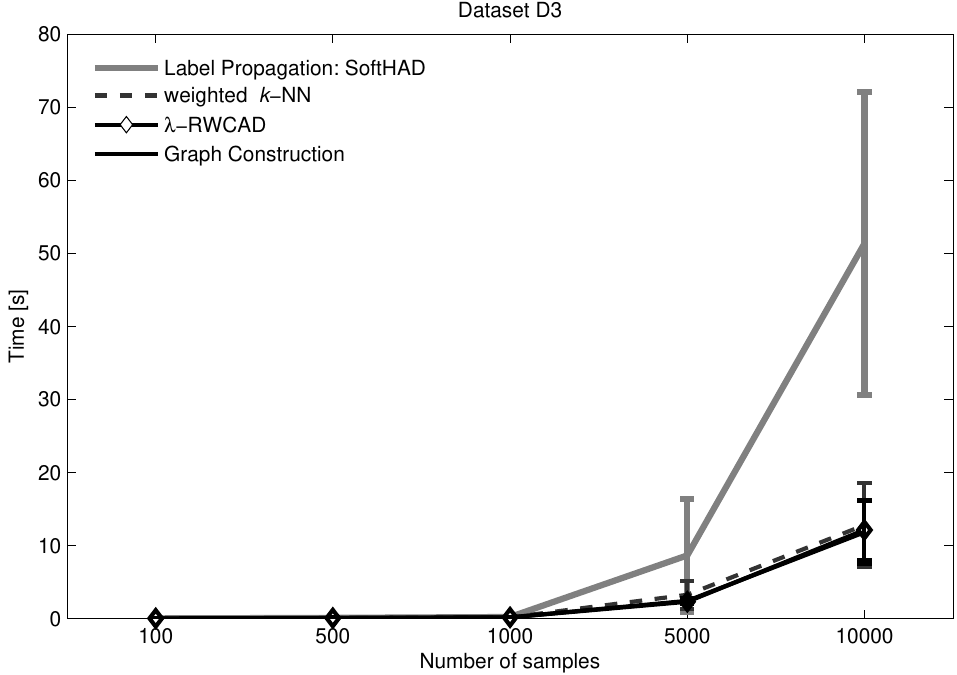}%
\caption{Computation time comparison for the three graph-based methods}%
\label{fig:graph_time_comparison}%
\end{figure}

\subsubsection{CAD on UCI ML datasets with ordinal response variable}
\label{sec:CADOnUCIMLDatasetsWithOrdinalResponseVariable}

We also evaluated our method on the three UCI ML datasets~\cite{asuncion2007uci}, for 
which an ordinal response variable was available to calculate the true anomaly score. In particular, we
selected 1) \emph{Wine Quality} dataset with the response variable \emph{quality},
2) \emph{Housing} dataset with the response variable \emph{median value of owner-occupied homes},
and 3) \emph{Auto MPG} dataset the response variable \emph{miles per gallon}.
In each of the datasets we scaled the response variable $y_r$ to the $[-1,+1]$ interval
and set the class label as $y := y_r \geq 0$. 
As with the synthetic datasets, we randomly switched the class labels for 
three percent 
of examples.
The true anomaly score was computed as the absolute difference 
between the original response variable $y_r$ and the (possibly switched) label.
Table~\ref{tab:uci} compares the agreement scores  (over 100 runs) to the true score for all methods on 
(2/3, 1/3) train-test split. The results in bold show when a method significantly outperforms the rest.
 Again, we see that SoftHAD either performed the best
or was close to the best method.

\begin{table}[htbp]
  \centering
 \begin{tabular}{rc|c|c}
    \addlinespace
    \toprule
        & \phantom{a}\textbf{Wine Quality}\phantom{a}&\phantom{a} \textbf{Housing} \phantom{a}& \phantom{a}\textbf{Auto MPG}\phantom{a} \\
    \midrule
	   \emph{QDA}& \textbf{75.1\%} (1.3) & 56.7\% (1.5) & 65.9\% (2.9) \\
   \emph{SVM}  & 75.0\% (9.3) & 58.5\% (4.4) & 37.1\% (8.6) \\
    \emph{one-class SVM}  & 44.2\% (1.9) & 27.2\% (0.5) & 50.1\% (3.5) \\
    \emph{weighted $k$-NN}         & 67.6\% (1.4) & 44.4\% (2.0) & 61.4\% (2.3) \\
		\emph{SoftHAD}      & \textbf{74.5\%} (1.5) & \textbf{71.3\% }(3.2) & \textbf{72.6\%} (1.7) \\	
    \bottomrule
    \end{tabular}%
		\vskip 0.25cm
		\caption{Mean anomaly agreement score and variance on 3 UCI ML datasets}
  \label{tab:uci}%
\end{table}%

%


\subsubsection{CAD on Core dataset with fringe points}
\label{sec:CADOnCoreDatasetWithFringePoints}

\begin{figure}%
\begin{center}
\includegraphics[width=0.75\columnwidth]{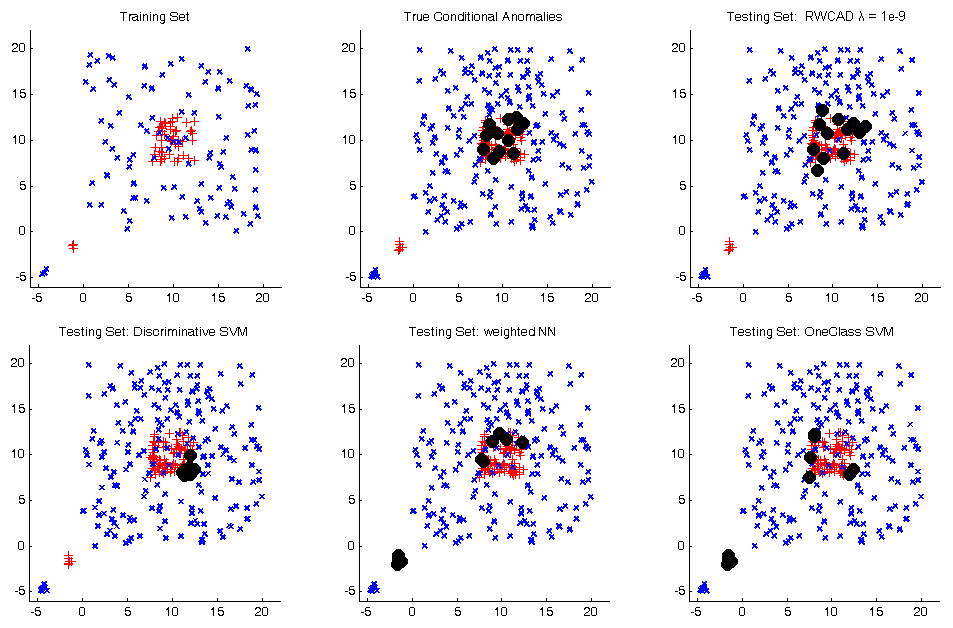}%
\caption{Conditional anomaly detection on a synthetic \emph{Core} dataset}%
\label{fig:core}%
\end{center}
\end{figure}

In this part we tested our CAD method on the synthetic Core dataset (Section~\ref{sec:CoreDataset}). 
Besides the one-class SVM (Section~\ref{sec:OneClassSVM}), we also compared to the weighted $k$-NN described in 
Section~\ref{sec:AnalysisOfConditionalAnomalyDetection}
and to the cross-outlier method \cite{papadimitriou2003cross-outlier} described in Section \ref{sec:ConditionalAnomalyDetection}.

In Figure \ref{fig:core}, the training data consists of a bigger square of 100 uniformly distributed points (blue `x'),  a smaller square of 50 uniformly distributed points (red `+'), and 2 small groups 
of points (3 points from each class). The testing dataset is twice as big sampled from the same distribution.
The big black dots display true conditional anomalies and the top 12 highest ranked conditional anomalies for 
1) our $\lambda$-RWCAD method, 2) the discriminative SVM anomaly detection,
3) the weighted $k$--NN, and 4) the one-class SVM learned for both of the classes.

 The cross-outlier method \cite{papadimitriou2003cross-outlier}
was able to find all of the conditional anomalies in the middle square, but also 
declared many \emph{fringe} points (points at the outer boundary of the bigger square) 
as anomalous (see Figure 2, middle row in \cite{papadimitriou2003cross-outlier}).
Although the authors claim that the fringe points are `clearly different from the rest of the points' \cite{papadimitriou2003cross-outlier},
we prefer methods that only find anomalously labeled instances. 

In Figure \ref{fig:core}, we also show the top 12 highest scored anomalies from the 4 competing methods.
The discriminative SVM anomaly detection (Figure \ref{fig:core}, bottom left) could only detect the fringe points
from the smaller square, since the anomaly score there corresponds to 
the most incorrectly classified testing points. Next, the objective of the one-class SVM is to detect the points with minimal support.
In Figure \ref{fig:core}, bottom right, we see that the one-class SVM ranked with the highest score 
the fringe points of the smaller square and one of the tiny squares.
The weighted $k$-NN (Figure \ref{fig:core}, bottom middle) detects half of the true anomalies,
but also falsely detects one of the tiny squares as anomalous.
Our method (Figure \ref{fig:core}, top right) avoids such a mistake 
due to the regularization. Although the results of our method
do not completely match with the truth, the 3 points detected outside the smaller square
are in its vicinity.

\subsubsection{Conclusions}
\label{sec:ConclusionsSurrogate}

We showed how we use regularization to avoid the detection of isolated and the fringe points.
In general, the advantage of the CAD approach over knowledge-based error detection approaches is that the
method is evidence-based, and hence requires little or no 
input from a domain expert.

\subsection{Evaluation of Expert Assessed Clinically Useful Anomalies}
\label{sec:ExpertAssesedClinicallyUsefulAnomalies}


\subsubsection{Pilot study in 2009}
\label{sec:PilotStudyIn2009}

The aim of the study \cite{hauskrecht2010conditional}  was to test the hypothesis that clinical anomalies lead to good clinical alerts.

\noindent \textbf{Learning anomaly detection models} The training set was used to build three types of anomaly detection models: 1) models for detecting unexpected lab-order omissions, 2) models for detecting unexpected medication omissions, and 3) models for detecting unexpected continuation of medications (commissions).

\noindent  \textbf{Selection of alerts for the study}  The alerts for the evaluation study were selected as follows. We first applied all the above anomaly detection models to matching patient instances in the test set. The following criteria were then applied. First, only models with AUC of 0.68 or higher (to limit the number of models to those with a good predictive performance) were considered. This means that many predictive models built did not qualify and were never used. Second, the minimum anomaly score for all alert candidates had to be at least 0.15. Third, for each decision, only the top 125 anomalies and the top 20 alerts obtained from the test data were considered as alert candidates. This lead to 3,768 alert candidates, from which we selected 222 alerts for 100 patients, such that 101 alerts were lab-omission alerts, 55 were medication-omission alerts, and were 66 medication-commission alerts. The cases were selected such that their alert scores cover the whole range of alert scores, biased towards the more anomalous cases.  Figure \ref{fig:dist2009} shows the distribution of alerts in the study according to the alert score. 
 
\begin{figure}
\begin{center}
\includegraphics[width=0.6\columnwidth]{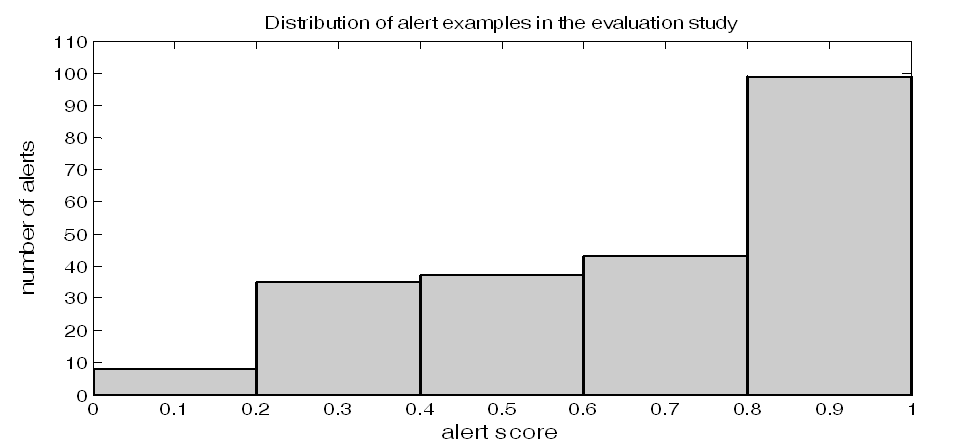}%
\caption{Histogram of alert examples in the study according to their alert score}%
\label{fig:dist2009}%
\end{center}
\end{figure}
 
\noindent  \textbf{Alert reviews.} The alerts selected for the study were assessed by physicians with expertise in post-cardiac surgical care. The reviewers 1) were given the patient cases and model-generated alerts for some of the patient management decisions, and 2) were asked to assess the clinical usefulness of these alerts. We recruited 15 physicians to participate in the study, of which 12 were fellows and 3 were faculty from the Departments of Critical Care Medicine and Surgery. The reviewers were divided randomly into five groups, with three reviewers per group, for a total of 15 reviewers. Overall, each clinician made assessments of 44 or 45 alerts, generated for 20 different patients. The total number of alerts reviewed by all clinicians was 222 and included: 101 lab omission alerts, 55 medication omission alerts, and 66 medication commission alerts.  The survey was conducted over the Internet using a secure web-based interface 
\cite{post2008temporal}.

\noindent  \textbf{Alert assessments. }The pairwise kappa agreement scores for the groups of three ranged from 0.32 to 0.56.  We use the majority rule to define the gold standard. That is, an alert was considered to be useful if at least two out of three reviewers found it to be useful. Out of the 222 alerts selected for the evaluation study, 121 alerts were agreed upon by the panel (via the majority rules) as a useful alert. 

\noindent  \textbf{Analysis of clinical usefulness of alerts. }  We analyze the extent to which the alert score from a model was predictive of it producing clinically important alerts.  Figure \ref{fig:alertscore2009} summarizes the results by binning the alert scores (in intervals of the width of 0.2, as in Figure~\ref{fig:dist2009}) and presenting the true alert rate per bin. The true alert rates vary from 19\% for the low alert scores to 72\% for the high alert scores, indicating that higher alert scores are indicative of higher true alert rates. This is also confirmed by a positive slope of the line in Figure \ref{fig:alertscore2009}, which is obtained by fitting the results via linear regression and the results of the ROC analysis.  All alerts reviewed were ordered according to their alert scores, from which we generated an ROC curve. The AUC for our alert score was 0.64. This is statistically significantly different from 0.5, which is the value one expects to see for random or non-informative orderings. Again, this supports that higher alert scores induce better true alert rates.  Finally, we would like to note that alert rates in Figure 4 are promising and despite alert selection restrictions, they compare favorably to alert rates of existing clinical alert systems \cite{schedlbauer2009what,bates2003ten}. 

\begin{figure}%
\begin{center}
\includegraphics[width=0.6\columnwidth]{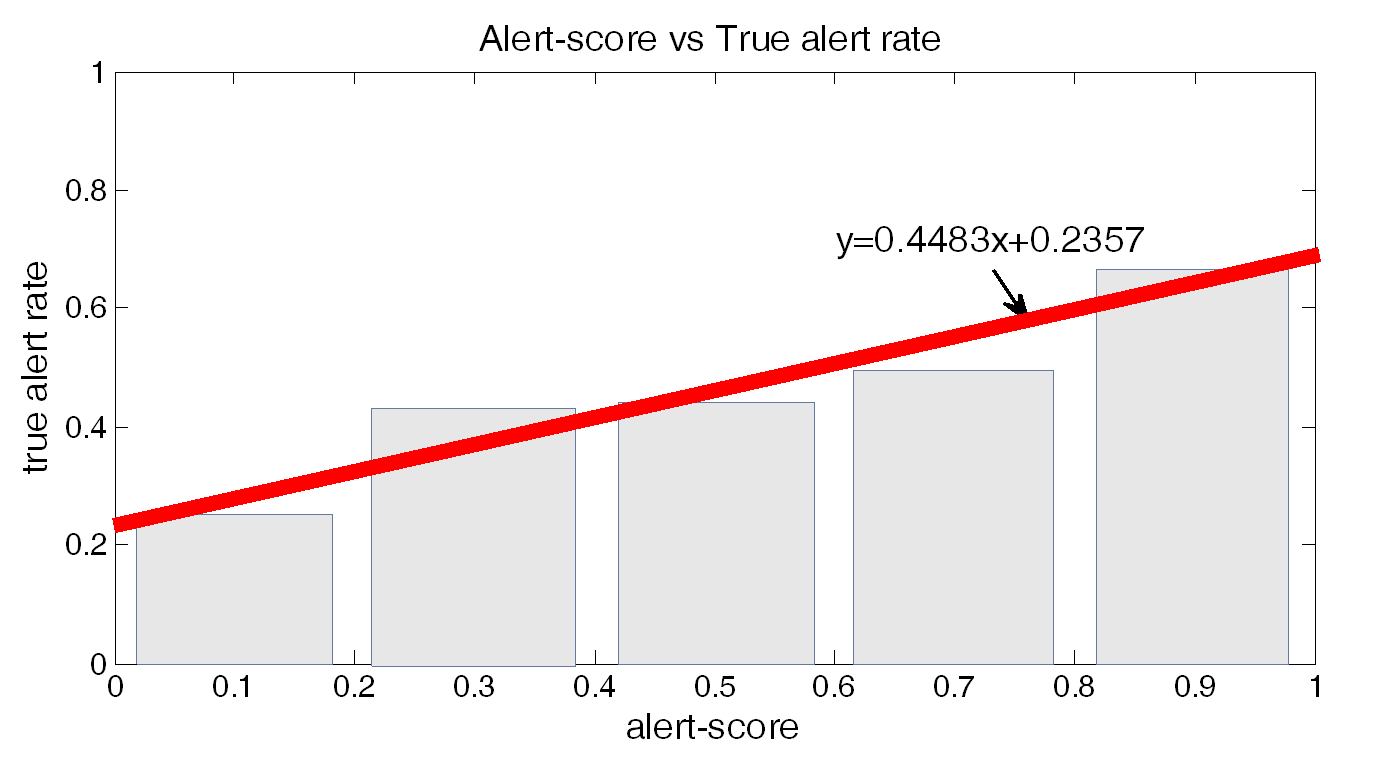}%
\caption{The relationship between the alert score and the true alert rate}%
\label{fig:alertscore2009}%
\end{center}
\end{figure}

\subsubsection{Soft harmonic anomaly detection}
\label{sec:SoftHarmonicAnomalyDetection}

For this experiment, we use the PCP dataset (Section~\ref{sec:MARS})
and reuse the human expert evaluations from Section~\ref{sec:PilotStudyIn2009}.
We compute the anomaly scores according to (Section~\ref{sec:ConditionalAnomalyDetectionWithSoftHarmonicFunctions})
%
%

\noindent {\bf Scaling for multi-task anomaly detection} 
So far, we have described CAD only for a single task (anomaly in a single label). 
In this dataset, we have 749 binary tasks that correspond
to 749 different possible orders of lab tests or medications.
In our experiments, we compute the CAD score for each task independently.
Figure \ref{fig:scaling2} shows the CAD scores for two of them.
CAD scores close to 1 indicate that the order should be done, while
the scores close to 0 indicate the opposite. 
The ranges for the anomaly scores can vary
among the different tasks, as one can notice in Figure \ref{fig:scaling2}.
The scores for the top and bottom task range from 0.1 to 0.9
and from 0.25 and 0.61, respectively. The arrow in both cases
points to the scores of the evaluated examples, both with negative labels. Despite the lower score for the bottom
task, we may believe that it is more anomalous, because it is
more extreme within the scores for the same task.
However, we want to output an anomaly score, which is comparable among the different tasks
so we can set a unified threshold when the system is deployed in practice.
Another reason for comparable scores is that we can have, for instance, 2 models each alerting that a certain medication was omitted.
Nevertheless, omitting one of the medications can be more severe than the other (eg.\,~antibiotics vs.\,vitamins).
To achieve the score comparability, we 
propose a simple approach, where we take the minimum and the maximum score obtained for the training set
and scale all scores for the same task linearly so that the score after scaling ranges from 0 to 1.

\begin{figure}
\begin{center}
\includegraphics[width=0.6\columnwidth,clip, viewport=45 308 589 463]{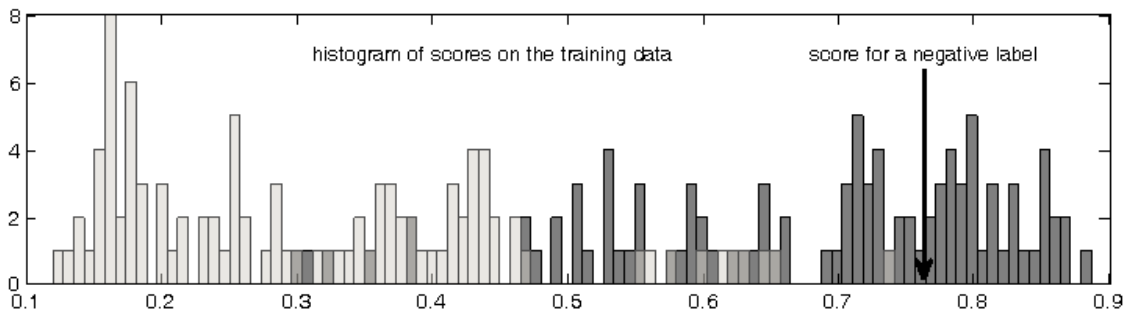}
\includegraphics[width=0.6\columnwidth,clip, viewport=58 316 558 471]{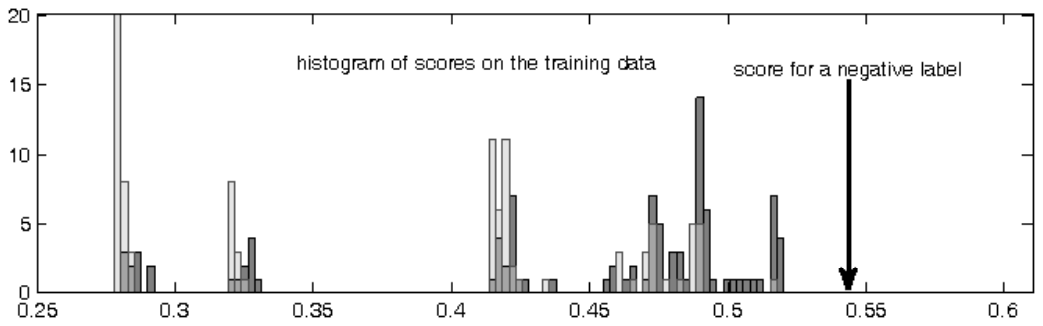}
\caption{Histogram of anomaly scores for 2 different tasks}
\label{fig:scaling2}%
\end{center}
\end{figure}

In Figure~\ref{fig:samplesize}, we fix $\gamma_g = 1$ and vary the number of examples we sample from the training set to construct the similarity graph, and also compare it to the weighted $k$-NN. The error bars show the variances over 10 runs.
Notice that both of the methods are not too sensitive to the graph size.
This is due to the multiplicity adjustment for the backbone graph (Section~\ref{sec:ConditionalAnomalyDetectionWithSoftHarmonicFunctions}).
Since we use the same graph both for SoftHAD and the weighted $k$-NN, we anticipate that we are able to outperform
the weighted $k$-NN due to the label propagation over the data manifold and not only within the immediate neighborhood.
In Figure~\ref{fig:gg}, we compare SoftHAD to the CAD using SVM with an RBF kernel for different regularization settings.
We sampled 200 examples to construct a graph (or train an SVM) and varied the $\gamma_g$ regularizer (or cost $c$ for SVM).
We outperform the SVM approach over the range of regularizers.
The AUC for the one-class SVM with an RBF was consistently below 55\%, so we do not show it in the figure.
We also compared the two methods with scaling adjustment for this multi-task problem (Figure~\ref{fig:gg}).
The scaling of anomaly scores improved the performance of both methods and makes the methods less sensitive to the regularization settings.

\begin{figure}
\begin{center}
\includegraphics[width=0.8\columnwidth,clip, viewport=52 265 539 522]{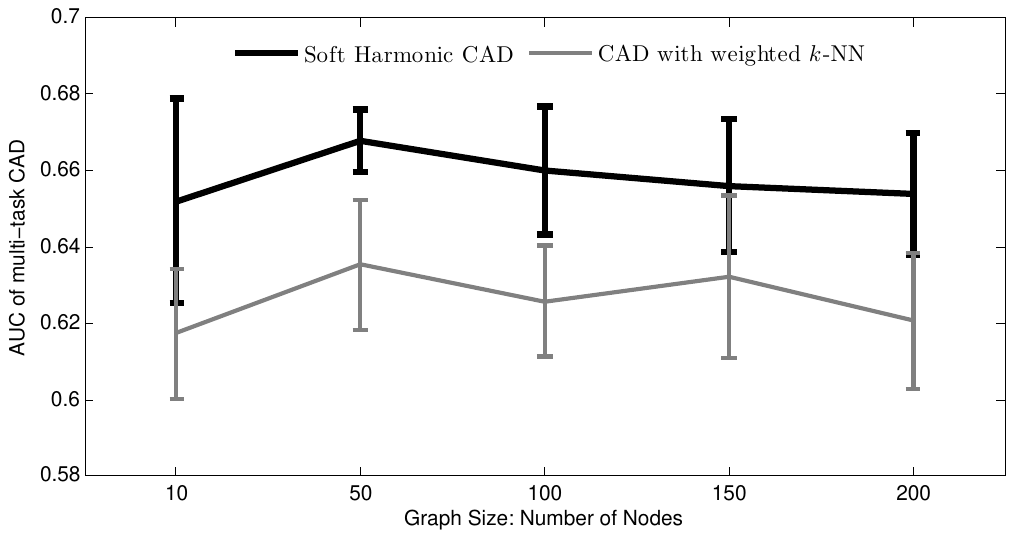}
\caption{Medical Dataset: Varying graph size}%
\label{fig:samplesize}%
\end{center}
\end{figure}

\begin{figure}
\begin{center}
\includegraphics[width=0.8\columnwidth, clip, viewport=39 251 565 530]{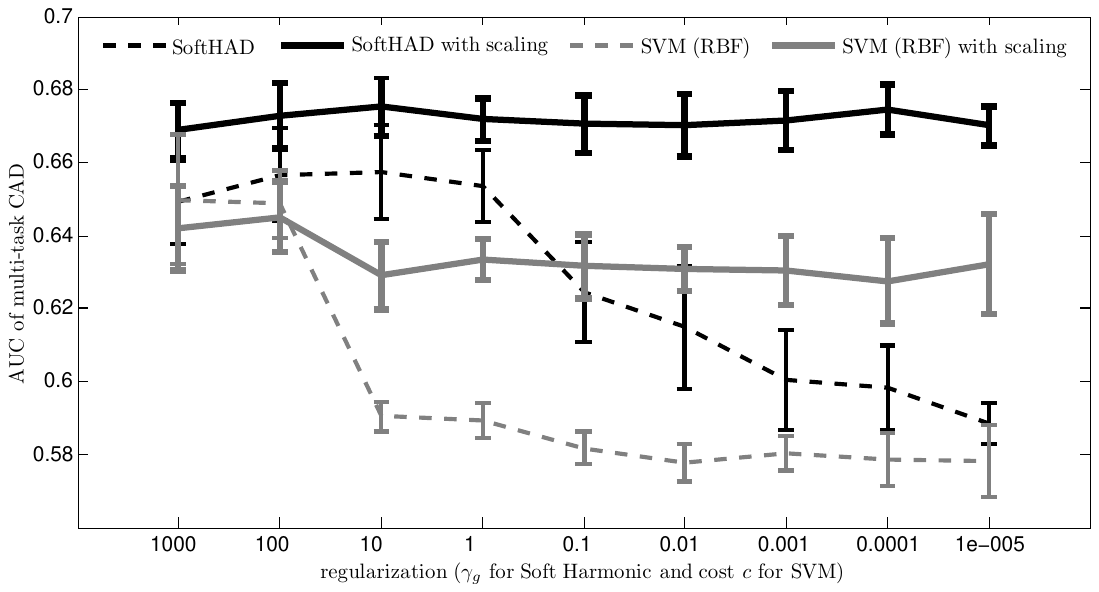}
\caption{Medical Dataset: Varying regularization}
\label{fig:gg}%
\end{center}
\end{figure}

\subsubsection{Conclusions}
\label{sec:ExpertsConclusions}

In the evaluations with human experts on the real-world data,
we showed we can indeed learn clinically useful alerts.  
The results reported here support that this is a promising methodology for raising clinically useful alerts.
Moreover, we showed that with label propagation on a data similarity graph 
built from patient records, we can significantly outperform previously proposed SVM-based anomaly detection
in detecting conditional anomalies.

%% file: datasets_cad.tex
\subsection{Synthetic Datasets}
\label{sec:SyntheticDatasets}

We use two synthetic datasets for the evaluation of conditional anomaly detection methods
where we know or can compute the true conditional anomaly score. 

\subsubsection{Core dataset}
\label{sec:CoreDataset}

Inspired by \cite{papadimitriou2003cross-outlier}, we generate a synthetic \emph{Core}
dataset, which consists of two overlapping squares from two uniform distributions.
We extend this dataset with two tiny squares  (Figure \ref{fig:core}, top left).
These 2 tiny squares may be considered anomalous, but not conditionally anomalous.
The goal is to detect 12 conditional anomalies that are located in the middle square
(Figure~\ref{fig:core}, top middle). 
We also use this dataset to demonstrate the challenges for 
conditional anomaly detectors, namely fringe and isolated points.

\subsubsection{Mixtures of gaussians}
\label{sec:MixturesOfGaussians}

We generated three synthetic datasets (D1, D2, and D3) with known underlying distributions
that let us compute the true anomaly scores.

\begin{figure}%
\centering
\includegraphics[width=0.8\columnwidth, clip, viewport=28 291 591 506]{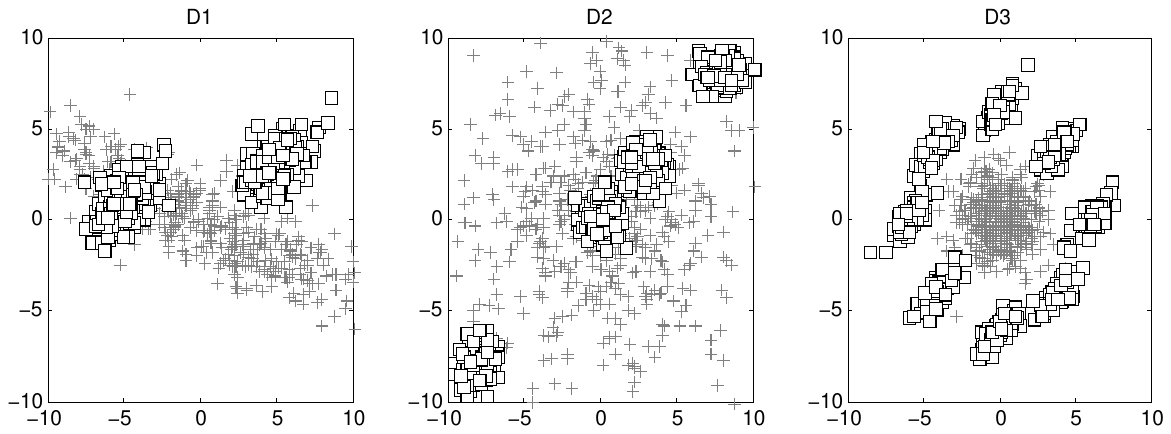} 
\caption{The three synthetic datasets with known underlying distributions}%
\label{fig:kdd_synthetic_datasets}%
\end{figure}

We show the three datasets we used in our experiments in
Figure~\ref{fig:kdd_synthetic_datasets}. Each dataset consists of an equal number 
of samples from the class $+1$ and class $-1$. The class
densities we use to generate these datasets 
are modeled with mixtures of multivariate Gaussians and 
vary in locations, shapes, and mutual overlaps. 

\subsection{Post-surgical cardiac patients (PCP)}
\label{sec:MARS}

For the evaluation of our conditional anomaly detection methods on the real world medical data,
we use the post-surgical cardiac patients (PCP) dataset. PCP is a database of de-identified records for 4486 post-surgical cardiac patients treated at one of the University of Pittsburgh Medical Center (UPMC) teaching hospitals. The entries in the database were populated from data from the 
MARS\footnote{MARS stands for Medical Archival System, and it is a medical record system 
that has been storing clinical and financial information from UMPC since 1980.}
system, which serves as an archive for much of the data collected at UPMC. The records for individual patients include discharge records, demographics, progress notes, all labs and tests (including standard and all special tests), two medication databases, microbiology labs, EKG, radiology and special procedures reports, and a financial charges database. The data in the PCP database were cleaned, cross-mapped, and  stored in a local MySQL database with protected access. The cohort of the patient data we
use in this dissertation consists of 4486 patients that underwent cardiac surgery 
from 2002 to 2007.
The database is very heterogeneous  and has many variables in different formats. 
It has also a fair amount of missing data. 

\begin{figure}
\begin{center}
\includegraphics[width=0.6\columnwidth, clip, viewport=18 32 709 409]{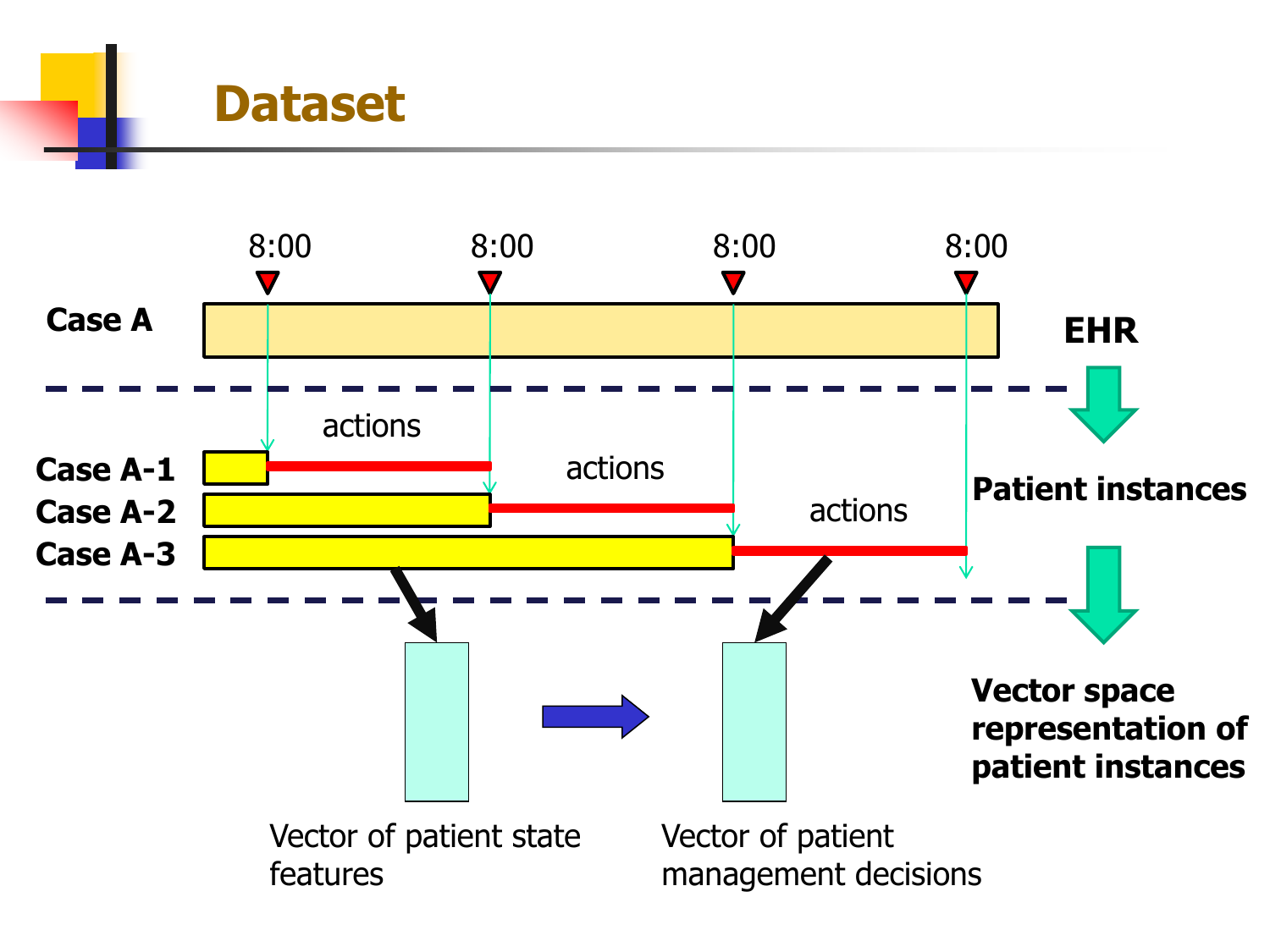}
\caption{Processing of data in the electronic health record}
\label{fig:pcp_segmentation}%
\end{center}
\end{figure}

The EHRs were first divided into two groups:  a training set that included 2646 patients, and a 
test set that included 1840 patients.  We use the time-stamped 
data in each EHR to segment the record at 8:00am every day to obtain multiple 
patient case instances, as illustrated in Figure \ref{fig:pcp_segmentation}:
1) segmentation of an EHR into multiple patient-state/decision instances, and 2) transformation of these instances into a vector space representation of patient states and their follow-up decisions. 
The segmentation led to 51,492 patient-state instances, 
such that 30,828 were used for training the model, and 20,664 were used in the evaluation.  


To represent a patient state we adopt a vector space representation that is suitable for machine learning approaches.  In this representation a patient state is represented by a set of features characterizing the patient at a specific point in time and their corresponding feature values. Features represent and summarize the information in the medical record such as last blood glucose measurement, last glucose trend, or the time the patient is on heparin.  

The features used in our experiment were generated from a time series associated with different clinical variables, such as blood glucose measurement, platelet measurement, and Amiodarone medication.  The clinical variables used in this study were from the following five sources:

\begin{enumerate}
\item 	Laboratory tests (LABs)
\item 	Medications (MEDs)
\item 	Visit features/demographics
\item 	Procedures 
\item 	Heart support devices			
\end{enumerate}

\noindent Altogether, our dataset consists of 9,223 different features.
We now briefly describe the features generated for clinical variables in each of these categories.

\subsubsection{Visit/Demographic Features}
\label{sec:Demographics}

We only have 3 features in this category: age, sex and race. These are static and the same for every time point we generate.

\subsubsection{Lab features}
\label{sec:Labs}

For the categorical labs, for example the ones with POS/NEG results, we use the following features:  last value, second to last value, first value, time since the last order, is the order pending, is the value known, and is the trend known. For the labs with continuous or ordinal values we use a richer set of features, including features as difference between the last two values, the slope of last 2 values, and their percentage drop/increase. We use the same kind of features for the following pairs of lab values: (last value, first value), (last value, nadir value), and (last value, horizon value). Nadir and horizon values are the lab values with the smallest and the greatest value recorded up to that point. Figure~\ref{fig:pcp_features} illustrates a subset of features generated for labs with continuous values. The total number of features generated for such a lab is 28. 
Some of the features here that can be derived from Figure~\ref{fig:pcp_features} are:

\begin{figure}
\begin{center}
\includegraphics[width=0.66\columnwidth, viewport=220 230 414 329]{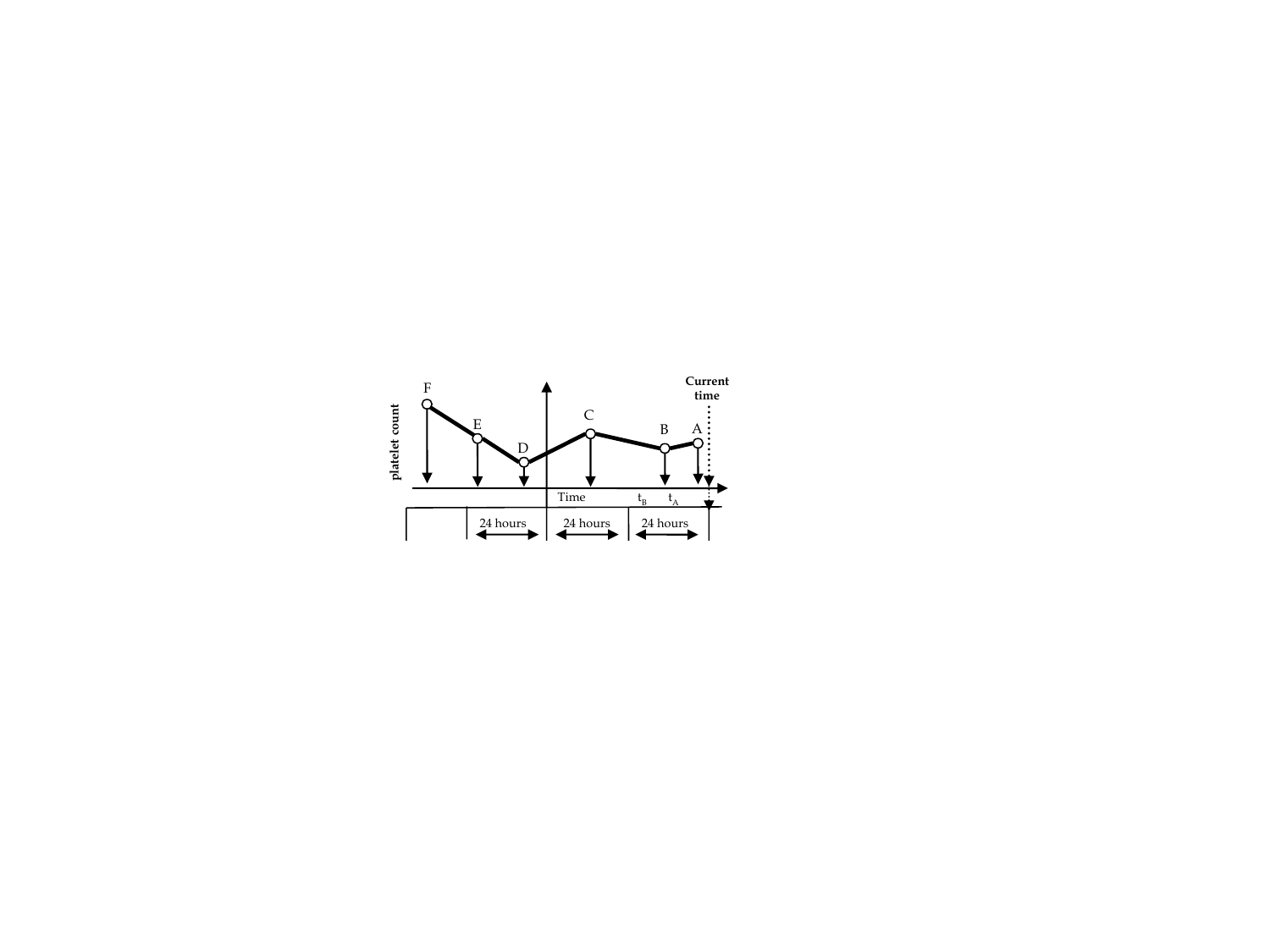}
\caption{Examples of temporal features for continuous lab values}
\label{fig:pcp_features}%
\end{center}
\end{figure}

\begin{itemize}	
\item Last value: A
\item Last value difference = B-A
\item Last percentage change = (B-A)/B
\item Last slope = (B-A) / (tB-tA)
\item Nadir = D
\item Nadir difference = A-D
\item Nadir percentage difference = (A-D)/D
\item Baseline = F
\item Drop from baseline = F-A
\item Percentage drop from baseline = (F-A)/F
\item 24 hour average = (A+B)/2
\end{itemize}

\eat{
 `acc` - this is the ascension number
 `rdate` and `rtime` - it is the date and time the specimen was received in the labs
 `cdate` and `ctime` - Is it date and time when the lab was collected?
 `range` - the normal values for this test
 `flag` - indicates whether the test is out of range, unless it was expected it to be.
 `status` - if this is a pending or final result, some test can give tentative results earlier 
}
\subsubsection{Medication features}
\label{sec:Medications}

For each medication we used four features: 1) an indicator if the patient is currently on the medication, 2) the time since the patient was first put on that medication, 3) the time since the patient was last on that medication, and 4) the time since last change in the order of the medication.

\subsubsection{Procedure features}
\label{sec:Procedures}

The procedure features capture the information about procedures, such as \textit{Heart valve repair}, that were performed either in operating room (OR) or at the bedside. In our data we distinguish 36 different procedures that are performed on cardiac patients. We record four features per procedure: 1) the time since the procedure was done the last time 2) the time since the procedure was done the first time 3) an indicator of whether the procedure was done in the last 24 hours and 4) an indicator of if the procedure was done. 

\subsubsection{Heart support device features}
\label{sec:HeartSupportDeviceFeatures}

Finally, we describe the status of 4 different heart support devices: an extra-corporeal membrane oxygenation (ECMO), a balloon counter pulsation, a pacemaker, and other heart assist devices. For each of them we record a single feature which describes whether the device is currently used to support the patient's heart function. 
  
\subsubsection{Orders/labels}
\label{sec:OrdersLabels}

Labels in this case correspond to patient-management decisions. In addition to feature generation, every patient-state example in the dataset that was generated by the above segmentation process was linked to lab order decisions and medication decisions that were made for the patient within next 24 hours. Patient management decisions considered were:

\begin{itemize}
	\item lab order decisions with (true/false) values reflecting whether the lab was ordered within the next 24 hours or not
	\item medication decisions with (true/false) values reflecting if the patient was given a medication within the next 24 hours or not. 	
\end{itemize}

\noindent A total of 335 lab order decisions and 407 medication decisions were recorded and linked to every patient-state example in the dataset.

%

%% file: kdd_syn_table.tex
\begin{table}[htbp]
  \centering
      \begin{tabular}{rrrr}
    \addlinespace
    \toprule
        &  Dataset \textbf{D1}   & Dataset \textbf{D2}   &  Dataset \textbf{ D3} \\
    \midrule           
    \emph{SVM RBF}  & 58.4\% (7.4) & 49.3\% (2.1) & 51.7\% (1.9) \\
    \emph{1cSVM RBF}   & 51.5\% (0.8) & 47.4\% (0.6) & 59.1\% (0.6) \\
    \emph{SoftHAD}      & \textbf{82.8\%} (1.3) & 63.9\% (2.3) & \textbf{63.5\%} (3.3) \\
    \emph{weighted $k$-NN}        & 64.3\% (2.2) & 45.6\% (1.6) & 62.5\% (1.5) \\
    \emph{$\lambda$-RWCAD} & 64.7\% (0.8) & \textbf{68.9\%} (1.1) & \textbf{67.4\%} (1.9) \\                   
    \bottomrule
    \end{tabular}%
		\caption{Mean anomaly AUROC and variance on three synthetic datasets} 		
		\label{tab:syn}%
\end{table}%

%% file: chap_future.tex
\chapter{Discussion}
\label{chap:FutureWork}

We have presented several algorithms for semi-supervised learning and conditional anomaly detection.
The algorithms are based on label propagation on a similarity graph built from 
examples in a dataset. Label propagation on graphs is polynomial, but still a computationally expensive
method. Therefore, we focused on the approximation approaches for the cases with large datasets
and when the data arrive in a stream. The main contributions of this dissertation to the field of machine learning 
are summarized below.

\begin{itemize}
	\item We presented one of the first works on \emph{online semi-supervised learning}. Despite a very natural scenario, this 
	setting has not been extensively studied in the past. To our best knowledge this is the first work on online semi-supervised 
	learning that comes with theoretical guarantees. Moreover, we built a real-time system that works on noisy real-world data.
	
	\item We introduced a label propagation method for conditional anomaly detection and applied it 
	to compute the anomaly score for the class labels. We presented a general framework where the 
	discriminative models need to regularized to decrease the effect caused by isolated and fringe points in the data. 		
		
	\item We presented a new semi-supervised learning algorithm based on max-margin graph cuts, which in 
		some classes of learning functions can perform better than the manifold regularization approach.
				
	\item We introduced a joint learning of the backbone graph and the label propagation and 
		show its relationship to the elastic nets. This is one 
	  of the first works, besides~\cite{zhu2005harmonic}, that relates propagated labels and 
		cluster centers. 
	
\end{itemize}

\bigskip

\noindent We also made contributions to the area of health informatics:

\begin{itemize}
	\item The existing error detection systems deployed in hospitals are built entirely by human experts. Although these systems are time-consuming and costly to build, they typically do not cover all the specialties of medical care. The statistical anomaly detection approach for error detection, proposed and studied in this work, relies solely on data that are extracted from existing patient record repositories and little or no expert input is required.  This reduces the cost of the approach and its deployment.
	Our most important finding is that the alert systems 
	can be learned from the past patient data instead of
	creating rule-based alert systems that require expensive human time to tune 
	and are currently used in hospitals.				
	
	\item We proposed a non-parametric method that can discover anomalies in clinical actions.
	The common use cases are: 1) discovery of an omitted order of a lab test 
	2) commission of a drug that has interactions with previously taken drugs
	3) controlling overspending: a detection of expensive actions that were not necessary, when 
	the resources could have been used better.		
	
	\item We conducted an extensive study with the human evaluation of the alerts 
	on the real patient records, showing  that the higher anomaly scores corresponded to the higher severity of the alerts. 
	
\end{itemize}

\noindent There are, however, some assumptions and limitation of our methods:

\begin{itemize}
	\item We assume that the data can be modeled with pair-wise similarities between the nodes and that such a model is meaningful.
	\item The similarity function between the graph nodes needs to be given or learned. 
	\item Our methods are expected to perform well when the manifold assumption holds.
	\item In the approximation settings, when we create a summary graph (both online and in a large scale setting),
	we assume that we can model the data well with a reduced number of nodes. 	
\end{itemize}

\noindent  Moreover, electronic health records (EHRs) are a necessary requirement for the successful deployment of the 
conditional anomaly methods described here. With an increasing number of medical groups adopting EHR systems \cite{gans2005medical}, more people will benefit from reduced medical errors. We imagine that the inclusion of our method into the existing EHR systems will 
require no extra time from physicians. Our anomaly detection framework serves more as background monitoring system
that raises alerts only when the confidence of an anomaly is high. Since our conditional 
anomaly methods produce a soft score reflecting the confidence, the threshold for alerting could be adjusted.
Nevertheless, the statistical anomalies that our methods produce may not need to always
correspond to useful alerts. For instance, an omission of a routine lab test or administration of a vitamin may 
be a significant statistical anomaly, but might not be worthy of physician's attention.

\noindent We now outline some related open questions and research opportunities. 

\begin{itemize}
	\item \emph{Structured Anomaly Detection}  

	In this dissertation we applied our conditional anomaly detection method to discover unusual clinical actions.
	Although, we did it separately for each action, these actions are not independent. 
	For example, a clinician usually prescribes a set of drugs such that:

	\begin{itemize}
		\item drugs with the same effect do not tend to be given at the same time.
		\item drugs with the opposite effect do not tend to be given at the same time.
		\item drugs with negative interactions do not tend to be given at the same time.
\end{itemize}

Therefore, we can form groups of drugs from which at most one is administered at the same time.
This additional information could be given a priori or learned from the data. 

	\item \emph{Graph Parametrization}: Despite the research in this area, the graph construction is still 
	not well understood. There are some rules of thumb, such as $\log (n)$ for the number of neighbors, 
	but a problem-specific calibration is usually needed. In particular, the clinical data
	could benefit from the similarity measures (kernels) that would measure the similarity of the 
	conditions from the electronic health data. 
	
	\item \emph{Multi-manifold Learning} 
	In our multi-manifold learning approach, we decomposed the graph and kept updating each of the components 
	independently in parallel. There can be some benefit in accuracy if we allow the components to exchange some  information.
	
	\item \emph{Concept Drift} In this dissertation we were concerned with adapting to the distribution in a  short-term period.
	The problem of concept drift is concerned with long-term changes, such as when a face of a person changes as he or she grows
	older or when medical practices change. One possible extension of our methods can be the online graph-based
	learning with forgetting the history. For example, we can delete the graph nodes which were
	added a long time ago and do not change the current prediction much if they are removed.
	
\end{itemize}

\noindent We expect that future research will address these questions.  
We hope that online semi-supervised learning will become more studied and used to address machine learning problems. 
We believe that our conditional anomaly detection methods will prevent some of the adverse outcomes, especially in medicine.

\bigskip

\noindent This work was supported by the NIH grants R21 LM009102-01A1, 
R01 1R01LM010019-01A1 and by the Mellon Foundation.